\global\long\def\EE{\mathbb{E}}%
\global\long\def\inner#1#2{\left\langle #1,#2\right\rangle }%
\global\long\def\la{\langle}%
\global\long\def\ra{\rangle}%
\global\long\def\diag{\mathrm{diag}}%
\global\long\def\EE{\mathbb{E}}%
\global\long\def\inner#1#2{\left\langle #1,#2\right\rangle }%
\global\long\def\la{\langle}%
\global\long\def\ra{\rangle}%
\global\long\def\given{\vert}%
\definecolor{mydarkred}{rgb}{0.6,0,0}
\definecolor{mydarkgreen}{rgb}{0,0.6,0}
\newcommand{\citetMastouri}{Mastouri and Zhu et al. \citep{Mastouri2021Proximal}}
\tikzset{
    -Latex,auto,node distance =1 cm and 1 cm,semithick,
    state/.style ={circle, draw, minimum width = 0.7 cm},
    point/.style = {circle, draw, inner sep=0.04cm,fill,node contents={}},
    bidirected/.style={Latex-Latex,dashed},
    el/.style = {inner sep=2pt, align=left, sloped}
}
\title{Deep Proxy Causal Learning and its Application to Confounded Bandit Policy Evaluation}
\author{
Liyuan Xu \\
Gatsby Unit\\
\texttt{liyuan.jo.19@ucl.ac.uk} \\
\and
Heishiro Kanagawa\\
Gatsby Unit\\
\texttt{heishiro.kanagawa@gmail.com} \\
\and
Arthur Gretton\\
Gatsby Unit\\
\texttt{arthur.gretton@gmail.com}\\
}
\begin{document}

\maketitle

\begin{abstract}

Proxy causal learning (PCL) is a method for estimating the causal effect of treatments on outcomes in the presence of unobserved confounding, using {\em proxies} (structured side information) for the confounder.
This is achieved via two-stage regression: in the first stage, we model relations among the treatment and proxies; in the second stage, we use this model to learn the effect of treatment on the outcome, given the context provided by the proxies. PCL guarantees recovery of the true causal effect, subject to identifiability conditions.
We propose a novel method for PCL, the {\it deep feature proxy variable method (DFPV)}, to address the case where the proxies, treatments, and outcomes are high-dimensional and have nonlinear complex relationships, as represented by deep neural network features.
We show that DFPV outperforms recent state-of-the-art PCL methods on challenging synthetic benchmarks, including settings involving high dimensional image data. Furthermore, we show that PCL can be applied to off-policy evaluation for the confounded bandit problem, in which DFPV also exhibits competitive performance.

\end{abstract}

\section{Introduction}
In causal learning, we aim to estimate the effect of our actions on the world. For example, we may be interested in measuring the impact of flight ticket prices on sales \citep{Wright1928, Blundell2012}, or the effect of grade retention on cognitive development \citep{Fruehwirth2016How}. 
We refer to our action as a {\em treatment}, which results in a particular {\em outcome}. 
It is often impossible to determine the effect of treatment on outcome from observational data alone, 
since the observed joint distribution of treatment and outcome can depend on a common {\em confounder} which influences both, and which might not be observed.  In our example on sales of plane tickets given a price, the two might even be {\it positively correlated} in some circumstances, such as the simultaneous increase in sales and prices during the holiday season. This does not mean that raising the price {\it causes} sales to increase. In this context, people's desire to fly is a confounder, since it affects both the number of ticket sales and the prices people are willing to accept. Thus, we need to correct the bias caused by the confounder.

One common assumption to cope with confounding bias is to assume no unobserved confounders exist \citep{Imbens2004Nonparametric}, or more generally, the {\it ignorable treatment assignment} assumption \citep{Rosenbaum1983Propensity}, which states that the treatment assignment is independent of the potential outcomes caused by the treatment, given the background data available. Although a number of methods are proposed based on this assumption \citep{Hill2011Bayesian,Johansson2016Learning,Yao2018representation}, it can be too restrictive, since it is often difficult to determine how the confounder affects treatment assignments and outcomes.

A less restrictive assumption is that we have access to {\it proxy variables}, which contain  relevant side information on the confounder. In the flight tickets example, we can use the number of views of the ticket reservation page as a proxy variable, which reflects peoples' desire for flights. Note that if we can completely recover the confounder from proxy variables, the ignorable treatment assignment assumption can be satisfied. Motivated by this, \citet{Lee2019Estimation} and \citet{Louizos2017Causal} aim to recover the distribution of confounders from proxy variables using  modern machine learning techniques such as generative adversarial networks \citep{Goodfellow2014GAN} or variational auto-encoders (VAE) \cite{Kingma2014VAE}. Although these methods exhibit powerful empirical performance, there is little theory that guarantees the correct recovery of the causal effects.

\citet{Kuroki2014Measurement} first considered the necessary conditions on proxy variables to provably recover the underlying causal effect via direct recovery of the hidden confounder. This work was in turn generalized by \citet{Miao2018Identifying}. In their work, it is shown that two types of proxy variables are sufficient to recover the true causal effects {\em without} explicitly recovering the confounder. One is an \textit{outcome-inducing proxy}, which {\it correlates with} confounders and {\it causes} the outcome, and the other is a \textit{treatment-inducing proxy} which {\it is caused by} confounders and {\it correlates with} the treatment. In the flight ticket example, we can use the number of views of the ticket reservation page as the  outcome-inducing proxy, and the cost of fuel as the treatment-inducing proxy. Given these proxy variables, \citet{Miao2018Identifying} show that the true causal effect can be recovered by solving a Fredholm integral equation, which is referred to as the proxy causal learning (PCL) problem. The PCL problem is known to have an interesting relation to the causal inference with \emph{multiple treatments}, which uses a subset of treatments as proxy variables in PCL \citep{pmlr-v139-wang21c,2011.04504}. 

Although the PCL problem has a solid theoretical grounding, the question of how to estimate the causal effect remains a practical challenge, in particular when we consider nonlinear causal relationships or high-dimensional treatments. In \citet{Miao2018Identifying}, the treatment and outcome are assumed to be categorical variables. In a follow-up study, \citet{Miao2018Confounding} show that we can learn a linear causal effect by a method of moments. \citet{Deaner2018Proxy} models the causal effect as a linear combination of nonlinear basis functions, which is learned by solving two stage regression. These two methods are extended by \citetMastouri, who learn the causal effect in a predefined reproducing kernel Hilbert space (RKHS). We provide an overview of the PCL problem and the two-stage regression in Section~\ref{sec:preliminary}. Although these methods enjoy desirable theoretical properties, the flexibility of the model is limited, since all existing work uses pre-specified features.

In this paper, we propose a novel {\it Deep Feature Proxy Variable (DFPV)} method, which is the first work to apply neural networks to the PCL problem. The technique we employ builds on earlier work in {\it instrumental variable (IV) regression}, which is a related causal inference setting to PCL. A range of deep learning methods has recently been introduced for IV regression \citep{Hartford2017DeepIV,Xu2021Learning,Bennett2019}. We propose to adopt the Deep Feature Instrumental Variable method \cite{Xu2021Learning}, which learns deep adaptive features within a two-stage regression framework. Details of DFPV are given in Section~\ref{sec:algorithm}. In Section~\ref{sec:experiment}, we empirically show that DFPV outperforms other PCL methods in several examples. We further apply PCL methods to the off-policy evaluation problem in a confounded
bandit setting, which aims to estimate the average reward of a new policy given data with confounding bias. We discuss the setting in Section~\ref{sec:algorithm}, and show the superiority of DFPV in experiments in  Section~\ref{sec:experiment}.



\section{Preliminaries} \label{sec:preliminary}
In this section, we introduce the proxy causal learning (PCL) problem and describe the existing two-stage regression methods to solve PCL.


\paragraph{Notation.} Throughout the paper, a capital letter (e.g. $A$) denotes a random variable, and we denote the set where a random variable takes values by the corresponding calligraphic letter (e.g. $\mathcal{A}$). 
The symbol $\prob{\cdot}$ denotes the probability distribution of a random variable (e.g. $\prob{A}$). We use a lowercase letter to denote a realization of a random variable (e.g. $a$). We denote the expectation over a random variable as $\mathbb{E}[\cdot]$ and $\|f\|_{\prob{\cdot}}$ as the $L^2$-norm of a function $f$ with respect to $\prob{\cdot}$; i.e. $\|f\|_{\prob{A}} = \sqrt{\expect[A]{f^2(A)}}$.

\subsection{Problem Setting for Proxy Causal Learning}
\begin{wrapfigure}[]{r}{0pt}
    \begin{tikzpicture}
        \node[state, fill=yellow] (x) at (0,0) {$A$};
    
        \node[state, fill=yellow] (y) [right =of x] {$Y$};
        \node[state, fill=yellow] (z) [above left =of x, xshift = 1.0cm, yshift=-0.3cm] {$Z$};
        \node[state, fill=yellow] (w) [above right =of y, xshift = -0.8cm, yshift=-0.3cm] {$W$};
        \node[state] (eps) [above right =of x, xshift = -0.5cm, yshift=0.3cm] {$U$};
        \path (x) edge (y);
        \path[bidirected] (z) edge (x);
        \path (eps) edge (y);
        \path (eps) edge (x);
        \path (eps) edge (z);
        \path[bidirected] (eps) edge (w);
        \path (w) edge (y);
    \end{tikzpicture} 
    \caption{Causal Graph.}
    \label{fig:causal-graph}
\end{wrapfigure}
We begin with a description of the PCL setting. We observe a treatment $A\in \mathcal{A}$, where  $\mathcal{A} \subset \mathbb{R}^{d_A}$, and the corresponding outcome $Y\in \mathcal{Y}$, where $\mathcal{Y} \subset \mathbb{R}$.
We assume that there exists an unobserved confounder $U \in \mathcal{U}$ that affects both $A$ and $Y$. The goal of PCL is to estimate the structural function $f_\mathrm{struct}(X)$ defined as 
\begin{align*}
    f_\mathrm{struct}(a)  = \expect[U]{\expect[Y]{Y|A=a, U}},
\end{align*}
which we assume to be continuous. This function is also known as the \textit{Average Treatment Effect (ATE)}. 
The challenge of estimating $f_\mathrm{struct}$ is that the confounder $U$ is not observable ---  we cannot estimate the structural function from observations $A$ and $Y$ alone. To deal with this, we assume access to a treatment-inducing proxy variable $Z$, and an outcome-inducing proxy variable $W$, which satisfy the following \textit{structural assumption} and \textit{completeness assumption}.
\begin{assum}[Structural Assumption \citep{Deaner2018Proxy,Mastouri2021Proximal}] \label{assum:stuctural}
We assume $Y \indepe Z | A, U$, and $W \indepe (A,Z) | U$.
\end{assum}
\begin{assum}[Completeness Assumption on Confounder \citep{Deaner2018Proxy,Mastouri2021Proximal}] \label{assum:completeness-confounder}
Let $l: \mathcal{U} \to \mathbb{R}$ be any square integrable function $\|l\|_{\prob{U}} < \infty$.
The following conditions hold for any $a\in\mathcal{A}$:
\begin{align*}
    &\expect{l(U) \mid A=a, W=w}=0\quad \forall w \in \mathcal{W} \quad\Leftrightarrow \quad l(u) = 0 \quad \prob{U}\text{-}\mathrm{a.e.}\\
    &\expect{l(U) \mid A=a, Z=z}=0\quad \forall z \in \mathcal{Z} \quad\Leftrightarrow \quad l(u) = 0 \quad \prob{U}\text{-}\mathrm{a.e.}
\end{align*}
\end{assum}
Figure~\ref{fig:causal-graph} shows the causal graph describing these relationships. 
In our setting, we assume that there is no observable confounder, though this may be easily included \citep{Mastouri2021Proximal,Tchetgen2020} as presented in  Appendix~\ref{sec:observable-confounder}.
Here, the bidirectional arrows mean that we allow both directions or even a common ancestor variable. 
Given these assumptions, it is shown that the structural function can be expressed using a solution of an integral equation.

\begin{prop}[\citet{Miao2018Identifying}] \label{prop:ident}
Let Assumptions~\ref{assum:stuctural}, \ref{assum:completeness-confounder} and Assumptions~\ref{assu:cond-exp-compactness}, \ref{assu:cond-exp-L2}, \ref{assu:Y-square-integrable} in Appendix~\ref{sec:identifiability} hold. Then there exists at least one solution to the functional equation
\begin{align}
    \expect{Y|A=a, Z=z} = \int h(a, W) \rho_W(w|A=a, Z=z) \intd w, \label{eq:h-def}
\end{align}
which holds for any $(a,z) \in \mathcal{A} \times \mathcal{Z}$. 
Here, we denote $\rho_W(w|A=a, Z=z)$ as the density function of the conditional probability $\prob{W|A=a, Z=z}$. Let $h^*$ be a solution of \eqref{eq:h-def}. The structural function $f_{\mathrm{struct}}$ is given as 
\begin{align}
    f_{\mathrm{struct}}(a) = \expect[W]{h^*(a,W)} \label{eq:struct-def}.
\end{align}
\end{prop}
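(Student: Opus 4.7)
The plan is a two-stage argument: first produce a bridge function $h$ that satisfies the \emph{latent} version of (\ref{eq:h-def}) with $U$ in place of $Z$, then transfer that equation to $Z$ using the conditional independences in Assumption~\ref{assum:stuctural}, and finally integrate against the marginal of $W$ to recover $f_{\mathrm{struct}}$.

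\emph{Step 1 (latent bridge via Picard).} First I would seek a function $h$ such that
\[
\mathbb{E}[Y\mid A=a,U=u] \;=\; \int h(a,w)\,\rho_W(w\mid U=u)\,dw
\]
holds almost surely in $(a,u)$; the kernel on the right collapses to $\rho_W(\cdot\mid U=u)$ because $W\perp\!\!\!\perp(A,Z)\mid U$. Regarding $h(a,\cdot)$ as an unknown in a Fredholm integral equation of the first kind, the completeness Assumption~\ref{assum:completeness-confounder} gives injectivity of the relevant conditional-expectation operator, while the assumptions in the appendix (\ref{assu:cond-exp-compactness}, \ref{assu:cond-exp-L2}, \ref{assu:Y-square-integrable}) provide compactness of that operator together with an $L^2$ range condition on $\mathbb{E}[Y\mid A,U]$. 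Picard's theorem then delivers a solution $h$.

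\emph{Step 2 (transfer to $Z$).} For any such $h$, condition on $(A=a,Z=z)$ and apply the tower property together with the two independences in Assumption~\ref{assum:stuctural}:
\begin{align*}
\mathbb{E}[Y\mid A=a,Z=z]
&= \mathbb{E}\bigl[\mathbb{E}[Y\mid A,U]\bigm| A=a,Z=z\bigr]\\
&= \mathbb{E}\Bigl[\int h(a,w)\,\rho_W(w\mid U)\,dw \Bigm| A=a,Z=z\Bigr]\\
&= \mathbb{E}\bigl[\mathbb{E}[h(a,W)\mid U]\bigm| A=a,Z=z\bigr]\\
&= \mathbb{E}[h(a,W)\mid A=a,Z=z]\\
&= \int h(a,w)\,\rho_W(w\mid A=a,Z=z)\,dw,
\end{align*}
where the first equality uses $Y\perp\!\!\!\perp Z\mid A,U$ and the penultimate one uses $W\perp\!\!\!\perp(A,Z)\mid U$. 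Hence $h$ satisfies (\ref{eq:h-def}).

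\emph{Step 3 (structural formula) and main obstacle.} For any solution $h^*$ of (\ref{eq:h-def}), a final application of the tower rule together with $W\perp\!\!\!\perp A\mid U$ yields
\[
\mathbb{E}_W[h^*(a,W)] \;=\; \mathbb{E}_U\bigl[\mathbb{E}[h^*(a,W)\mid U]\bigr] \;=\; \mathbb{E}_U\bigl[\mathbb{E}[Y\mid A=a,U]\bigr] \;=\; f_{\mathrm{struct}}(a),
\]
establishing (\ref{eq:struct-def}). The hard part of this argument is Step 1: Fredholm equations of the first kind are ill-posed in general, and a solution need not exist if the right-hand side falls outside the range of the integral operator. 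The compactness and $L^2$ hypotheses in the appendix are exactly the Picard-type conditions that force $\mathbb{E}[Y\mid A,U]$ into this range; once existence is granted, Steps 2 and 3 are bookkeeping driven purely by the d-separation structure of Figure~\ref{fig:causal-graph}.
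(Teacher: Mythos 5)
There is a genuine gap in Step 1, and it is not a technicality: your existence argument is run on the wrong operator. You apply Picard's theorem to the \emph{latent} equation $\mathbb{E}[Y\mid A=a,U=\cdot]=\mathbb{E}[h(a,W)\mid U=\cdot]$, whose underlying operator maps $L^2$ functions of $W$ to $L^2$ functions of $U$. But Assumptions~\ref{assu:cond-exp-compactness} and \ref{assu:cond-exp-L2} supply compactness, a singular system, and the Picard summability condition only for the \emph{observable} operator $E_a\colon L^2(P_{W\mid A=a})\to L^2(P_{Z\mid A=a})$, $E_af=\mathbb{E}[f(W)\mid A=a,Z=\cdot]$, and only for the right-hand side $\mathbb{E}[Y\mid A=a,Z=\cdot]$. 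Nothing in the hypotheses gives compactness of the latent operator or the corresponding source condition on $\mathbb{E}[Y\mid A=a,U=\cdot]$; moreover, Assumption~\ref{assum:completeness-confounder} concerns injectivity of conditional expectations of functions \emph{of $U$} given $(A,W)$ or $(A,Z)$ --- it says nothing about the operator $f\mapsto\mathbb{E}[f(W)\mid U]$, and injectivity is in any case the wrong ingredient for existence (Picard needs the range condition $f\in N(E^*)^\perp$ plus summability). The paper instead applies Proposition~\ref{prop:picard} directly to $E_a$ and uses Assumptions~\ref{assum:stuctural} and \ref{assum:completeness-confounder} only to verify the range condition: it identifies the adjoint $E_a^*=F_a$ and shows (Lemma~\ref{lem:null-space-assumption}) that any $f^*\in N(F_a)$ satisfies $\mathbb{E}[f^*(Z)\mid A=a,U]=0$, hence is orthogonal to $\mathbb{E}[Y\mid A=a,Z=\cdot]$. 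With that route your Step 2 becomes unnecessary (the constructed solution already solves \eqref{eq:h-def}); as written, Step 2 is correct but rests entirely on the unsupported Step 1.

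A second, smaller gap is in Step 3. The proposition asserts \eqref{eq:struct-def} for \emph{any} solution $h^*$ of the observed equation \eqref{eq:h-def}, and your chain uses $\mathbb{E}[h^*(a,W)\mid U]=\mathbb{E}[Y\mid A=a,U]$ as if it were bookkeeping. For the particular $h$ you constructed this holds by fiat, but for an arbitrary solution of \eqref{eq:h-def} it is precisely the identification step (Theorem~\ref{thm:proxy-identifiablity} in the paper): one writes both sides of \eqref{eq:h-def} as conditional expectations over $U$ given $(A=a,Z)$ of $\mathbb{E}[h^*(a,W)\mid U]$ and $\mathbb{E}[Y\mid A=a,U]$ respectively, and then invokes the second completeness condition in Assumption~\ref{assum:completeness-confounder} to peel off the outer conditional expectation. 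Without that appeal to completeness, the structural formula is not justified for solutions other than the one you constructed.
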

For completeness, we present a proof in Appendix~\ref{sec:identifiability} (Lemma~\ref{lem:exisistence} and Corollary~\ref{cor:ate}), which is due to \citet{Miao2018Confounding} and \citet{Deaner2018Proxy}. Following \citet{Miao2018Confounding}, we call $h^*$ the \textit{bridge function}. From Proposition~\ref{prop:ident}, we can see that the estimation of the structural function reduces to the estimation of the bridge function, since once we obtain the bridge function, the structural function directly follows from \eqref{eq:struct-def}.

\subsection{Two-stage Regression Scheme for Proximal Causal Learning with Fixed Features}
To obtain the bridge function, we need to solve the functional equation \eqref{eq:h-def}. However, directly solving \eqref{eq:h-def} in a rich function space can be ill-posed (see discussion in \citet{Nashed1974Generalized}). To address this, recent works \citep{Deaner2018Proxy, Mastouri2021Proximal} minimize the following regularized loss $\mathcal{L}_{\mathrm{PV}}$ to obtain an estimate of the bridge function $\hat{h}$:
\begin{align}
    \hat{h} = \argmin_{h\in \mathcal{H}_h} \mathcal{L}_{\mathrm{PV}}(h), \quad \mathcal{L}_{\mathrm{PV}}(h) = \expect[Y,A,Z]{(Y - \expect[W|A,Z]{h(A, W)})^2} + \Omega(h), \label{eq:total-loss}
\end{align}
where $\mathcal{H}_h$ is an arbitrary space of continuous functions and $\Omega(h)$ is a regularizer on $h$. Note that this loss can be interpreted as the deviation of the r.h.s and the l.h.s in \eqref{eq:h-def} measured in $L^2$-norm with respect to the distribution $\prob{Y,A,Z}$.

\citet{Deaner2018Proxy} and \citetMastouri solve the minimization problem \eqref{eq:total-loss} by successively solving two-stage regression problems. They model the bridge function as
\begin{equation}
h(a,w) = \vec{u}^\top(\vec{\psi}_A(a) \otimes \vec{\psi}_W(w)) \label{eq:def_bridgefn}
\end{equation}
where $\vec{u}$
is a learnable weight vector,
$\vec{\psi}_A(a), \vec{\psi}_W(w)$ are vectors of fixed basis functions, and $\otimes$ is a Kronecker product, defined as $\vec{a} \otimes \vec{b} = \mathrm{vec}(\vec{a}\vec{b}^\top)$ for any finite dimensional vectors $\vec{a}, \vec{b}$.\footnote{Throughout this paper, we assume the number of basis functions to be finite.  \citetMastouri~consider an infinite number of basis function in a reproducing kernel Hilbert space, and  use the definitions of the inner and Kronecker products for that space.}
An estimate $\hat{\vec{u}}$ is obtained by solving the successive regression problems. In Stage 1, we estimate the conditional expectation $\expect[W|A=a,Z=z]{\vec{\psi}_W(W)}$ as a function of $a,z$. 
In Stage 2, we substitute the model  
\eqref{eq:def_bridgefn} into the inner conditional expectation in $\mathcal{L}_{\mathrm{PV}}$,
\[\expect[W|A=a,Z=z]{h(a,W)} = \vec{u}^\top(\vec{\psi}_A(a) \otimes \expect[W|A=a,Z=z]{\vec{\psi}_W(W)}),\]
and then minimize $\mathcal{L}_{\mathrm{PV}}$ with respect to $\vec{u}$ using the estimate of $\expect[W|A=a,Z=z]{\vec{\psi}_W(W))}$ from Stage~1. 

The above idea can be implemented as follows. We model the conditional expectation as 
\[\expect[W|A=a,Z=z]{\vec{\psi}(W)} = \vec{V} (\vec{\phi}_A(a) \otimes \vec{\phi}_Z(z)),\]
where $\vec{\phi}_A(a), \vec{\phi}_Z(z)$ are another set of basis functions, and $\vec{V}$ is a \emph{matrix} to be learned. Note that we can use different basis functions for $\vec{\phi}_A(a)$ and $\vec{\psi}_A(a)$. In Stage 1, the matrix $\vec{V}$ is learned by minimizing the following loss,
\begin{align}
    \mathcal{L}_1(\vec{V}) = \expect[W,A,Z]{\|\vec{\psi}_W(W) - \vec{V}(\vec{\phi}_A(A) \otimes\vec{\phi}_Z(Z)) \|^2} + \lambda_1 \|\vec{V}\|^2, \label{eq:stage1-loss}
\end{align}
where $\lambda_1>0$ is a regularization parameter. This is a linear ridge regression problem with multiple targets, which can be solved analytically. In Stage 2, given the minimizer $\hat{\vec{V}} = \argmin_{\vec{V}}  \mathcal{L}_1(\vec{V})$, 
we can obtain $\hat{\vec{u}}$ by minimizing the loss 
\begin{align}
    \mathcal{L}_2(\vec{u}) = \expect[Y,A,Z]{\|Y - \vec{u}^\top (\vec{\psi}_A(A) \otimes (\hat{\vec{V}} (\vec{\phi}_A(A) \otimes\vec{\phi}_Z(Z)))) \|^2} + \lambda_2 \|\vec{u}\|^2, \label{eq:stage2-loss}
\end{align}
where $\lambda_2>0$ is another regularization parameter. Stage 2 corresponds to another linear ridge regression from input $\vec{\psi}_A(A) \otimes (\hat{\vec{V}} (\vec{\phi}_A(A) \otimes\vec{\phi}_Z(Z)))$ to target $Y$, and also enjoys a closed-form solution. Given the learned weights $\hat{\vec{u}} = \argmin_{\vec{u}}  \mathcal{L}_2(\vec{u})$, the estimated structural function is $\hat{f}_{\mathrm{struct}}(a) = \hat{\vec{u}}^\top (\vec{\psi}_A(a) \otimes \expect[W]{\vec{\psi}_W(W)})$.

When fixed feature dictionaries are used, this two-stage regression benefits from strong theoretical guarantees \citep{Deaner2018Proxy,Mastouri2021Proximal}. The use of pre-specified feature maps, limits the scope and flexibility of the method, however, especially if the treatment and proxies are high dimensional (e.g. images or text), and the causal relations are nonlinear. To overcome this, we propose to use adaptive features, expressed by neural nets, as described in the next section.

\section{Deep Feature Proxy Causal Learning}\label{sec:algorithm}

In this section, we develop the DFPV algorithm, which learns adaptive features modeled by neural nets using a technique similar to \citet{Xu2021Learning}. As in \citetMastouri, 
we assume that we do not necessarily have access to observations from the joint distribution of $(A,Y,Z,W)$. Instead, we are given $m$ observations of $(A,Z,W)$ for Stage 1 and $n$ observations of $(A,Z,Y)$ for Stage 2. We denote the stage 1 observations by $(a_i, z_i, w_i)$ and the stage 2 observations by $(\tilde a_i, \tilde z_i, \tilde y_i)$. If observations of $(A,Y,Z,W)$ are given for both stages, we can evaluate the out-of-sample loss of Stage 1 using Stage 2 data and vice versa, and these losses can be used for hyper-parameter tuning of $\lambda_1,\lambda_2$ (Appendix~\ref{sec:hyper-param}). We first introduce two-stage regression with adaptive feature maps and then describe the detailed learning procedure of DFPV.

\subsection{Two-stage regression with adaptive features}

In DFPV, we consider the following models of the bridge function $h(a,w)$ and conditional feature mean $\expect[W|A,Z]{\vec{\psi}_{\theta_{W}}(W)}$:
\begin{align*}
h(a,w) = \vec{u}^\top (\vec{\psi}_{\theta_{A(2)}}(a) \otimes \vec{\psi}_{\theta_{W}}(w)), 
\quad
\expect[W|A=a,Z=z]{\vec{\psi}_{\theta_{W}}(W)} = \vec{V}(\vec{\phi}_{\theta_{A(1)}}(a) \otimes \vec{\phi}_{\theta_Z}(z)), 
\end{align*}
where $\vec{u}$ and $\vec{V}$ are parameters, and $\vec{\phi}_{\theta_{A(1)}}, \vec{\phi}_{\theta_Z},\vec{\psi}_{\theta_{A(2)}}, \vec{\psi}_{\theta_{W}}$ are neural nets parametrized by $\theta_{A(1)}, \theta_Z, \theta_{A(2)}, \theta_{W}$, respectively. Again, we may use different neural nets in the treatment features $\vec{\phi}_{\theta_{A(1)}}$ and $\vec{\psi}_{\theta_{A(2)}}$.

As in the existing work \citep{Deaner2018Proxy,Mastouri2021Proximal},
we learn $\expect[W|a,z]{\vec{\psi}_{\theta_{W}}(w)}$ in Stage 1 and $h(a,w)$ in Stage 2, but in addition to the weights $\vec{u}$ and $\vec{V}$, we also learn the parameters of the feature maps. 
Specifically, in Stage 1, we learn $\vec{V}$ and parameters $\theta_{A(1)}, \theta_Z$ by minimizing the following empirical loss:
\begin{align*}
     \hat{\mathcal{L}}_1(\vec{V}, \theta_{A(1)}, \theta_Z) = \frac1m \sum_{i=1}^m \left\|\vec{\psi}_{\theta_{W}}(w_i) - \vec{V}\left(\vec{\phi}_{\theta_{A(1)}}(a_i) \otimes\vec{\phi}_{\theta_Z}(z_i)\right) \right\|^2 + \lambda_1 \|\vec{V}\|^2.
\end{align*}
Note that $\hat{\mathcal{L}}_1$ is an empirical estimate of $\mathcal{L}_1$ in \eqref{eq:stage1-loss} with adaptive feature maps. 
Although $\hat{\mathcal{L}}_1$ depends on $\theta_{W}$, at this stage, we do not optimize $\theta_{W}$ with $\hat{\mathcal{L}}_1$ as ${\vec{\psi}_{\theta_{W}}(w)}$ is the  ``target variable'' in Stage~1. Given the minimizers $(\hat{\vec{V}}, \hat{\theta}_{A(1)}, \hat{\theta}_{Z}) = \argmin \hat{\mathcal{L}}_1$, we learn weights $\vec{u}$ and parameters $\theta_W, \theta_{A(2)}$ by minimizing the empirical stage 2 loss,
\begin{align*}
     \hat{\mathcal{L}}_2(\vec{u}, \theta_W, \theta_{A(2)}) = \frac1n \sum_{i=1}^n \left(\tilde y_i - \vec{u}^\top \left(\vec{\psi}_{\theta_{A(2)}}(\tilde a_i) \otimes \hat{\vec{V}}  \left(\vec{\phi}_{\hat{\theta}_{A(1)}}(\tilde a_i) \otimes\vec{\phi}_{\hat{\theta}_Z}(\tilde z_i)\right)\right)\right)^2 + \lambda_2 \|\vec{u}\|^2.
\end{align*}
Again, the loss $\hat{\mathcal{L}}_2$ is an empirical estimate of $\mathcal{L}_2$ in \eqref{eq:stage2-loss} with adaptive feature maps. 
Although the expression of $\hat{\mathcal{L}}_2$ does not explicitly contain $\theta_W$, 
it implicitly depends on $\theta_W$ as $(\hat{\vec{V}}, \hat{\theta}_{A(1)}, \hat{\theta}_{Z})$ is the solution of a minimization problem involving $\theta_W$. This implicit dependency makes it challenging to update $\theta_W$, as we cannot directly obtain its gradient. One possible solution is to use the implicit gradient method \citep{Lorraine2020Optimizing}, but this approach might be too computationally expensive. Instead, we use the method proposed in \citet{Xu2021Learning}, in which we ignore the dependency of $\theta_W$ on parameters $\hat{\theta}_{A(1)}, \hat{\theta}_{Z}$, and compute the gradient  via the closed-form solution of $\hat{V}$.

\subsection{Deep Feature Proxy Variable Method}

We now describe the learning procedure for DFPV. 
First, we fix parameters in the adaptive feature maps $(\theta_{A(1)}, \theta_Z, \theta_{A(2)}, \theta_{
W})$. Given these parameters, optimal weights $\hat{\vec{V}}, \hat{\vec{u}}$ can be learned by minimizing the empirical stage 1 loss $\hat{\mathcal{L}}_1$ and empirical stage 2 loss $\hat{\mathcal{L}}_2$, respectively. These minimizations can be solved analytically, where the solutions are
\begin{align}
    &\hat{\vec{V}}(\vec{\theta}) = \Psi_1^\top \Phi_1 (\Phi_1^\top \Phi_1 + m\lambda_1 I)^{-1}, &
    &\hat{\vec{u}}(\vec{\theta}) = \left(\Phi_2 ^\top \Phi_2 + n\lambda_2 I\right)^{-1}\Phi_2^\top \vec{y}_2, \label{eq:weights-sol}
\end{align}
where we denote $\vec{\theta} = (\theta_{A(1)}, \theta_Z, \theta_{A(2)}, \theta_{
W})$ and define matrices as follows:
\begin{align*}
    &\Psi_1 = \left[\vec{\psi}_{\theta_W}(w_1), \dots, \vec{\psi}_{\theta_W}(w_m)\right]^\top, & &\Phi_1 = [\vec{v}_1(a_1, z_1), \dots, \vec{v}_1(a_m, z_m)]^\top,\\
    &\vec{y}_2 = [\tilde y_1, \dots, \tilde y_n]^\top, & &\Phi_2 = [\vec{v}_2(\tilde a_1, \tilde z_1), \dots, \vec{v}_2(\tilde a_n, \tilde z_n)]^\top,&\\
    &\vec{v}_1(a, z) = \vec{\phi}_{\theta_{A(1)}}(a) \otimes\vec{\phi}_{\theta_Z}(z), &&\vec{v}_2(a, z) = \vec{\psi}_{\theta_{A(2)}}(a) \otimes \left(\hat{\vec{V}}(\vec{\theta}) \left(\vec{\phi}_{\theta_{A(1)}}(a) \otimes\vec{\phi}_{\theta_Z}(z)\right)\right).
\end{align*}
Given these weights $\hat{\vec{u}}(\vec{\theta}), \hat{\vec{V}}(\vec{\theta})$, we can update feature parameters by a gradient descent method with respect to the residuals of the loss of each stage, while regrading $\hat{\vec{V}}$ and $\hat{\vec{u}}$ as functions of parameters. Specifically, we take the gradient of the losses
\begin{align*}
    &\hat{\mathcal{L}}^{\mathrm{DFPV}}_1(\vec{\theta}) = \frac1m \sum_{i=1}^m \left\|\vec{\psi}_{\theta_{W}}(w_i) - \hat{\vec{V}}(\vec{\theta})\left(\vec{\phi}_{\theta_{A(1)}}(a_i) \otimes\vec{\phi}_{\theta_Z}(z_i)\right) \right\|^2 + \lambda_1 \|\hat{\vec{V}}(\vec{\theta})\|^2,\\
    &\hat{\mathcal{L}}^{\mathrm{DFPV}}_2(\vec{\theta}) = \frac1n \sum_{i=1}^n \left(\tilde y_i - \hat{\vec{u}}(\vec{\theta})^\top \left(\vec{\psi}_{\theta_{A(2)}}(\tilde a_i) \otimes \hat{\vec{V}}(\vec{\theta})  \left(\vec{\phi}_{\theta_{A(1)}}(\tilde a_i) \otimes\vec{\phi}_{\theta_Z}(\tilde z_i)\right)\right)\right)^2 \!+\!  \lambda_2 \|\hat{\vec{u}}(\vec{\theta})\|^2,
\end{align*}
where $\hat{\vec{V}}(\vec{\theta}), \hat{\vec{u}}(\vec{\theta})$ are given in \eqref{eq:weights-sol}. Given these losses, $(\theta_{A(1)}, \theta_{Z})$ are minimized with respect to $\hat{\mathcal{L}}^{\mathrm{DFPV}}_1(\vec{\theta})$, and $(\theta_{A(2)}, \theta_{W})$ are minimized with respect to $\hat{\mathcal{L}}^{\mathrm{DFPV}}_2(\vec{\theta})$. Finally, we take the empirical mean of $\vec{\psi}_{\theta^{(t)}_W}$ based on additional output-proxy data $S_W = \{w^{\mathrm{extra}}_i\}_{i=1}^{n_W}$, which is used for estimating the structural function. Here, we assume access to $S_W$ for proving consistency results, but empirically, we can use stage 1 data to compute this mean.

The complete procedure is presented in Algorithm~\ref{algo}. Note that we may use any sophisticated gradient-based learning method to optimize, such as Adam \citep{Kingma2015Adam}. As reported in \citet{Xu2021Learning}, we observe that  the learning procedure is stabilized by running  several gradient descent steps on the stage 1 parameters $(\theta_{A(1)}, \theta_{Z})$ before updating the stage 2 features $(\theta_{A(2)}, \theta_{W})$. Furthermore, we may use mini-batch updates, which sample subsets of the data at the beginning of each iteration and only use these subsamples to update the parameters. 

 \begin{algorithm}
 \caption{Deep Feature Proxy Causal Learning}
 \begin{algorithmic}[1]  \label{algo}
 \renewcommand{\algorithmicrequire}{\textbf{Input:}}
 \renewcommand{\algorithmicensure}{\textbf{Output:}}
 \REQUIRE Stage 1 data $S_1 =\{a_i, z_i, w_i\}$, Stage 2 data $S_2 = \{\tilde a_i \tilde z_i, \tilde y_i\}$, Additional outcome-proxy data $S_W =\{w^{\mathrm{extra}}_i\}$, Regularization parameters $(\lambda_1, \lambda_2)$. Initial values $\vec{\theta}^{(0)} = (\theta^{(0)}_{A(1)}, \theta^{(0)}_Z, \theta^{(0)}_{A(2)}, \theta^{(0)}_{W})$. Learning rate $\alpha$.
 \ENSURE Estimated structural function $\hat{f}_\mathrm{struct}(a)$
 \STATE $t \leftarrow 0$
 \REPEAT
    \STATE Compute $\hat{\vec{V}}(\vec{\theta}^{(t)}), \hat{\vec{u}}(\vec{\theta}^{(t)})$ in \eqref{eq:weights-sol} 
    \STATE Update parameters in features $\vec{\theta}^{(t+1)} \leftarrow (\theta^{(t+1)}_{A(1)}, \theta^{(t+1)}_Z, \theta^{(t+1)}_{A(2)}, \theta^{(t+1)}_{W})$ as follows
    \abovedisplayskip     =0.5\abovedisplayskip
  \abovedisplayshortskip=0.5\abovedisplayshortskip
  \belowdisplayskip     =0.5\belowdisplayskip
  \belowdisplayshortskip=0.5\belowdisplayshortskip
    \begin{align*}
        &\theta^{(t+1)}_{A(1)} \leftarrow \theta^{(t)}_{A(1)} - \alpha \nabla_{\theta_{A(1)}}\hat{\mathcal{L}}^{\mathrm{DFPV}}_1(\vec{\theta})|_{\vec{\theta} =\vec{\theta}^{(t)}},& & \theta^{(t+1)}_{Z} \leftarrow \theta^{(t)}_{Z} - \alpha \nabla_{\theta_{Z}}\hat{\mathcal{L}}^{\mathrm{DFPV}}_1(\vec{\theta})|_{\vec{\theta} =\vec{\theta}^{(t)}}\\
        &\theta^{(t+1)}_{A(2)} \leftarrow \theta^{(t)}_{A(2)} - \alpha \nabla_{\theta_{A(2)}}\hat{\mathcal{L}}^{\mathrm{DFPV}}_2(\vec{\theta})|_{\vec{\theta} =\vec{\theta}^{(t)}}, & & \theta^{(t+1)}_{W} \leftarrow \theta^{(t)}_{W} - \alpha \nabla_{\theta_{W}}\hat{\mathcal{L}}^{\mathrm{DFPV}}_2(\vec{\theta})|_{\vec{\theta} =\vec{\theta}^{(t)}}
    \end{align*}
    \STATE Increment counter $t \leftarrow t+1;$
  \UNTIL{\textbf{convergence}}
  \STATE Compute $\hat{\vec{u}}(\vec{\theta}^{(t)})$ from \eqref{eq:weights-sol} 
  \STATE Compute mean feature for $W$ using stage 1 dataset: $\vec{\mu}_{\theta_W} \leftarrow \frac1n \sum \vec{\psi}_{\theta^{(t)}_W}(w^{\mathrm{extra}}_i)$
 \RETURN $\hat{f}_\mathrm{struct}(a) = (\hat{\vec{u}}^{(t)})^\top \left(\vec{\psi}_{\hat{\theta}^{(t)}_{A(2)}}(a) \otimes \vec{\mu}_{\theta_W}\right)$
 \end{algorithmic} 
 \end{algorithm}
 
\subsection{Consistency of DFPV}
In this section, we show that the DFPV method yields a consistent estimate of the bridge function.
Our approach differs from the consistency result of \citet{Deaner2018Proxy} and \citetMastouri, which depend on the assumption that the true bridge function lies in a specific functional space (such as an RKHS \citep{Mastouri2021Proximal}) or satisfies certain smoothness properties \citep{Deaner2018Proxy}.
Since our features are adaptive, however, we instead build our result on a Rademacher complexity argument \citep{FoundationOfML}, which holds for a wide variety of hypothesis spaces. Furthermore, we do not assume that our hypothesis space contains the true bridge function, which makes this theory of independent interest.

To derive the excess risk bound, we need another completeness assumption.

\begin{restatable}[Completeness Assumption on Outcome-Inducing Proxy \citep{Deaner2018Proxy,Mastouri2021Proximal}]{assum}{completenessproxy} \label{assum:completeness-proxy}
Let $l: \mathcal{W} \to \mathbb{R}$ be any square integrable function $\|l\|_{\prob{W}} < \infty$, We assume the following condition:
\begin{align*}
    &\expect{l(W) \mid A=a, Z=z}=0\quad \forall (a,z) \in \mathcal{A} \times \mathcal{Z} \quad\Leftrightarrow \quad l(w) = 0 \quad \prob{W}\text{-}\mathrm{a.e.}
\end{align*}
\end{restatable}
This assumption is not necessary for identification (Proposition~\ref{prop:ident}), but we need it for connecting the two-stage loss and the deviation of the bridge function. Given this assumption, we have the following consistency result.

\begin{prop} \label{prop:consistency-structural}
     Let Assumption~\ref{assum:completeness-proxy} and Assumptions~\ref{assume-consist} and \ref{assume-good-support} in Appendix~\ref{sec:consistency} hold. Given stage 1 data $S_1 = \{(w_i, a_i, z_i)\}_{i=1}^m$, stage 2 data $S_2 = \{(\tilde y_i, \tilde a_i, \tilde z_i)\}_{i=1}^n$, and additional output-proxy data $S_W = \{w_i^{\mathrm{extra}}\}_{i=1}^{n_W}$, then for the minimizer of $\hat{\mathcal{L}}_2$ denoted as $(\hat{\vec{u}}, \hat{\theta}_{A(2)}, \hat{\theta}_{Z})$, we have
    \begin{align*}
        &\|f_{\mathrm{struct}} - \hat{f}_{\mathrm{struct}} \|_{\prob{A}} \leq O_p\left( \sqrt{\kappa_1 + \hat{\mathfrak{R}}_{S_1}(\mathcal{H}_1)+ \frac{1}{\sqrt{m}}} + \kappa_2\!+\!\sqrt{\hat{\mathfrak{R}}_{S_2}(\mathcal{H}_2)+  \frac{1}{\sqrt{n}}}  + \frac{1}{\sqrt{n_W}}\right),
    \end{align*}
    where $\mathcal{H}_1, \mathcal{H}_2$ are the functional classes that are considered in each stage, comprising both the predictions and their associated losses, which are defined in \eqref{eq:def-h1} and \eqref{eq:def-h2}, respectively; and $\hat{\mathfrak{R}}_{S_1}(\mathcal{H}_1), \hat{\mathfrak{R}}_{S_2}(\mathcal{H}_2)$ are their respective empirical Rademacher complexities; the quantities $\kappa_1, \kappa_2$ measure the misspecification in each regression, and are defined in \eqref{eq:def-kappa1} and \eqref{eq:def-kappa2}, respectively.
\end{prop}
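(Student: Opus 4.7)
}

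The overall strategy is to decompose the error $\|f_{\mathrm{struct}} - \hat{f}_{\mathrm{struct}}\|_{\prob{A}}$ into three pieces: (i) the error in estimating the bridge function $h^*$ by $\hat{h}$, (ii) the error from replacing $\expect[W]{\hat h(a,W)}$ by its empirical average over $S_W$, and (iii) the propagation of the stage 1 error into stage 2. First I would apply the triangle inequality in $L^2(\prob{A})$ to write
\begin{align*}
\|f_{\mathrm{struct}}-\hat f_{\mathrm{struct}}\|_{\prob{A}}
\;\leq\;
\|\expect[W]{h^*(\cdot,W)}-\expect[W]{\hat h(\cdot,W)}\|_{\prob{A}}
+\bigl\|\expect[W]{\hat h(\cdot,W)}-\hat{\vec u}^\top\bigl(\vec\psi_{\hat\theta_{A(2)}}(\cdot)\otimes\vec\mu_{\theta_W}\bigr)\bigr\|_{\prob{A}}.
\end{align*}
The second term is controlled by a standard concentration argument for the empirical mean of $\vec\psi_{\theta_W}(W)$ over $n_W$ samples, yielding the $1/\sqrt{n_W}$ contribution. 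The first term is bounded by $\|h^*-\hat h\|_{\prob{A}\otimes\prob{W}}$ (after using Jensen), reducing the problem to controlling the bridge-function deviation.

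Next I would use Assumption~\ref{assum:completeness-proxy} to translate the excess stage 2 \emph{population} risk into a bound on $\|h^*-\hat h\|$. Concretely, writing $\mathcal{L}_{\mathrm{PV}}$ as in \eqref{eq:total-loss}, the completeness of $W$ given $(A,Z)$ means that the conditional operator $h\mapsto \expect[W\mid A,Z]{h(A,W)}$ is injective on square-integrable $h$, so that $\mathcal{L}_{\mathrm{PV}}(\hat h)-\mathcal{L}_{\mathrm{PV}}(h^*)$ upper-bounds (up to constants) the squared $L^2$ deviation of the bridge functions. This is the standard link between the ``reduced-form'' squared loss and the true target in PCL, and is where Assumption~\ref{assum:completeness-proxy} enters.

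The main work is then to control the population excess stage 2 risk by the quantities appearing in the bound. I would write
\begin{align*}
\mathcal{L}_{\mathrm{PV}}(\hat h)-\mathcal{L}_{\mathrm{PV}}(h^*)
\;=\;
\underbrace{\bigl(\mathcal{L}_{\mathrm{PV}}(\hat h)-\hat{\mathcal L}_2(\hat{\vec u},\hat\theta_{A(2)},\hat\theta_W)\bigr)}_{\text{(a) stage 2 generalization}}
+\underbrace{\bigl(\hat{\mathcal L}_2(\hat{\vec u},\hat\theta_{A(2)},\hat\theta_W)-\hat{\mathcal L}_2(\text{oracle})\bigr)}_{\leq\,0\text{ by optimality}}
+\underbrace{\bigl(\hat{\mathcal L}_2(\text{oracle})-\mathcal{L}_{\mathrm{PV}}(h^*)\bigr)}_{\text{(b) misspecification }\kappa_2 + \text{stage 1 propagation}}.
\end{align*}
Term (a) is a uniform deviation over the stage 2 hypothesis class $\mathcal H_2$, and is controlled by a standard symmetrization/Rademacher argument (e.g.\ Mohri et al.), producing $\hat{\mathfrak{R}}_{S_2}(\mathcal H_2)+1/\sqrt n$. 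Term (b) has two ingredients: the misspecification term $\kappa_2$ that measures how well the stage 2 function class can approximate the best bridge function achievable given a \emph{perfect} stage 1 regressor; and the contribution from using the \emph{estimated} $\hat{\vec V}$ rather than the true conditional mean operator. The latter is the place where the stage 1 error enters multiplicatively: by expanding the inner $L^2$ loss and using Cauchy--Schwarz (together with Assumptions~\ref{assume-consist}--\ref{assume-good-support} on boundedness of features and densities), I would bound it by the population stage 1 excess risk, which is in turn controlled by the same symmetrization argument applied to $\mathcal H_1$, giving $\kappa_1+\hat{\mathfrak{R}}_{S_1}(\mathcal H_1)+1/\sqrt m$. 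The square roots in the final bound appear because the $L^2$ bridge-deviation is the square root of an excess squared loss.

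\paragraph{Main obstacle.} The delicate step is (b): cleanly propagating the stage 1 error into stage 2 without circular dependence between the adaptive features used in the two stages. Because $\hat\theta_W$ is chosen in stage 2 while $\hat{\vec V}$ depends on $\theta_W$, I expect the argument has to fix an arbitrary $\theta_W\in\Theta_W$, bound uniformly over the stage 1 class indexed by $\theta_W$, and only then take the supremum used in defining $\mathcal H_2$; this is why the functional classes $\mathcal H_1,\mathcal H_2$ in \eqref{eq:def-h1}--\eqref{eq:def-h2} need to include both the predictions and their losses. The boundedness hypotheses from Assumptions~\ref{assume-consist}--\ref{assume-good-support} will be used precisely to make the composition of losses Lipschitz so that Talagrand-style contraction can pass the Rademacher bound through to the composite class.
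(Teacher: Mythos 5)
Your overall plan matches the paper's proof: a triangle-inequality split into (i) the bridge-function deviation, (ii) the Monte-Carlo error over $S_W$ giving the $1/\sqrt{n_W}$ term, and (iii) two Rademacher-complexity generalization bounds (one per stage, each with a misspecification term, with the stage 1 error propagated into stage 2 via the boundedness conditions and a comparison against the population-optimal element of each class). The paper likewise defines $\mathcal{H}_1,\mathcal{H}_2$ as classes of loss functions so that the symmetrization bound applies directly, and handles the $\theta_W$-circularity exactly as you anticipate, by making the Stage 1 bound and $\kappa_1$ uniform over $\vec{\psi}_{\theta_W}$.

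Two steps are glossed over in a way that would not survive as written. First, you claim that Assumption~\ref{assum:completeness-proxy} makes the operator $h\mapsto\expect[W|A,Z]{h(A,W)}$ injective and that this alone converts the excess reduced-form risk into a bound on $\|h^*-\hat h\|$ ``up to constants.'' Injectivity of a (typically compact) conditional-expectation operator does not give a bounded inverse --- this is precisely the ill-posedness the paper warns about in Section~2.2. The paper instead introduces an explicit ill-posedness measure $\tau_a$, the worst-case ratio of $\|h^*(a,\cdot)-h(a,\cdot)\|_{\prob{W|A=a}}$ to the corresponding deviation of conditional expectations over $Z$, and assumes $\tau=\sup_a\tau_a<\infty$; that constant is what your ``up to constants'' must be, and its finiteness is an additional assertion beyond injectivity. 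Second, your Jensen step bounds the first triangle term by the deviation under the \emph{product} measure $\prob{A}\otimes\prob{W}$, whereas the two-stage analysis controls $\|h^*-\hat h\|$ under the \emph{joint} $\prob{A,W}$ (equivalently, conditional norms $\prob{W|A=a}$ averaged over $A$). Passing between these is exactly where Assumption~\ref{assume-good-support} is used in the paper, via Cauchy--Schwarz with the density ratio $\rho_W(w)/\rho_{W|A}(w|a)$ and the constant $\eta=\sup_a\eta_a$; you instead invoke that assumption in the stage 1 propagation step, where it plays no role. Both fixes are local and the rest of your argument goes through.
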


We formally restate Proposition~\ref{prop:consistency-structural} in Appendix~\ref{sec:consistency} (Theorem~\ref{thm:consistency-structural}), in which the definitions of hypothesis spaces and constants can be found. From this, we can see that if we correctly identify the hypothesis space ($\kappa_1, \kappa_2 = 0$) and have vanishing Rademacher complexities ($\hat{\mathfrak{R}}_{S_1}(\mathcal{H}_1), \hat{\mathfrak{R}}_{S_2}(\mathcal{H}_2) \to 0$ in probability), then the estimated bridge function converges in probability to the true one.
Note that the Rademacher complexity of certain neural networks is known; e.g., ReLU two-layer networks are known to have the complexity of order $O(\sqrt{1/n})$ \citep{neyshabur2018the}. However, it might be not straightforward to derive the Rademacher complexities of the specific classes of neural networks we use, because we are employing the outer product of two neural networks in \eqref{eq:stage1-loss} and \eqref{eq:stage2-loss}, whose Rademacher complexity remains a challenging open problem. The analysis in Proposition~\ref{prop:consistency-structural} is similar to \citet{Xu2021Learning}, 
with two important distinctions: first, the target $f_\mathrm{struct}$ is estimated using the bridge function $h$ marginalized over one of its arguments $W$, but PCL learns the bridge function itself. Second, our result is more general and includes the misspecified case, which was not treated in the earlier work.

One limitation of our analysis is that we leave aside questions of optimization. As we previously discussed, $\hat{\mathcal{L}}_2$ does not explicitly include $\theta_W$, thus it is difficult to develop a consistency result that includes the optimization. We emphasize that Algorithm \ref{algo} does not guarantee to converge to the global minimizer, since we ignore the implicit dependency of $\theta_W$ on $\theta_{A(1)}, \theta_{Z}$ when calculating the gradient. As we will see in the Section~\ref{sec:experiment}, however, the proposed method outperforms competing methods with fixed feature dictionaries, which do not have this issue.

\subsection{DFPV for Policy Evaluation}
PCL methods can estimate not only the structural function, but also other
causal quantities, such as Conditional Average Treatment Effect (CATE) \citep{Singh2020} and
Average Treatment Effect on Treated (ATT) \citep{Deaner2018Proxy,Singh2020}. Beyond the causal context, \citep{Tennenholtz2020OffPolicy} uses a PCL method to conduct the off-policy evaluation in a partially observable Markov decision process with discrete action-state spaces. Here, we present the application of PCL methods to bandit off-policy evaluation with continuous actions, and propose a method to leverage adaptive feature maps in this context.

In bandit off-policy evaluation, we regard the outcome $Y$ as the reward and aim to evaluate the value function of a given policy. Denote policy $\pi:\mathcal{C} \to \mathcal{A}$ and define the value function $v(\pi)$ as
\begin{align*}
    v(\pi) = \expect[U,C]{\expect{Y|A=\pi(C), U}},
\end{align*}
where $C \in \mathcal{C}$ is a variable on which the policy depends. In the flight ticket sales prediction example, a company might be interested in predicting the effect of discount offers. This requires us to solve the policy evaluation with $C=A$ since a new price is determined based on the current price. Alternatively, if the company is planning to introduce a new price policy that depends on the fuel cost, we need to consider the policy evaluation with $C=Z$, since the fuel cost can be regarded as the treatment-inducing proxy variable. Here, we assume the policy $\pi$ to be deterministic, but this can be easily generalized to a stochastic policy. Although the value function $v$ contains the expectation over the latent confounder $U$, it can be estimated through a bridge function.

\begin{prop}\label{prop:value-function}
Assume that the true bridge function $h^*(a,w):\mathcal{A}\times\mathcal{W}\to \mathbb{R}$ is jointly measurable. 
Suppose $C \indepe W | U$. 
Under Assumptions~\ref{assum:stuctural}, \ref{assum:completeness-confounder} and Assumptions~\ref{assu:cond-exp-compactness}, \ref{assu:cond-exp-L2}, \ref{assu:Y-square-integrable} in Appendix~\ref{sec:identifiability}, 
we have 
\begin{align*}
    v(\pi) = \expect[C,W]{h^*(\pi(C), W)}.
\end{align*}
\end{prop}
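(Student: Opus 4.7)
My plan is to reduce the proposition to a pointwise-in-$U$ form of the bridge identity and then apply Fubini together with the conditional independence $C \indepe W \mid U$. The first and crucial step is to invoke the \emph{confounder-level} bridge identity
\[\expect{Y \mid A=a, U=u} = \int h^*(a,w)\, \rho_W(w \mid U=u)\, \intd w, \qquad \prob{U}\text{-a.e.\ } u,\ \forall a \in \mathcal{A},\]
which is established inside the proof of Proposition~\ref{prop:ident} in Appendix~\ref{sec:identifiability}: it combines $Y \indepe Z \mid A,U$ and $W \indepe (A,Z) \mid U$ from Assumption~\ref{assum:stuctural} with the completeness Assumption~\ref{assum:completeness-confounder} on $(A,Z)$. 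Equation~\eqref{eq:struct-def} is then just the special case obtained by marginalizing both sides over $\prob{U}$.

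Second, I would substitute this identity into the definition of $v(\pi)$ and reorder integrals using Fubini, justified by joint measurability of $h^*$ together with the $L^2$ integrability Assumptions~\ref{assu:cond-exp-L2}--\ref{assu:Y-square-integrable}:
\begin{align*}
v(\pi) &= \expect[U,C]{\expect{Y \mid A=\pi(C), U}} \\
&= \int\!\!\int\!\!\int h^*(\pi(c),w)\,\rho_W(w \mid U=u)\,\rho_{U,C}(u,c)\, \intd w\, \intd u\, \intd c.
\end{align*}
By the assumption $C \indepe W \mid U$ we have $\rho_W(w \mid U=u) = \rho_W(w \mid U=u, C=c)$, so
\[\rho_W(w \mid U=u)\,\rho_{U,C}(u,c) = \rho_{U,C,W}(u,c,w).\]
Marginalizing out $u$ gives $\rho_{C,W}(c,w)$, yielding $v(\pi) = \int\!\!\int h^*(\pi(c),w)\,\rho_{C,W}(c,w)\, \intd w\, \intd c = \expect[C,W]{h^*(\pi(C),W)}$.

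The only real subtlety --- and hence the main step to get right --- is locating the pointwise-in-$u$ form of the bridge identity rather than the marginal version~\eqref{eq:struct-def} that is the headline conclusion of Proposition~\ref{prop:ident}. Once that is in hand, the rest is routine manipulation with Fubini and the conditional-independence assumption $C \indepe W \mid U$, which is precisely what allows the outcome-inducing proxy $W$ to stand in for the unobserved confounder $U$ while the policy-relevant variable $C$ is averaged out.
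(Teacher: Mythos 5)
Your proposal is correct and follows essentially the same route as the paper: the key identity you isolate, $\expect{Y\mid A=a,U}=\expect[W\given U]{h^*(a,W)}$, is exactly Theorem~\ref{thm:proxy-identifiablity} (Equation~\eqref{eq:h-equal-y-given-u}), and the paper's Corollary~\ref{cor:value-func} likewise substitutes it into $v(\pi)=\expect[C,U]{\expect{Y\mid A=\pi(C),U}}$ and then uses $C\independent W\mid U$ to replace the inner expectation over $\prob{W\given U}$ by one over $\prob{W\given C}$. Your density-based Fubini manipulation is just a more explicit rendering of the paper's iterated-conditional-expectation step, so there is no substantive difference.
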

The proof can be found in Appendix~\ref{sec:identifiability} (Corollary~\ref{cor:value-func}). From Proposition~\ref{prop:value-function}, we can obtain the value function by taking the empirical average over the bridge function $h(\pi(C), W)$. In DFPV, we estimate the bridge function as $h(a,w) = (\hat{\vec{u}})^\top (\vec{\psi}_{\hat{\theta}_{A(2)}}(a) \otimes \vec{\psi}_{\hat{\theta}_W}(w))$ where $\hat{\vec{u}},\hat{\theta}_{A(2)}, \hat{\theta}_W$ minimize $\hat{\mathcal{L}}_2$. Hence, 
given $n'$ observation of $(C,W)$ denoted as $\{\check{c}_i, \check{w}_i\}_{i=1}^{n'}$, we can estimate the value function $v(\pi)$ as 
\begin{align*}
    \hat{v}(\pi) = \frac1{n_C} \sum_{i=1}^{n'} \hat{\vec{u}}^\top \left(\vec{\psi}_{\hat{\theta}_{A(2)}}(\pi(\check{c}_i)) \otimes \vec{\psi}_{\hat{\theta}_W}(\check{w}_i))\right).
\end{align*}
We derive the following consistency result for policy value evaluation in a bandit with confounding bias.
\begin{prop} \label{prop:consistency-value}
     Let Assumption~\ref{assum:completeness-proxy} and Assumptions~\ref{assume-consist},~\ref{assume-good-support},~\ref{assume-policy-l2} in Appendix~\ref{sec:consistency} hold. Denote the minimizers of $\hat{\mathcal{L}}_2$  as $(\hat{u}, \hat{\theta}_{A(2)}, \hat{\theta}_{Z})$.  Given stage 1 data $S_1 = \{(w_i, a_i, z_i)\}_{i=1}^m$, stage 2 data $S_2 = \{(\tilde y_i, \tilde a_i, \tilde z_i)\}_{i=1}^n$, and data for policy evaluation $S_3 = \{(\check{c}_i, \check{w}_i)\}_{i=1}^{n'}$,   we have
    \begin{align*}
        &\!|v(\pi) - \hat{v}(\pi)|\leq O_p\left(\sqrt{\kappa_1 + \hat{\mathfrak{R}}_{S_1}(\mathcal{H}_1)+ \frac{1}{\sqrt{m}}} + \kappa_2\!+\!\sqrt{\hat{\mathfrak{R}}_{S_2}(\mathcal{H}_2) +  \frac{1}{\sqrt{n}}} + \frac{1}{\sqrt{n'}} \right)
    \end{align*}
    where $\mathcal{H}_1, \mathcal{H}_2$ are the function classes that are considered in each stage, comprising both the predictions and their associated losses, which are defined in \eqref{eq:def-h1} and \eqref{eq:def-h2}, respectively; and their empirical Rademacher complexities are given by $\hat{\mathfrak{R}}_{S_1}(\mathcal{H}_1), \hat{\mathfrak{R}}_{S_2}(\mathcal{H}_2)$; 
    the quantities $\kappa_1, \kappa_2$ measure the misspecification in each regression, which are defined in \eqref{eq:def-kappa1} and \eqref{eq:def-kappa2}, respectively.
\end{prop}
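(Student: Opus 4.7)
The proof strategy closely parallels that of Proposition~\ref{prop:consistency-structural}: both results reduce to controlling the estimation error of the bridge function $\hat h$ and then propagating it through a final step that combines a deterministic change-of-measure with a Monte Carlo average. I would proceed as follows.

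\textbf{Step 1 (Triangle decomposition).} Introduce the ``semi-population'' quantity $\tilde v(\pi) := \EE_{C,W}[\hat h(\pi(C),W)]$, where $\hat h(a,w) = \hat{\vec u}^\top(\vec\psi_{\hat\theta_{A(2)}}(a)\otimes \vec\psi_{\hat\theta_W}(w))$ is the DFPV estimate of the bridge function. By the triangle inequality,
\[
|v(\pi)-\hat v(\pi)| \le |v(\pi)-\tilde v(\pi)| + |\tilde v(\pi) - \hat v(\pi)|.
\]
Proposition~\ref{prop:value-function} gives $v(\pi) = \EE_{C,W}[h^*(\pi(C),W)]$, so the first summand captures the bias arising from using the estimated bridge function, while the second is a Monte Carlo error arising from averaging over $S_3$.

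\textbf{Step 2 (Monte Carlo term, $1/\sqrt{n'}$).} Conditional on $(S_1,S_2)$, $\hat h$ is deterministic, and $S_3$ is an i.i.d.\ sample from $\prob{C,W}$ independent of $(S_1,S_2)$. The boundedness conditions of Appendix~\ref{sec:consistency} guarantee that $\hat h$ is uniformly bounded on $\mathcal A \times \mathcal W$, so Hoeffding's inequality yields $|\tilde v(\pi)-\hat v(\pi)| = O_p(1/\sqrt{n'})$.

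\textbf{Step 3 (Bridge-function bias).} Jensen and Cauchy-Schwarz give
\[
|v(\pi)-\tilde v(\pi)| = \bigl|\EE_{C,W}[(h^*-\hat h)(\pi(C),W)]\bigr| \le \sqrt{\EE_{C,W}[(h^*(\pi(C),W)-\hat h(\pi(C),W))^2]}.
\]
At this point, Assumption~\ref{assume-policy-l2} is invoked to change measure from the policy-induced law of $(\pi(C),W)$ to the base law $\prob{A,W}$ at the cost of a finite constant, thereby reducing the task to bounding $\|h^*-\hat h\|_{\prob{A,W}}$.

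\textbf{Step 4 (Import from Proposition~\ref{prop:consistency-structural}).} The bound on $\|h^*-\hat h\|_{\prob{A,W}}$ is exactly what the proof of Theorem~\ref{thm:consistency-structural} provides: (i) a uniform-convergence argument bounds the stage-1 excess risk by $\kappa_1 + \hat{\mathfrak R}_{S_1}(\mathcal H_1) + 1/\sqrt m$; (ii) an analogous step bounds the stage-2 excess risk by $\kappa_2 + \hat{\mathfrak R}_{S_2}(\mathcal H_2) + 1/\sqrt n$; (iii) the completeness Assumption~\ref{assum:completeness-proxy} converts the resulting excess projected-regression risk into an $L^2$ bound on $\hat h - h^*$, with the misspecification constants $\kappa_1,\kappa_2$ absorbing the approximation error of the hypothesis classes $\mathcal H_1,\mathcal H_2$. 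Combining the three terms yields the stated rate, with $1/\sqrt{n_W}$ in Proposition~\ref{prop:consistency-structural} simply replaced by $1/\sqrt{n'}$ from Step~2.

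\textbf{Main obstacle.} The one ingredient genuinely new relative to Proposition~\ref{prop:consistency-structural} is the change of measure in Step~3: $L^2$-closeness of $\hat h$ to $h^*$ under the sampling law $\prob{A,W}$ must imply $L^2$-closeness under the policy-induced law of $(\pi(C),W)$. This is precisely what Assumption~\ref{assume-policy-l2} is designed to ensure, most naturally via a bounded density-ratio or covariate-shift condition. Once that step is in place, the rest of the argument is an essentially verbatim transcription of the structural-function consistency proof.
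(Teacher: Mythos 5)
Your proposal follows essentially the same route as the paper's proof: the same triangle decomposition through $\tilde v(\pi)=\EE_{C,W}[\hat h(\pi(C),W)]$, Hoeffding's inequality for the Monte Carlo term, Jensen's inequality followed by Assumption~\ref{assume-policy-l2} to pass from the policy-induced law of $(\pi(C),W)$ to $\prob{A,W}$, and finally the bridge-function bound imported from Theorem~\ref{thm:totalConsistency}. The only cosmetic difference is that the paper states Assumption~\ref{assume-policy-l2} directly as an $L^2$-norm comparison with constant $\sigma$ rather than as a density-ratio condition, so your Step~3 goes through verbatim.
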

The formal statement of Proposition~\ref{prop:consistency-value} can be found in Appendix~\ref{sec:consistency} (Theorem~\ref{thm:consistency-structural}). As in the structural function case, we can see that estimated value function converges in probability to the true one, provided that we have vanishing Rademacher complexities and there is no misspecification.

\section{Experiments} \label{sec:experiment}

In this section, we report the empirical performance of the DFPV method. First, we present the results of estimating structural functions; we design two experimental settings for low-dimensional treatments and high-dimensional treatments, respectively. Then, we show the result of applying PCL methods to the bandit off-policy evaluation problem with confounding. We include the results for problems considered in prior work in Appendix~\ref{sec:additional-experiments}. The experiments are implemented using PyTorch \citep{Pytorch}. The code is included in the supplemental material. All experiments can be run in a few minutes on Intel(R) Xeon(R) CPU E5-2698 v4 \@ 2.20GHz. 

\paragraph{Experiments for Structural Function}
We present two structural function estimation experiments. One is a demand design experiment based on a synthetic dataset introduced by \citet{Hartford2017DeepIV}, which is a standard benchmark for the instrumental variable regression. Here, we modify the data generating process to provide a benchmark for PCL methods. We consider the problem of predicting sales $Y$ from ticket price $P$, where these are confounded by a potential demand $D \in [0,10]$. To correct this confounding bias, we use the fuel price $(C_1, C_2)$ as the treatment-inducing proxy, which has an impact on price $P$, and the number of views of the ticket reservation page $V$ as the outcome-inducing proxy. Details of the data generation process can be found in Appendix~\ref{sec:demand-design-data-generation}. 

Our second structural function estimation experiment considers high-dimensional treatment variables. We test this using the dSprite dataset \citep{dsprites17}, which is an image dataset described by five latent parameters ({\tt shape,  scale, rotation, posX} and {\tt posY}). The images are $64 \times 64 = 4096$-dimensional. Based on this, \citet{Xu2021Learning} introduced the causal experiment, where the treatment is each figure, and the confounder is \texttt{posY}. Inspired by this, we consider the PCL setting that learns the same structural functions with nonlinear confounding, which is not possible to handle in the instrumental variable setting. Specifically, the structural function $f_{\mathrm{struct}}$ and outcome $Y$ are defined as
\begin{align*}
    f_{\mathrm{struct}}(a) = \frac{(\mathrm{vec}(\vec{B})^\top a)^2 - 3000}{500}, \quad Y =  12(\mathtt{posY}-0.5)^2f_{\mathrm{struct}}(A) + \varepsilon, \quad \varepsilon \sim \mathcal{N}(0, 0.5),
\end{align*}
where each element of the matrix $\vec{B} \in \mathbb{R}^{64\times 64}$ is given as $B_{ij} = |32-j| / 32$.\footnote{This differs from the experimental setting used the preceding version of this work. The earlier experiment is described in Appendix~\ref{sec:original-dsprite}, and limitations of that setting are discussed.} We fixed the {\tt shape} parameter to {\tt heart} and used other parameters as the treatment-inducing proxy $Z$. We sampled another image that shared the same \texttt{posY} as treatment $A$, which is used as output-inducing proxy $W$. Details of data generation process can be found in Appendix~\ref{sec:dsprite-design-data-generation}. 

\begin{wrapfigure}{r}{0pt}
    \begin{tikzpicture}
        \node[state, fill=yellow] (x) at (0,0) {$A$};
    
        \node[state, fill=yellow] (y) [right =of x] {$Y$};
        \node[state, fill=yellow] (w) [above right =of y, xshift = -0.8cm, yshift=-0.3cm] {$Q$};
        \node[state] (eps) [above right =of x, xshift = -0.5cm, yshift=0.3cm] {$U$};
        \path (x) edge (y);
        \path (eps) edge (y);
        \path (eps) edge (x);
        \path[bidirected] (eps) edge (w);
    \end{tikzpicture} 
    \caption{Causal Graph in CEVAE}
    \label{fig:causal-graph-cevae}
\end{wrapfigure}

We compare the DFPV method to three competing methods, namely KPV \citep{Mastouri2021Proximal}, PMMR \citep{Mastouri2021Proximal}, and an autoencoder approach derived from the CEVAE method \citep{Louizos2017Causal}. In KPV, the bridge function is estimated through the two-stage regression as described in Section \ref{sec:preliminary}, where feature functions are fixed via their kernel functions. PMMR also models the bridge function using kernel functions, but parameters are learned by moment matching. CEVAE is not a PCL method, however, it represents a state-of-the-art approach in correcting for hidden confounders using observed proxies. The causal graph for CEVAE is shown in Figure~\ref{fig:causal-graph-cevae}, and CEVAE uses a VAE \cite{Kingma2014VAE} to recover the distribution of confounder $U$ from the ``proxy'' $Q$. We make two modifications to CEVAE to apply it in our setting. First, we include both the treatment-inducing proxy $Z$ and output-inducing proxy $W$ as $Q$ in CEVAE (we emphasize that this does not follow the causal graph in Figure~\ref{fig:causal-graph-cevae}, since there exist arrows from $Q$ to $A,Y$). 
Second, CEVAE is originally used in the setting where $Q$ is conditioned on a particular value, whereas we marginalize $Q$.  
See Appendix~\ref{sec:hyper-param-and-architectures} for the choice of the network structure and hyper-parameters. 
We tuned the regularizers $\lambda_1,\lambda_2$ as discussed in Appendix~\ref{sec:hyper-param}, with the data evenly split for Stage 1 and Stage 2.  We varied the dataset size and ran 20 simulations for each setting. Results are summarized in Figure~\ref{fig:ate}.

\begin{figure}[t]
    \centering
    \begin{minipage}{0.49\hsize}
    \centering
    \includegraphics[height=150pt]{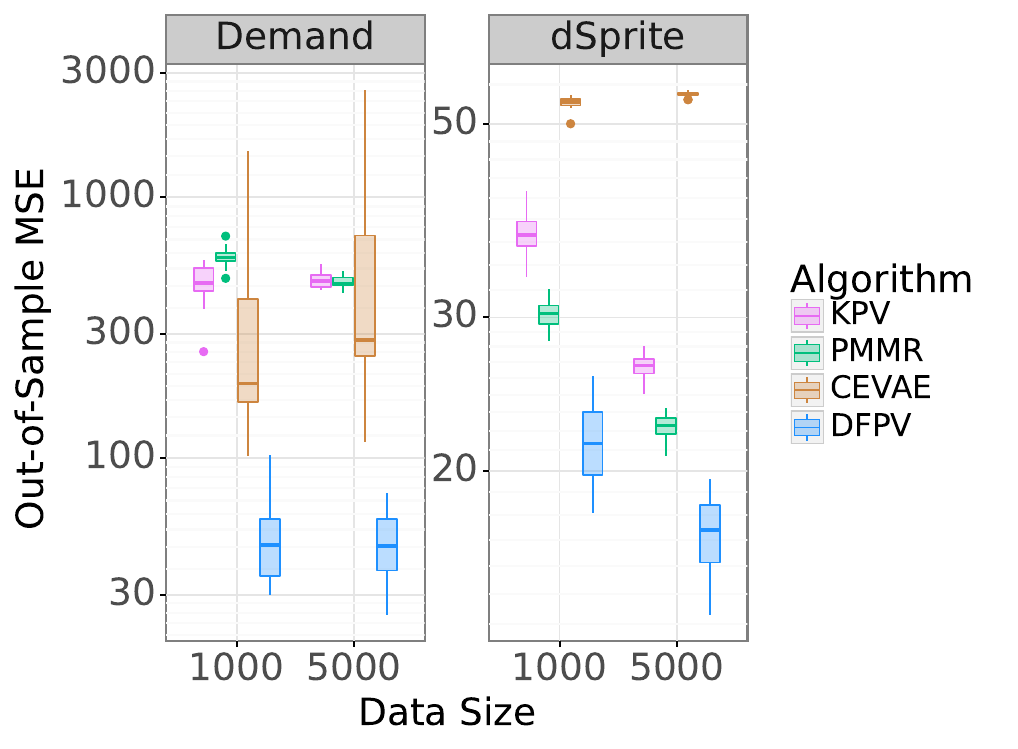}
    \caption{Result of structural function experiment in demand design setting (Left) and dSprite setting (Right).}
    \label{fig:ate}
    \end{minipage}
    \hfill
    \begin{minipage}{0.49\hsize}
        \centering
        \includegraphics[height=150pt]{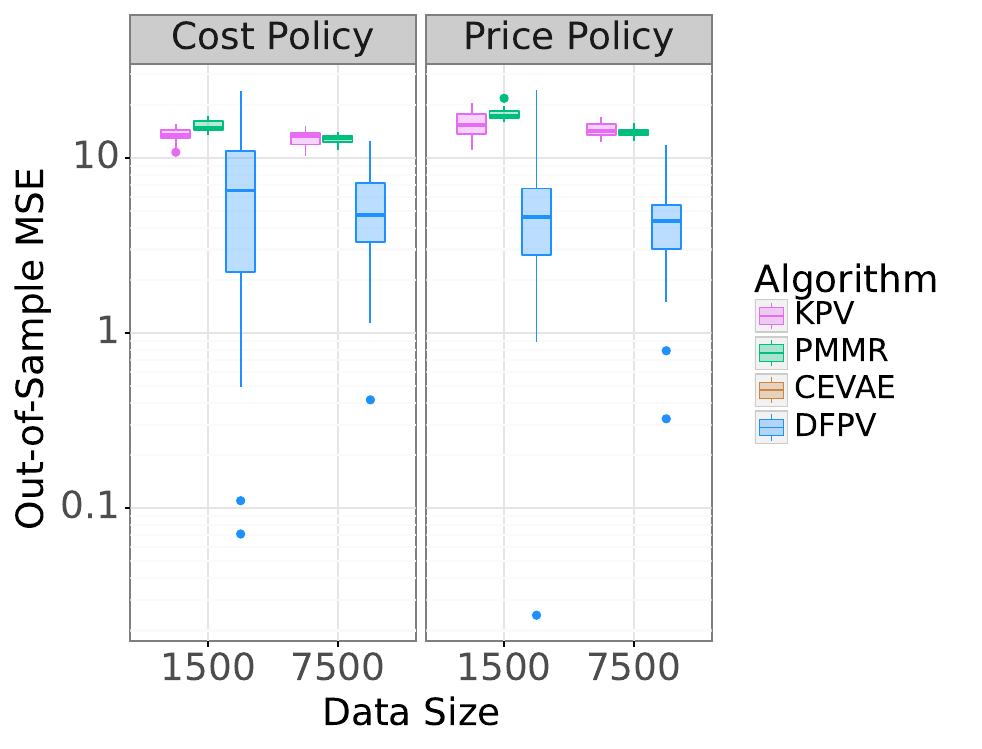}
        \caption{Result of OPE experiment when the policy depends on the costs (Left) and on the current price (Right).}
        \label{fig:ope}
        \end{minipage}
\end{figure}

In both experiments, DFPV consistently outperforms existing methods. This suggests that DFPV is capable of learning complex structural functions by taking the advantage of the flexibility of neural networks. KPV and PMMR perform similarly in Demand settings, but PMMR performs slightly better in the dSprite experiment.
Although CEVAE also learns a flexible model with a neural network, it's highly unstable in the demand design experiment and underperforms kernel methods in the dSprite experiment. 
This is because CEVAE does not take advantage of the relations between proxy variables and the structural function.

\paragraph{Experiments for Offline Policy Evaluation}
We now describe the offline policy evaluation experiment based on the demand design data. We set up synthetic experiments to evaluate two types of policy. In the first case, a policy depends on costs $C=(C_1, C_2)$, to address the question \textit{How much would we gain if we decide the price based on the fuel costs}. In the second case, the policy depends on current action $C=P$, which addresses \textit{How much would we gain if we cut the price by 30 percent}. The detailed policies can be found in Appendix~\ref{sec:ope-data-generation}. 

 We only compare PCL methods here, as it is not straightforward to apply CEVAE to the off-policy evaluation problem. We evenly split the data for Stage 1, Stage 2, and policy evaluation (i.e we set $n=m=n'$).We ran 20 simulations for each setting. Results are summarized in Figure~\ref{fig:ope}, in which DFPV performs better than existing PCL methods. This is not surprising since, as demonstrated in the structural function experiments, DFPV can estimate complex bridge functions, which results in a more accurate estimation of the value function.

\section{Conclusion}

We have proposed a novel approach for proxy causal learning, the Deep Feature Proxy Variable (DFPV) method, which performs two-stage least squares regression on flexible and expressive features.
Motivated by the literature on the instrumental variable problem, we showed how to learn these feature maps adaptively with deep neural networks. We also showed that PCL learning can be used for off-policy evaluation in the bandit setting with confounding, and that DFIV performs competitively in this domain. This work thus brings together research from the worlds of deep offline RL and causality from observational data.

In future work, it would be interesting to explore different approaches to learning deep models in the PCL problem. 
One direction would be to adapt the method of moment matching, which has been studied extensively in the instrumental variable literature \citep{Bennett2019}. Moreover, \citet{Luofeng2020Provably} recently developed a novel adversarial method to solve a class of functional equations, which would be also a promising approach in the PCL problem.  In the RL context, \citet{Tennenholtz2020OffPolicy} shows an interesting connection between PCL learning and off-policy evaluation in a partially observable Markov decision process, and we believe that adapting DFPV to this setting will be of value.

 {\bf Acknowledgments:}  We thank Olawale  Salaudeen for alerting us to an issue in the experiments of an earlier version of this document (Section \ref{sec:experiment}); and Ben Deaner, Zhu Li, and Dimitri Meunier for helpful discussions on the proxy setting. 

\bibliography{reference}
\newpage
\appendix

\section{Hyper-Parameter Tuning}\label{sec:hyper-param}

If observations from the joint distribution of $(A, Y, Z, W)$ are available in both stages, we can tune the regularization parameters $\lambda_1, \lambda_2$ using the approach proposed in \citet{Singh2019Kernel, Xu2021Learning}. Let the complete data of stage 1 and stage 2 be denoted as $(a_i, y_i, z_i, w_i)$ and $(\tilde a_i, \tilde y_i, \tilde z_i, \tilde w_i)$. Then, we can use the data not used in each stage to evaluate the out-of-sample performance of the other stage. Specifically, let Algorithm~\ref{algo} converges at $t=T$, and the regularizing parameters are given by
\begin{align*}
    &\lambda_1^* = \argmin \mathcal{L}_{1\text{-oos}}, \quad \mathcal{L}_{1\text{-oos}} = \frac1n \sum_{j=1}^n \left\|\vec{\psi}_{\theta_{W}^{(T)}}(\tilde w_j) - \hat{\vec{V}}^{(T)}\left(\vec{\phi}_{\theta^{(T)}_{A(1)}}(\tilde a_i) \otimes \vec{\phi}_{\theta^{(T)}_Z}(\tilde z_i)\right)\right\|^2,\\
    &\lambda_2^* = \argmin \mathcal{L}_{2\text{-oos}}, \\
    &\quad \mathcal{L}_{2\text{-oos}} = \frac1n \sum_{i=1}^n \left(y_i - (\vec{u}^{(T)})^\top \left(\vec{\psi}_{\theta^{(T)}_{A(2)}}(a_i) \otimes \left(\hat{\vec{V}}^{(T)} \left(\vec{\phi}_{\theta^{(T)}_{A(1)}}(a_i) \otimes\vec{\phi}_{\theta^{(T)}_{Z}}(z_i)\right)\right)\right)\right)^2
\end{align*}
where $\theta_{Z}^{(T)}, \theta_{A(1)}^{(T)}, \theta_{W}^{(T)}, \theta_{A(2)}^{(T)}, \hat{\vec{V}}^{(T)}, \vec{u}^{(T)}$ are the learned parameters by Algorithm~\ref{algo}.

\section{Identifiability} \label{sec:identifiability}
In this appendix, we prove propositions given in the main text.
In the following, we assume that the spaces $\mathcal{U}$, $\mathcal{A}$, $\mathcal{Z}, \mathcal{W}$ are separable and completely metrizable topological spaces and equipped with Borel $\sigma$-algebras.
In this section, we use the notation $P_{A\given Z=z}$ to express the distribution of a random variable $A$ given another variable $Z=z$. 

\subsection{Existence of bridge function}
First, we discuss conditions to guarantee the existence of the bridge function $h$. Let us consider the following operators:
\begin{align*}
E_{a}:L^{2}(P_{W\given A=a})\to L^{2}(P_{Z|A=a}),\ E_{a}f & \coloneqq\expect{f(W)\given A=a,Z=\cdot},\\
F_{a}:L^{2}(P_{Z\given A=a})\to L^{2}(P_{W|A=a}),\ F_{a}g & \coloneqq\expect{g(Z)\given A=a,W=\cdot},
\end{align*}
where the conditional expectations are identified as equivalent classes
with natural inclusion into their corresponding $L^{2}$ spaces. Our
goal is to show that $\expect{Y\given A=a,Z=\cdot}$ is in the range of
$E_{a},$ i.e., we seek a solution of the inverse problem defined
by
\begin{equation}
E_{a} h=\expect{Y|A=a,Z=\cdot}.\label{eq:inteq}
\end{equation}
This suffices to prove the existence of the function $h,$ for if
there exists a function $h^*_{a}$ for each $a\in{\cal A}$ such that
\[
\expect{h^*_a|A=a,Z=\cdot}=\expect{Y|A=a,Z=\cdot},
\]
we can define $h^*(a,w)\coloneqq h^*_{a}(w).$ 
The existence of the bridge function corresponds to the results of \citet[][Lemma 1.1.b]{Deaner2018Proxy} and \citet[][Propisition 1]{Miao2018Identifying}.
For simplicity, in constrast to \citet[][]{Deaner2018Proxy}, we assume that there is no shared quantity between the proxy variables $W$ and $Z$. 
Following the proofs of the previous works \cite[][Appendix C, p. 57]{Deaner2018Proxy} and \cite[][Appendix 6]{Miao2018Identifying}, we aim to solve the integral equation in \eqref{eq:inteq}.

We will make use of the
following theorem on the existence of a solution of a linear integral
equation \cite[Theorem 15.18]{Kress1999linear}. 
\begin{prop}[{\cite[Theorem 15.18]{Kress1999linear}}] \label{prop:picard}Let $\mathcal{X}$ and $\mathcal{Y}$ be Hilbert spaces.
Let $E:\mathcal{X}\to \mathcal{Y}$ be a compact linear operator with singular system $\{(\mu_{n},\varphi_{n},g_{n})\}_{n=1}^{\infty}.$
The equation of the first kind 
\[
E\varphi=f
\]
 is solvable if and only if $f\in N(E^{*})^{\perp}$ and 
\[
\sum_{n=1}^{\infty}\frac{1}{\mu_{n}^{2}}\lvert\la f,g_{n}\ra\rvert^{2}<\infty.
\]
Here, $N(E^*)$ denotes the null space of the operator $E^*$. 
 Then a solution is given by 
\[
\phi=\sum_{n=1}^{\infty}\frac{1}{\mu_{n}}\la f,g_{n}\ra\varphi_{n}.
\]
\end{prop}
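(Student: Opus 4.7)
The plan is to exploit the singular value decomposition (SVD) of the compact linear operator $E$, which yields a singular system $\{(\mu_n, \varphi_n, g_n)\}$ with $E\varphi_n = \mu_n g_n$, $E^*g_n = \mu_n \varphi_n$, $\mu_n > 0$ and $\mu_n \to 0$. The key structural facts I will invoke are: $\{\varphi_n\}$ is a complete orthonormal system for $N(E)^\perp$; $\{g_n\}$ is a complete orthonormal system for $\overline{\mathrm{Range}(E)}$; and $\overline{\mathrm{Range}(E)} = N(E^*)^\perp$. With these in hand, both directions reduce to manipulations of Fourier coefficients relative to the two orthonormal systems.

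For necessity, suppose $E\varphi = f$ for some $\varphi \in \mathcal{X}$. Then for each $n$,
\[
\la f, g_n\ra = \la E\varphi, g_n\ra = \la \varphi, E^* g_n\ra = \mu_n \la \varphi, \varphi_n\ra,
\]
so $\mu_n^{-2}\lvert\la f,g_n\ra\rvert^2 = \lvert\la \varphi,\varphi_n\ra\rvert^2$, and Bessel's inequality applied to $\varphi$ with the ONS $\{\varphi_n\}$ gives the Picard summability condition. The orthogonality $f \in N(E^*)^\perp$ is immediate: for any $\psi \in N(E^*)$, $\la f,\psi\ra = \la E\varphi,\psi\ra = \la \varphi, E^*\psi\ra = 0$.

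For sufficiency, assume both conditions hold and define the candidate
\[
\varphi \;:=\; \sum_{n=1}^\infty \frac{1}{\mu_n}\la f,g_n\ra\,\varphi_n.
\]
Convergence in $\mathcal{X}$ follows because the partial sums are Cauchy:
\[
\Bigl\lVert\sum_{n=N}^{M}\tfrac{1}{\mu_n}\la f,g_n\ra\,\varphi_n\Bigr\rVert^2 = \sum_{n=N}^{M}\tfrac{1}{\mu_n^2}\lvert\la f,g_n\ra\rvert^2 \xrightarrow{N\to\infty} 0
\]
by the Picard condition and orthonormality. Since $E$ is bounded hence continuous, I can apply $E$ term-by-term:
\[
E\varphi = \sum_{n}\tfrac{1}{\mu_n}\la f,g_n\ra\, E\varphi_n = \sum_{n}\la f,g_n\ra\, g_n.
\]
Finally, the orthogonality hypothesis places $f \in N(E^*)^\perp = \overline{\mathrm{Range}(E)}$, which is exactly the closed subspace for which $\{g_n\}$ is an ONB, so $f = \sum_n \la f,g_n\ra g_n$ and therefore $E\varphi = f$.

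The main obstacle is the structural lemma that $\{g_n\}$ constitutes a complete ONB of $\overline{\mathrm{Range}(E)}$, together with the identity $\overline{\mathrm{Range}(E)} = N(E^*)^\perp$. This relies on applying the spectral theorem to the compact self-adjoint operator $E^*E$ (respectively $EE^*$), decomposing $\mathcal{X} = N(E)\oplus \overline{\mathrm{Range}(E^*)}$ and $\mathcal{Y} = N(E^*)\oplus \overline{\mathrm{Range}(E)}$, and checking that the eigenvectors associated with strictly positive eigenvalues of $E^*E$ are precisely the $\varphi_n$ while their normalised images $\mu_n^{-1}E\varphi_n$ are the $g_n$. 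Once this SVD machinery is in place, the rest of the proof reduces to the Fourier-coefficient computations above.
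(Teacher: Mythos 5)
The paper does not prove this proposition: it is imported verbatim from Kress (Theorem 15.18) as an external tool, so there is no in-paper argument to compare against. Your proof is the standard textbook argument and is correct --- necessity via $\langle f,g_n\rangle=\mu_n\langle\varphi,\varphi_n\rangle$ together with Bessel's inequality, and sufficiency via norm-convergence of the Picard series, continuity of $E$, and the structural fact that $\{g_n\}$ is a complete orthonormal system for $\overline{\mathrm{Range}(E)}=N(E^{*})^{\perp}$, which you correctly isolate as the one nontrivial input and justify via the spectral theorem for $E^{*}E$.
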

To apply Proposition \ref{prop:picard}, we make the following additional
assumptions.
\begin{assum}
\label{assu:cond-exp-compactness} For each $a\in{\cal A},$ the operator
$E_{a}$ is compact with singular system $\{(\mu_{a,n},\varphi_{a,n},g_{a,n})\}_{n=1}^{\infty}.$ 
\end{assum}

\begin{assum}
\label{assu:cond-exp-L2} For each $a\in{\cal A},$ the conditional
expectation $f_{Y|a}\coloneqq\expect{Y|A=a,Z=\cdot}$ satisfies
\[
\sum_{n=1}^{\infty}\frac{1}{\mu_{a,n}^{2}}\lvert\la f_{Y|a},g_{a,n}\ra_{L^{2}(P_{Z\given A=a})}\rvert^{2}<\infty,
\]
for a singular system $\{(\mu_{a,n},\phi_{a,n},g_{a,n})\}_{n=1}^{\infty}$
given in Assumption \ref{assu:cond-exp-compactness}. 
\end{assum}

\begin{rem}
Assumption~\ref{assu:cond-exp-compactness} is a minimal requisite to apply Proposition~\ref{prop:picard}. 
The existing works \cite[][]{Miao2018Identifying, Deaner2018Proxy} assume stronger conditions; the operator $E_a$ is assumed to be Hilbert-Schmidt, which implies the compactness. 
\citet[][Assumption A.1]{Deaner2018Proxy} assumes that
the joint distribution of $W$ and $Z$ conditioned on $a$ is absolutely
continuous with respect to the product measure of $P_{W|A=a}$ and
$P_{Z\given A=a},$ and its density is square integrable. The density
function serves as the integral kernel of the operator $E_{a}$ whose
$L^{2}(P_{W|A=a}\otimes P_{Z\given A=a})$-norm corresponds to the
Hilbert-Schmidt norm of the operator, where we use $\otimes$ to denote the product measure. 
On the other hand, in the setting of \citet[][]{Miao2018Identifying}, all probability distributions have densities with respect to the Lebesgue measure. The operator $E_a$ (denoted by $K_x$ in \cite[][]{Miao2018Identifying}) is defined by the relevant densities accordingly (see the paragraph after Lemma 2 of \cite[][]{Miao2018Identifying}). 
The compactness is, as in \cite{Deaner2018Proxy}, established by the square-integrability of the integral kernel, which implies that the operator $E_a$ is Hilbert-Schmidt (see Condition A1 in \cite{Miao2018Identifying}).  
\end{rem}

It is easy to see that Assumptions~\ref{assu:cond-exp-compactness}~and~\ref{assu:cond-exp-L2}
are required for using Proposition~\ref{prop:picard}. The remaining condition
to show is that $\expect{Y\given A=a,Z=\cdot}$ is in $N(E_{a}^{*})^{\perp}.$
We show that the structural assumption (Assumption~\ref{assum:stuctural}) and completeness assumption (Assumption~\ref{assum:completeness-confounder}) imply the required condition.
The result below closely follows \cite{Deaner2018Proxy} in that we share the same completeness condition. 

\begin{lem}
\label{lem:null-space-assumption} Under Assumptions \ref{assum:stuctural} and \ref{assum:completeness-confounder},
the conditional expectation $\expect{Y\given A=a,Z=\cdot}$ is in the
orthogonal complement of the null space $N(E_{a}^{*}).$
\begin{proof}
We first show that the adjoint of $E_{a}$ is given by $F_{a}.$ For
the operator $E_{a},$ any $f\in L_{2}(P_{W|A=a})$ and $g\in L_{2}(P_{Z|A=a}),$
we have 
\begin{align*}
\inner{E_{a}f}g_{L_{2}(P_{Z\given A=a})} & =\expect[Z|A=a]{\expect{f(W)\given A=a,Z}g(Z)}\\
& =\expect[Z|A=a]{\expect[U|A=a, Z]{\expect{f(W)\given A=a,Z,U}}g(Z)}\\
& \overset{\mathrm{(a)}}{=}\expect[Z|A=a]{\expect[U|A=a, Z]{\expect{f(W)\given A=a,U}}g(Z)}\\
& =\expect[U, Z|A=a]{\expect{f(W)\given A=a,U}g(Z)}\\
& =\expect[U|A=a]{\expect{f(W)\given A=a,U}\expect{g(Z)|A=a,U}}
\end{align*}
where (a) follows from $W \indepe (A,Z) | U$, which is from Assumption~\ref{assum:stuctural}. Similarly,
\begin{align*}
 & \inner f{F_{a}g}_{L^{2}(P_{W|A=a})}=\expect[W|A=a]{f(W) \expect{g(Z)\given A=a, W}}\\
 &=\expect[W|A=a]{f(W) \expect[U|A=a, W]{\expect{g(Z)\given A=a, W, U}}}\\
 &\overset{\mathrm{(b)}}{=} \expect[W|A=a]{f(W) \expect[U|A=a, W]{\expect{g(Z)\given A=a, U}}}\\
 &=\expect[W,U|A=a]{f(W)\expect{g(Z)\given A=a, U}}\\
 &=\expect[U|A=a]{\expect{f(W)\given A=a, U} \expect{g(Z)\given A=a, U}}\\
 & =\inner{E_{a}f}g_{L_{2}(P_{Z\given A=a})}.
\end{align*}
Again, (b) is given by $W \indepe A,Z | U$ from Assumption~\ref{assum:stuctural}.
For any $f^{*}\in N(E^*_{a}) = N(F_a),$ by iterated expectations, we have 
\begin{align}
0 & =\expect{f^{*}(Z)|A=a,W=\cdot}\nonumber \\
 & =\expect[U]{\expect{f^{*}(Z)\given A,U,W}\given A=a,W=\cdot}\nonumber \\
 & =\expect[U]{\expect{f^{*}(Z)\given A,U} \given A=a,W=\cdot}.\label{eq:fstar-intermediate}
\end{align}

From Assumption~\ref{assum:completeness-confounder},
\begin{align*}
    \expect{l(U) \mid A=a, W=w}=0\quad \forall (a,w) \in \mathcal{A} \times \mathcal{W} \quad\Leftrightarrow \quad l(u) = 0 \quad P_{U}\text{-}\mathrm{a.e.}
\end{align*}
for all functions $l \in L^2(P_{U|A=a})$. Note that $\expect{f^{*}(Z)\given A,U} \in  L^2(P_{U|A=a})$ since
\begin{align*}
\expect[U|A]{\EE[f^{*}(Z)\given A,U]^{2}} & \leq\expect[U|A]{\expect{f^{*}(Z)^{2}\given A,U}}\quad (\because \text{Jensen's Inequality})\\
 & =\expect{f^{*}(Z)^{2}\given A}<\infty
\end{align*}
Hence, \eqref{eq:fstar-intermediate} and Assumption~\ref{assum:completeness-confounder}
implies 
\[
\expect{f^{*}(Z)\given A=a,U=\cdot}=0 \quad P_{U}\text{-}\mathrm{a.s.}
\]
Then, the inner product between $f^{*}$ and $\expect{Y\given A=a,Z=\cdot}$
is given as follows:
\begin{align*}
\inner{f^{*}}{\expect{Y\given A=a,Z=\cdot}}_{L^{2}(P_{Z|A=a})} & =\expect[Z|A=a]{f^{*}(Z)\expect{Y\given A=a,Z}}\\
 & =\expect[Z|A=a]{f^{*}(Z)\expect[U|A=a,Z]{\expect{Y\given A=a,Z,U}}}\\
 & \overset{(c)}{=}\expect[Z|A=a]{f^{*}(Z)\expect[U|A=a,Z]{\expect{Y\given A=a,U}}}\\
 & =\expect[U, Z|A=a]{f^{*}(Z){\expect{Y\given A=a,U}}}\\
 & =\expect[U|A=a]{\expect[Z|U,A=a]{f^{*}(Z)}{\expect{Y\given A=a,U}}}\\
 & =0.
\end{align*}
Again (c) holds from $Y \indepe Z | A, U$ in Assumption~\ref{assum:stuctural}. Hence, we have 
\[
\expect{Y\given A=a,Z=\cdot}\in N(E_{a}^{*})^{\perp}.
\]
\end{proof}
\end{lem}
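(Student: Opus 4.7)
The plan is to show that $\mathbb{E}[Y \mid A=a, Z=\cdot]$ is orthogonal in $L^2(P_{Z|A=a})$ to every element of $N(E_a^*)$, and the strategy splits into three pieces: identify the adjoint of $E_a$, use completeness on the confounder to extract a zero-expectation condition in $U$ from any element of the adjoint null space, then compute the target inner product and see it collapse to zero.

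First I would identify $E_a^{*}=F_a$. For arbitrary $f \in L^2(P_{W|A=a})$ and $g \in L^2(P_{Z|A=a})$, I would expand $\langle E_a f, g\rangle_{L^2(P_{Z|A=a})}$ by inserting an iterated expectation over $U$ conditional on $(A,Z)$ and apply $W \indepe (A,Z) \mid U$ from Assumption~\ref{assum:stuctural} to reduce $\mathbb{E}[f(W)\mid A=a,Z,U]$ to $\mathbb{E}[f(W)\mid A=a,U]$. The result is $\mathbb{E}_{U|A=a}\!\left[\mathbb{E}[f(W)\mid A=a,U]\,\mathbb{E}[g(Z)\mid A=a,U]\right]$, which is manifestly symmetric in $(f,g)$, and the analogous manipulation of $\langle f, F_a g\rangle_{L^2(P_{W|A=a})}$ lands on the same expression, so $E_a^{*}=F_a$ and hence $N(E_a^{*})=N(F_a)$.

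Second, for any $f^{*}\in N(F_a)$ we have $\mathbb{E}[f^{*}(Z)\mid A=a,W=\cdot]=0$. Inserting a further iterated expectation over $U$ given $(A,W)$ and again invoking the same conditional independence to drop $(A,W)$ from the inner conditioning yields $\mathbb{E}\!\left[\mathbb{E}[f^{*}(Z)\mid A=a,U] \bigm| A=a,W=\cdot\right]=0$. Before invoking Assumption~\ref{assum:completeness-confounder} to conclude $\mathbb{E}[f^{*}(Z)\mid A=a,U]=0$ $P_U$-a.s., I need to verify that $u\mapsto \mathbb{E}[f^{*}(Z)\mid A=a,U=u]$ lies in $L^2(P_{U|A=a})$, which follows from a conditional Jensen inequality applied to the square-integrable $f^{*}$. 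Finally, for the target inner product I expand $\mathbb{E}[Y\mid A=a,Z]$ by conditioning on $U$ and use $Y\indepe Z\mid A,U$ from Assumption~\ref{assum:stuctural} to replace the inner conditional by $\mathbb{E}[Y\mid A=a,U]$; swapping the order of integration over $Z$ and $U$ then factors the inner product as $\mathbb{E}_{U|A=a}\!\left[\mathbb{E}[Y\mid A=a,U]\cdot \mathbb{E}[f^{*}(Z)\mid A=a,U]\right]$, which is zero by the previous step.

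The algebra is routine once the ingredients are in place; what I expect to be delicate is the measure-theoretic bookkeeping --- justifying each interchange of iterated expectations and ensuring that every intermediate object lives in the precise $L^2$ space in which the completeness assumption is posed. The assumption that $\mathcal{U},\mathcal{A},\mathcal{Z},\mathcal{W}$ are Polish spaces with Borel $\sigma$-algebras guarantees regular conditional distributions, which is what makes these manipulations legitimate, and I would invoke it implicitly at each step rather than spelling it out.
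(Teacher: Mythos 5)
Your proposal is correct and follows essentially the same route as the paper's proof: identifying $E_a^* = F_a$ via the symmetric expression $\mathbb{E}_{U|A=a}\left[\mathbb{E}[f(W)\mid A=a,U]\,\mathbb{E}[g(Z)\mid A=a,U]\right]$, extracting $\mathbb{E}[f^*(Z)\mid A=a,U=\cdot]=0$ from the completeness assumption (with the same Jensen-inequality check of square-integrability), and collapsing the target inner product by conditioning on $U$. No substantive differences.
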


\begin{rem}
The difference between the first condition in Assumption \ref{assum:completeness-confounder} and
Condition 3 in \cite[][]{Miao2018Identifying} is in the approach
to establishing that the conditional expectation $\EE[Y\given A=a,Z=\cdot]$
belongs to $N(F_{a})^{\perp}.$ More specifically, Condition 3 in
\cite{Miao2018Identifying} is equivalent to having $N(F_{a})=\{0\}$ and so
any nontrivial $L^{2}(P_{Z|A=a})$-function is in the orthogonal complement.
Assumption \ref{assum:completeness-confounder} does not require $N(F_{a})=\{0\}$
but only implies 
\[
N(F_{a})\subset\{f\in L^{2}(P_{Z|A=a}):\EE[f(Z)\given A=a,U=\cdot]=0\}.
\]
For our identifiability proof, we use the weaker condition, Assumption~\ref{assum:completeness-confounder}. 
\end{rem}

Now, we are able to apply Proposition \ref{prop:picard}, leading to the following 
lemma.
\begin{lem} \label{lem:exisistence}
Under Assumptions \ref{assum:stuctural}, \ref{assum:completeness-confounder}, \ref{assu:cond-exp-compactness} and \ref{assu:cond-exp-L2}, for each $a\in{\cal A},$ there exists a function $h^*_{a}\in L_{2}(P_{W|A=a})$
such that 
\[
\expect{Y\given A=a,Z=\cdot}=\expect{h^*_{a}(W)\given A=a,Z=\cdot}.
\]
\begin{proof}
By Lemma \ref{lem:null-space-assumption}, the regression function
$\expect{Y\given A=a,Z=\cdot}$ is in $N(E_{a}^{*})^{\perp}.$ Therefore,
by Proposition \ref{prop:picard}, under the given assumptions, there exists a solution to \eqref{eq:inteq}. Letting the
solution be $h^*_{a}$ completes the proof. 
\end{proof}
\end{lem}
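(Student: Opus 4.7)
The plan is to reduce the existence question to Picard's criterion (Proposition~\ref{prop:picard}) applied, for each fixed $a \in \mathcal{A}$, to the compact operator $E_a : L^2(P_{W|A=a}) \to L^2(P_{Z|A=a})$ with right-hand side $f_{Y|a} \coloneqq \expect{Y | A=a, Z=\cdot}$. Two of the three hypotheses of Proposition~\ref{prop:picard} are assumed outright: compactness with a singular system $\{(\mu_{a,n},\varphi_{a,n},g_{a,n})\}$ is Assumption~\ref{assu:cond-exp-compactness}, and the summability condition $\sum_n \mu_{a,n}^{-2} |\langle f_{Y|a}, g_{a,n}\rangle|^2 < \infty$ is Assumption~\ref{assu:cond-exp-L2}. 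Once a solution exists for each $a$, defining $h^*(a,w) \coloneqq h^*_a(w)$ yields the claim. So the entire burden reduces to showing the range condition $f_{Y|a} \in N(E_a^*)^\perp$.

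To handle the range condition, I would first identify the adjoint explicitly as $E_a^* = F_a$. Expanding $\langle E_a f, g\rangle_{L^2(P_{Z|A=a})}$ by iterated expectations and using $W \indepe (A,Z) \mid U$ from Assumption~\ref{assum:stuctural} collapses the inner expectation, giving the symmetric form
\[
\langle E_a f, g\rangle = \expect[U|A=a]{\expect{f(W)|A=a,U}\,\expect{g(Z)|A=a,U}},
\]
and the same manipulation applied to $\langle f, F_a g\rangle$, again invoking $W \indepe (A,Z)\mid U$, produces the same expression. Hence $E_a^* = F_a$, and the range condition becomes $f_{Y|a} \perp N(F_a)$.

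Next, for any $f^* \in N(F_a)$, iterated expectations through $U$ combined with $Z \indepe W \mid A,U$ (implied by $W \indepe (A,Z)\mid U$) give $\expect[U]{\expect{f^*(Z) \mid A,U} \mid A=a, W=\cdot} = 0$. Setting $l(U) \coloneqq \expect{f^*(Z) \mid A=a, U}$, Jensen's inequality shows $l \in L^2(P_{U|A=a})$, so the first clause of the completeness Assumption~\ref{assum:completeness-confounder} forces $l \equiv 0$ $P_U$-a.e. Finally, I compute
\[
\langle f^*, f_{Y|a}\rangle_{L^2(P_{Z|A=a})} = \expect[U|A=a]{\expect{f^*(Z)\mid A=a, U}\,\expect{Y\mid A=a, U}},
\]
where the step of pulling $Y$ into a conditional on $(A,U)$ rather than $(A,Z,U)$ uses $Y \indepe Z \mid A,U$ from Assumption~\ref{assum:stuctural}; the first factor vanishes $P_U$-a.e., so the integral is zero. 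This establishes $f_{Y|a} \in N(E_a^*)^\perp$, and Proposition~\ref{prop:picard} delivers the required $h^*_a$.

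The main obstacle is the range-condition step: it requires chaining the two halves of Assumption~\ref{assum:stuctural} with Assumption~\ref{assum:completeness-confounder} while keeping careful track of which $L^2$ space each conditional expectation lives in (so that completeness can legitimately be invoked), and treating $a$ as a parameter rather than a random variable. Identifying $E_a^* = F_a$ and verifying the $L^2(P_{U|A=a})$-membership of $\expect{f^*(Z)\mid A=a,U}$ via Jensen are the two bookkeeping points that a sloppy write-up would easily miss.
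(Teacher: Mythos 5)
Your proposal is correct and follows essentially the same route as the paper: it identifies $E_a^* = F_a$ via the symmetrized inner product using $W \independent (A,Z)\mid U$, uses the completeness condition (the clause conditioning on $(A,W)$) together with a Jensen argument to show any $f^*\in N(F_a)$ satisfies $\expect{f^*(Z)\mid A=a,U=\cdot}=0$, deduces $\expect{Y\mid A=a,Z=\cdot}\in N(E_a^*)^\perp$ via $Y\independent Z\mid A,U$, and then invokes Proposition~\ref{prop:picard} under Assumptions~\ref{assu:cond-exp-compactness} and~\ref{assu:cond-exp-L2}. This is exactly the content of Lemma~\ref{lem:null-space-assumption} followed by the paper's one-line application of Picard's criterion.
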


\subsection{Identifiability}
Here, we show that the bridge function $h$ can be used to compute the various causal quantities. In addition to the assumptions required for the existence of bridge function listed in the previous section, we assume the conditional expectation of outcome given treatment is square integrable.

\begin{assum}
\label{assu:Y-square-integrable} For each $a\in{\cal A},$ under the
observational distribution, we have 
\[
\expect{Y^{2}\given A=a}<\infty.
\]
\end{assum}
Given the assumption, we can prove the following theorem.

\begin{thm}
[Identifiability] \label{thm:proxy-identifiablity} Assume that there
exists a function $h:{\cal A}\times{\cal W}\to\mathbb{R}$ such that
for each $a\in{\cal A},$ the function $h(a,\cdot)$ is in $L^{2}(P_{W\given A=a})$
and satisfies 
\[
\expect{Y|A=a,Z=\cdot}=\expect{h(a,W)\given A=a,Z=\cdot} \quad P_{Z\given A=a}\text{-a.s.}
\]
Under Assumptions \ref{assum:stuctural}, \ref{assum:completeness-confounder} and \ref{assu:Y-square-integrable},
we have 
\begin{equation}
\expect[W|U]{h(a,W)}=\expect{Y\given A=a,U}.\label{eq:h-equal-y-given-u}
\end{equation}

\begin{proof}
Note that we have
\begin{align*}
\expect{h(a,W)\given A=a,Z} & =\expect[U|A=a, Z]{\expect{h(a,W)\given A=a,Z,U}}\\
 & =\expect[U|A=a,Z]{\expect{h(a,W)\given U}},
\end{align*}
\begin{align*}
\expect{Y\given A=a,Z} & =\expect[U|A=a, Z]{\expect{Y\given A=a,Z,U}}\\
 & =\expect[U|A=a, Z]{\expect{Y\given A=a, U}},
\end{align*}
where the second line of each equation follows from Assumption~\ref{assum:stuctural}.
Moreover, by assumption, we have 
\begin{align*}
\expect[U|A=a]{\expect{h(a,W)\given U}^{2}} & \leq \expect[W|A=a]{h(a,W)^{2}}<\infty,\ \text{and}\\
\expect[U|A=a]{\expect{Y\given A=a, U}^{2}} & \leq \expect{Y^{2} \given A=a}<\infty.
\end{align*}
Note from Assumption~\ref{assum:completeness-confounder}, we have
\[
\expect{l(U) \mid A=a, Z=z}=0\quad \forall (a,z) \in \mathcal{A} \times \mathcal{Z} \quad\Leftrightarrow \quad l(u) = 0\quad P_{U}\text{-a.e.}
\]
Therefore, by setting $l(u) = \expect{Y\given A=a, U=u} - \expect{h(a,W)\given U=u}$, for all $a \in \mathcal{A}$, we have
\begin{align*}
    &\expect{h(a,W)\given A=a,Z=\cdot} = \expect{Y\given A=a,Z=\cdot} \quad P_{Z\given A=a}\text{-a.s.}\\
    \Leftrightarrow & \expect[U|A=a,Z=\cdot]{\expect{h(a,W)\given U} - \expect{Y\given A=a,U}} = 0 \quad  P_{Z\given A=a}\text{-a.s.}\\
    \Leftrightarrow & \expect[W|U=\cdot]{h(a,W)} = \expect{Y\given A=a,U=\cdot} \quad  P_{U}\text{-a.s.}
\end{align*}
\end{proof}
\end{thm}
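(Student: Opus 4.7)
The plan is to reduce the identity to the conditional completeness assumption by conditioning both sides of the hypothesis on the latent confounder $U$ and exploiting the conditional independences from Assumption~\ref{assum:stuctural}. More specifically, I would start from the given functional equation $\expect{Y\given A=a,Z=\cdot}=\expect{h(a,W)\given A=a,Z=\cdot}$ and, on each side, insert a tower expansion through $U$: write
\[
\expect{h(a,W)\given A=a,Z}=\expect[U\given A=a,Z]{\expect{h(a,W)\given A=a,Z,U}},
\]
and apply $W\indepe (A,Z)\given U$ to collapse the inner expectation to $\expect{h(a,W)\given U}$. Dually, use $Y\indepe Z\given A,U$ on the left-hand side to obtain $\expect{Y\given A=a,Z}=\expect[U\given A=a,Z]{\expect{Y\given A=a,U}}$.

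Having done this, subtract the two identities to see that
\[
\expect[U\given A=a,Z]{\,\expect{Y\given A=a,U}-\expect{h(a,W)\given U}\,}=0
\quad P_{Z\given A=a}\text{-a.s.}
\]
for every $a\in{\cal A}$. Setting $l(u)\coloneqq \expect{Y\given A=a,U=u}-\expect{h(a,W)\given U=u}$, I would then invoke the first part of Assumption~\ref{assum:completeness-confounder} to conclude $l(U)=0$ $P_U$-a.e., which is exactly \eqref{eq:h-equal-y-given-u}.

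The one technical point to check, and the main (though mild) obstacle, is the square-integrability of $l$ needed to apply completeness. This is where Assumption~\ref{assu:Y-square-integrable} and the hypothesis $h(a,\cdot)\in L^2(P_{W\given A=a})$ come in. Both follow from Jensen's inequality: $\expect[U\given A=a]{\expect{Y\given A=a,U}^2}\le\expect{Y^2\given A=a}<\infty$, and similarly $\expect[U\given A=a]{\expect{h(a,W)\given U}^2}\le\expect{h(a,W)^2\given A=a}<\infty$ (using that the conditional law of $W$ given $U$ coincides with its conditional law given $A=a,U$ by Assumption~\ref{assum:stuctural}). Thus $l\in L^2(P_{U\given A=a})$, completeness applies, and the conclusion follows. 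No other machinery appears to be needed; the proof is essentially an exercise in iterated expectations plus one invocation of completeness.
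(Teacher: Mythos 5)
Your proposal matches the paper's proof essentially step for step: the same tower expansion through $U$ using the structural independences, the same Jensen-based square-integrability check, and the same invocation of completeness applied to $l(u)=\expect{Y\given A=a,U=u}-\expect{h(a,W)\given U=u}$. The only slip is cosmetic: the relevant clause of Assumption~\ref{assum:completeness-confounder} is the one conditioning on $(A,Z)$ (the second displayed condition), not the first, but your displayed equation makes clear you are applying the right one.
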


Using Theorem~\ref{thm:proxy-identifiablity}, we can show following two corollaries, which are used in the main body.
\begin{corollary} \label{cor:ate}
Let the assumptions in Theorem~\ref{thm:proxy-identifiablity} hold. Given a bridge function $h^*$, we can estimate structural function $f_{\mathrm{struct}}$ as 
\begin{align*}
    f_{\mathrm{struct}}(a) &\coloneqq \expect[U]{\expect{Y|A=a,U}}\\
    &= \expect[W]{h^*(a,W)}
\end{align*}
\begin{proof}
From Theorem~\ref{thm:proxy-identifiablity}, we have
\begin{align*}
    \expect[U]{\expect{Y|A=a,U}} = \expect[U]{\expect[W|U]{h^*(a,W)}} = \expect[W]{h^*(a,W)}
\end{align*}
\end{proof}
\end{corollary}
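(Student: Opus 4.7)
My plan is to derive the identity directly from Theorem~\ref{thm:proxy-identifiablity} by integrating the pointwise (in $U$) equality it supplies against the marginal law of $U$. That theorem gives
\[
\expect[W|U]{h^*(a,W)} = \expect{Y \given A=a, U} \qquad P_{U}\text{-a.s.}
\]
for each fixed $a \in \mathcal{A}$, so the corollary should drop out after one application of the tower property, with essentially no new ingredient beyond what is already packaged in the theorem.

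First, I would fix an arbitrary $a \in \mathcal{A}$ and apply Theorem~\ref{thm:proxy-identifiablity}, which is available because $h^*$ is by hypothesis a bridge function and the standing assumptions (Assumptions~\ref{assum:stuctural}, \ref{assum:completeness-confounder}, \ref{assu:Y-square-integrable}) are in force. Then I would take $\expect[U]{\cdot}$ of both sides; this is unproblematic because the $P_U$-a.s.\ qualifier is invisible to $\expect[U]{\cdot}$. On the right-hand side, $\expect[U]{\expect{Y \given A=a, U}}$ is literally the definition of $f_{\mathrm{struct}}(a)$.

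The left-hand side becomes $\expect[U]{\expect[W|U]{h^*(a,W)}}$, which I would collapse to $\expect[W]{h^*(a,W)}$ via the tower property on the joint law $P_{U,W}$. One minor point worth flagging: the inner conditional expectation in Theorem~\ref{thm:proxy-identifiablity} is taken with respect to $W \given U$ rather than $W \given A, U$, but Assumption~\ref{assum:stuctural} (which contains $W \indepe (A,Z) \given U$) makes these agree, so pulling $U$ out of the outer expectation is legitimate. I do not expect any real obstacle here; once Theorem~\ref{thm:proxy-identifiablity} is established, the corollary is essentially a one-line application of iterated expectations, and all of the conceptual difficulty has already been absorbed into the existence and identification argument for the bridge function.
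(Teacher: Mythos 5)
Your proposal is correct and follows exactly the paper's argument: apply Theorem~\ref{thm:proxy-identifiablity} to get $\expect[W|U]{h^*(a,W)}=\expect{Y\given A=a,U}$ $P_U$-a.s., take the expectation over $U$, and collapse $\expect[U]{\expect[W|U]{h^*(a,W)}}$ to $\expect[W]{h^*(a,W)}$ by the tower property. The paper's proof is precisely this one-line chain of equalities, so there is nothing to add.
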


\begin{rem}
The above corollary corresponds to the identifiablity results in obtained in the previous works \cite{Miao2018Identifying, Deaner2018Proxy}. We follow the proof of Theorem 1.1.a in \cite[][Appendix B]{Deaner2018Proxy} (See also, Theorem 1 and Appendix 3 of \cite{Miao2018Identifying}).

\end{rem}

\begin{corollary} \label{cor:value-func}
Assume we are given a bridge function $h^*(a,w)$ that is jointly measurable. 
Suppose $C \indepe W |U$. 
With the assumptions in Theorem~\ref{thm:proxy-identifiablity}, 
we can write the value function of policy $\pi(C)$ as
\begin{align*}
    v(\pi) &\coloneqq \expect[C, U]{\expect{Y|A=\pi(C),U}}\\
    &= \expect[C]{\expect[W|C]{h(\pi(C),W)}}
\end{align*}
\begin{proof}
From Theorem~\ref{thm:proxy-identifiablity}, we have
\begin{align*}
    \expect[C, U]{\expect{Y|A=\pi(C),U}} = \expect[C,U]{\expect[W|U]{h^*(\pi(C),W)}} = \expect[C]{\expect[W|C]{h(\pi(C),W)}},
\end{align*}
where the last equality holds by the conditional independence $C \indepe W |U$.
\end{proof}
\end{corollary}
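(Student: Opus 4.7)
The plan is to chain two ingredients: the pointwise identification result from Theorem~\ref{thm:proxy-identifiablity}, which represents $\expect{Y\given A=a,U}$ as an expectation of $h^*(a,W)$ given $U$, together with the conditional independence $C\indepe W\given U$ that lets me trade conditioning on $U$ for conditioning on $C$. I would begin by unpacking the definition $v(\pi)=\expect[C,U]{\expect{Y\given A=\pi(C),U}}$, and then, for each realization $C=c$, invoke Theorem~\ref{thm:proxy-identifiablity} with $a=\pi(c)$ to replace $\expect{Y\given A=\pi(c),U}$ by $\expect[W\given U]{h^*(\pi(c),W)}$ $P_U$-a.s. Joint measurability of $h^*$ is the hypothesis that makes the resulting function of $(c,U)$ measurable, so the subsequent outer expectation over $C$ is well-defined.

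After this substitution, the identity to establish reduces to
\begin{align*}
\expect[C,U]{\expect[W\given U]{h^*(\pi(C),W)}}=\expect[C]{\expect[W\given C]{h^*(\pi(C),W)}}.
\end{align*}
Here I would exploit $C\indepe W\given U$: this implies that the conditional law of $W$ given $(C,U)$ equals the conditional law of $W$ given $U$, so $\expect[W\given U]{h^*(\pi(C),W)}=\expect[W\given C,U]{h^*(\pi(C),W)}$ almost surely. Once this replacement is made, a single application of the tower property (first integrating out $U$, then $W$) collapses the left-hand side to $\expect[C,W]{h^*(\pi(C),W)}$, which is exactly $\expect[C]{\expect[W\given C]{h^*(\pi(C),W)}}$.

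The only delicate point I anticipate is the measurability and integrability bookkeeping around applying Theorem~\ref{thm:proxy-identifiablity} ``with $a$ a random variable.'' The theorem gives an almost-sure equality in $U$ for each fixed $a\in\mathcal{A}$; to lift this to an equality of expectations when $a$ itself is the random quantity $\pi(C)$, I need $h^*$ to be jointly measurable (given in the hypothesis) and I need the absolute integrability of $h^*(\pi(C),W)$ with respect to the joint law of $(C,W)$. The latter follows from Assumption~\ref{assu:Y-square-integrable} combined with the fact that $h^*(\pi(c),\cdot)\in L^2(P_{W\given A=\pi(c)})$ for each $c$, together with a Cauchy--Schwarz/Fubini argument. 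Apart from this measure-theoretic bookkeeping, the proof is a two-step calculation with no further technical obstacles.
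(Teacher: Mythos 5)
Your proposal is correct and follows essentially the same route as the paper: apply Theorem~\ref{thm:proxy-identifiablity} to rewrite $\expect{Y\given A=\pi(C),U}$ as $\expect[W\given U]{h^*(\pi(C),W)}$, then use $C\independent W\given U$ together with the tower property to convert the conditioning on $U$ into conditioning on $C$. Your additional care about joint measurability of $h^*$ matches the paper's own remark following the corollary, which is precisely why that hypothesis is stated; the rest is the same two-step calculation the paper gives.
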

Note that in the existence claim, the bridge function $h^*(a, w)$ is constructed by aggregating over  $\{h^*_a\}_{a \in \mathcal{A}}$. This construction does not guarantee that the function is measurable with respect to $a$; the lack of measurability renders its expectation undefined. 
Thus, we have additionally assumed the joint measurability of the bridge function in Corollary~\ref{cor:value-func}.
Validating this assumption is crucial theoretically, and we leave it for future work.

\section{Consistency of DFPV algorithm}\label{sec:consistency}
In this appendix, we prove consistency of the DFPV approach. Following \citet{Xu2021Learning}, we establish consistency of the end-to-end procedure incorporating Stages 1 and 2. 
We establish the result by first showing a Stage 1 consistency result (Lemma~\ref{lem:1}), and then establishing the consistency of Stage 2 with the empirical Stage 1 solution used as input (Lemma~\ref{lem:2}).  
The desired result then follows in Theorem~\ref{thm:totalConsistency}. 
Here, we assume the bridge function $h^*(a,w)$ is jointly measurable so that the expectation of $h^*$ is defined. 

Consistency results will be expressed in terms of the complexity of the function classes used in Stages 1 and 2, as encoded in the Rademacher complexity of the function space of these functions (see Proposition \ref{prop:bound} below).  Consistency for particular function classes can then be shown by establishing that the respective Rademacher complexities vanish. We leave for future work the task of demonstrating this property for individual function classes of interest.




\subsection{Operator view of DFPV}

The goal of DFPV is to learn a bridge function $h^*$, which satisfies
\begin{align}
    \expect[Y|A=a, Z=z]{Y} = \expect[W|A=a, Z=z]{h^*(a,W)}. \label{eq:functional}
\end{align}
We use the model \[h(a,w) = \vec{u}^\top (\vec{\psi}_{\theta_{A(2)}}(a) \otimes \vec{\psi}_{\theta_{W}}(w))\] and denote the hypothesis spaces for $\vec{\psi}_{\theta_W}$ and $h$ as $\mathcal{H}_{\vec{\psi}_{\theta_W}}: \mathcal{W}\to \mathbb{R}^{d_W}$ and $\mathcal{H}_h: \mathcal{A}\times\mathcal{W} \to\mathbb{R}$, respectively. 
To learn the parameters, we minimize the following stage 2 loss:
\begin{align*}
    &\hat{\vec{u}}, \hat{\theta}_W, \hat{\theta}_{A(2)} = \argmin_{\vec{u}, \theta_W, \theta_{A(2)}} \hat{\mathcal{L}}_2(\vec{u},\theta_X, \theta_{A(2)}),\\
    &\hat{\mathcal{L}}_2 = \frac1n \sum_{i=1}^n (\tilde y_i - \vec{u}^\top (\vec{\psi}_{\theta_{A(2)}}(\tilde a_i)  \otimes \hatexpect[W|A,Z]{\vec{\psi}_{\theta_W}(W)}(\tilde a_i, \tilde z_i)))^2.
\end{align*}
We denote the resulting estimated bridge function as 
\[
\hat{h}(a,w) = (\hat{\vec{u}})^\top (\vec{\psi}_{\hat{\theta}_{A(2)}}(a) \otimes \vec{\psi}_{\hat{\theta}_W}(w)).
\] 
For simplicity, we set all regularization terms to zero. Here, $\hatexpect[W|A,Z]{\vec{\psi}_{\theta_W}(W)}$ is the empirical conditional expectation operator, which maps an element of $\mathcal{H}_{\vec{\psi}_{\theta_W}}$ to some function $\vec{g}(a,z)$ which is defined as
\begin{align*}
    & \hatexpect[W|A,Z]{\vec{\psi}_{\theta_W}(W)}= \argmin_{\vec{g} \in \mathcal{G}} \hat{\mathcal{L}}_1(\vec{g}; \vec{\psi}_{{\theta}_{W}}),\\
    &\hat{\mathcal{L}}_1 = \frac1n \sum_{i=1}^m \|\vec{\psi}_{{\theta}_{W}}(w_i) - \vec{g}(a_i, z_i)\|^2,
\end{align*}
where $\|\cdot\|$ is the $\ell_2$-norm, and $\mathcal{G}$ is an arbitrary function space.
In DFPV, we specify $\mathcal{G}$ as the set consisting of functions $\vec{g}$ of the form
\[
\vec{g} = {\vec{V}}(\vec{\phi}_{{\theta}_{A(1)}}(a_i) \otimes \vec{\phi}_{{\theta}_{Z}}(z_i)).
\] 
Note that this formulation is equivalent to the one introduced in Section~\ref{sec:algorithm}. With a slight abuse of notation, for $h(a,w) = \vec{u}^\top \vec{\psi}_{\theta_{A(2)}}(a) \otimes \vec{\psi}_{\theta_{W}}(w) \in \mathcal{H}_h$, we define $\hatexpect[W|A,Z]{h}$ to be
\begin{align*}
    \hatexpect[W|A,Z]{h(A,W)}(a,z) = \vec{u}^\top \left(\vec{\psi}_{\theta_{A(2)}}(a) \otimes \hatexpect[W|A,Z]{\vec{\psi}_{\theta_W}}(a,z)\right)
\end{align*}
since this is the empirical estimate of $\expect[W|A,Z]{h(A,W)}$.

\subsection{Generalization errors for regression}
Here, we bound the generalization errors of both stages using Rademacher complexity bounds \citep{FoundationOfML}.

\begin{prop}{\citep[Theorem 3.3][with slight modification]{FoundationOfML}} \label{prop:bound}
    Let $\mathcal{S}$ be a measurable space and $\mathcal{H}$ be a family of functions mapping from $\mathcal{S}$ to $[0, M]$. Given fixed dataset $S = (s_1, s_2, \dots, s_n) \in \mathcal{S}^n$, the empirical Rademacher complexity is given by
    \begin{align*}
        \hat{\mathfrak{R}}_S(\mathcal{H}) = \expect[\vec{\sigma}]{\frac1n \sup_{h\in \mathcal{H}} \sum_{i=1}^n \sigma_i h(s_i)},
    \end{align*}
    where $\vec{\sigma}=(\sigma_1,\dots,\sigma_n)$, 
    with $\sigma_i$ independent  random variables taking values in $\{-1,+1\}$ with equal probability. Then, for any $\delta > 0$, with probability at least $1-\delta$ over the draw of an i.i.d sample $S$ of size $n$,  each of following holds for all $h \in \mathcal{H}$:
    \begin{align*}
        &\expect{h(s)} \leq \frac1n \sum_{i=1}^n h(s_i) + 2 \hat{\mathfrak{R}}_S + 3M \sqrt{\frac{\log 2/\delta}{2n}},\\
        & \frac1n \sum_{i=1}^n h(s_i) \leq \expect{h(s)} + 2 \hat{\mathfrak{R}}_S + 3M \sqrt{\frac{\log 2/\delta}{2n}}.
    \end{align*}
\end{prop}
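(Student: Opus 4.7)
The plan is to follow the classical symmetrization-plus-concentration argument (Bartlett--Mendelson; Mohri et al.). Define the one-sided uniform deviation
\[
\Phi(S) \;:=\; \sup_{h \in \mathcal{H}}\!\Bigl(\mathbb{E}[h(s)] - \tfrac{1}{n}\sum_{i=1}^n h(s_i)\Bigr).
\]
The first step is to verify that $\Phi$ satisfies the bounded-differences condition with constant $M/n$: replacing any single $s_i$ by $s_i'$ changes $\tfrac{1}{n}\sum_j h(s_j)$ by at most $M/n$ for every fixed $h$ (since $h$ ranges in $[0,M]$), and the supremum over $h$ preserves this because $|\sup_h f(h) - \sup_h g(h)| \le \sup_h |f(h)-g(h)|$. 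Applying the one-sided McDiarmid inequality with failure budget $\delta/2$ then gives, with probability at least $1-\delta/2$,
\[
\Phi(S) \;\le\; \mathbb{E}[\Phi(S)] + M\sqrt{\tfrac{\log(2/\delta)}{2n}}.
\]

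The second step bounds $\mathbb{E}[\Phi(S)]$ by the standard symmetrization trick. Introduce a ghost sample $S'=(s'_1,\ldots,s'_n)$ drawn i.i.d.\ from the same law and independent of $S$; rewrite $\mathbb{E}[h(s)] = \mathbb{E}_{S'}[\tfrac{1}{n}\sum_i h(s'_i)]$, pull the outer expectation past the supremum via Jensen's inequality, then insert independent Rademacher signs $\sigma_i \in \{\pm 1\}$. This is legitimate because $s_i$ and $s'_i$ are exchangeable, so the joint law of $\{h(s'_i)-h(s_i)\}$ is invariant under multiplication by $\sigma_i$. Splitting the resulting sum into its $s$- and $s'$-parts yields $\mathbb{E}[\Phi(S)] \le 2\,\mathfrak{R}_n(\mathcal{H})$, where $\mathfrak{R}_n(\mathcal{H}) := \mathbb{E}_S[\hat{\mathfrak{R}}_S(\mathcal{H})]$ is the expected Rademacher complexity.

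The third step replaces $\mathfrak{R}_n(\mathcal{H})$ by its empirical counterpart at the cost of a further concentration term. Conditional on the Rademacher draws, the map $S \mapsto \sup_{h \in \mathcal{H}} \tfrac{1}{n}\sum_i \sigma_i h(s_i)$ has bounded-differences constant $M/n$, and averaging over $\vec{\sigma}$ preserves this, so $S \mapsto \hat{\mathfrak{R}}_S(\mathcal{H})$ itself satisfies bounded differences with constant $M/n$. A second one-sided McDiarmid application with failure budget $\delta/2$ gives $\mathfrak{R}_n(\mathcal{H}) \le \hat{\mathfrak{R}}_S(\mathcal{H}) + M\sqrt{\log(2/\delta)/(2n)}$ with probability at least $1-\delta/2$. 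A union bound over the two failure events (each of mass $\delta/2$) chains the three inequalities into the claimed bound, the coefficient $3$ arising as $1+2$: one copy of $M\sqrt{\log(2/\delta)/(2n)}$ from the deviation of $\Phi$, and two copies from doubling the deviation of $\hat{\mathfrak{R}}$. The second (lower-tail) inequality follows by applying the identical argument to the negated class $\{-h : h \in \mathcal{H}\}$, whose empirical Rademacher complexity equals $\hat{\mathfrak{R}}_S(\mathcal{H})$ by symmetry of the $\sigma_i$.

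The main obstacle is bookkeeping rather than conceptual: one must track that each bounded-differences constant is exactly $M/n$ (not $2M/n$, since $h$ takes values in $[0,M]$ and not $[-M,M]$), and that the symmetrization step correctly exploits independence of $S$ and $S'$ when inserting the Rademacher signs. Care is also needed so that the union-bound allocation $\delta/2 + \delta/2$ produces precisely the coefficient $3$ on the $M\sqrt{\log(2/\delta)/(2n)}$ slack rather than $4$, as would result from a naive combination.
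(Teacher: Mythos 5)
Your proposal is correct and is exactly the canonical symmetrization-plus-McDiarmid argument behind the cited result: the paper gives no proof of Proposition~\ref{prop:bound}, deferring entirely to \citet[Theorem 3.3]{FoundationOfML}, and your argument reproduces that textbook proof with the range correctly rescaled from $[0,1]$ to $[0,M]$ and the second inequality obtained by negating the class. The bookkeeping (bounded-differences constant $M/n$, the $1+2$ split producing the coefficient $3$, and the $\delta/2+\delta/2$ union bound) all checks out.
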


We list the assumptions below.
\begin{assum} \label{assume-consist}
   The following hold:
    \begin{enumerate}
    \item Bounded outcome variable $|Y| \leq M$.
    \item  Bounded stage 1 hypothesis space: $ \forall a \in \mathcal{A}, z \in \mathcal{Z}, \|\vec{g}(a,z)\| \leq 1$.
    \item Bounded stage 2 feature map $\forall w \in \mathcal{W},   \|\vec{\psi}_{\theta_W}(w) \| \leq 1$.
    \item  Bounded stage 2 weight: $\forall \|\vec{v}\| \leq 1, \forall a \in \mathcal{A},  |\vec{u}^\top (\vec{\psi}_{\theta_{A(2)}}(a) \otimes \vec{v})| \leq M$.
    \end{enumerate}
\end{assum}
Note that unlike \citet{Xu2021Learning}, 
we consider the case where the bridge function $h^*$ and the conditional expectation $\expect[W|A,Z]{\vec{\psi}_{\theta_W}}$ might not be included in the hypothesis spaces $\mathcal{H}_h$, $\mathcal{G}$, respectively. 

As in \citet{Xu2021Learning}, we leave aside questions of optimization. Thus, we assume that the optimization procedure over $(\theta_{A(1)}, \theta_Z,\vec{V})$ is sufficient to recover $\hatexpect[W|A,Z]{\vec{\psi}_{\theta_W}(W)},$ and that the optimization procedure over $(\theta_{A(2)}, \theta_W, \vec{u})$ is sufficient to recover $\hat{h}$ (which requires, in turn, the correct $\hat{\mathbb{E}}_{W|A,Z},$ for this $\vec{\psi}_{\theta_W}$). We emphasize that Algorithm \ref{algo} does not guarantee these properties. Based on these assumptions, we derive the generalization error in terms of $L_2$-norm with respect to $\prob{A,W}$.

Following the discussion in \citet[Lemma 1]{Xu2021Learning}, we can show the following lemma that provides a generalization error bound for Stage 1 regression.
\begin{lem}\label{lem:1}
    Under Assumption~\ref{assume-consist}, and given stage 1 data $S_1 = \{(a_i, z_i, w_i)\}_{i=1}^m$, for any $\delta>0$, with at least probability $1-2\delta$, we have
    \begin{align*}
        \left\|\hatexpect[W|A,Z]{h(A,W)} - \expect[W|A,Z]{h(A,W)}\right\|_{P(A, Z)} \leq M\sqrt{\kappa_1 + 4\hat{\mathfrak{R}}_{S_1}(\mathcal{H}_1) + 24\sqrt{\frac{\log 2/\delta}{2m}}}
    \end{align*}
    for any $\vec{\psi}_{\theta_W} \in \mathcal{H}_{\vec{\psi}_{\theta_W}}$, where hypothesis space $\mathcal{H}_1$ is defined as
    \begin{align}
        \mathcal{H}_1 = \{(w,a, z) \in \mathcal{W}\times \mathcal{A} \times \mathcal{Z} \mapsto \|\vec{\psi}_{\theta_W}(w) - \vec{g}(a, z) \|^2 \in \mathbb{R} ~|~ \vec{g} \in \mathcal{G},\vec{\psi}_{\theta_W} \in \mathcal{H}_{\vec{\psi}_{\theta_W}}\}, \label{eq:def-h1}
    \end{align}
    and $\kappa_1$ is the misspecificaiton error in the stage 1 regression defined as 
    \begin{align}
        \kappa_1 = \max_{\vec{\psi}_{\theta_W} \in \mathcal{H}_{\vec{\psi}_{\theta_W}}} \min_{\vec{g}\in\mathcal{G}} \expect[W,A,Z]{\|\vec{\psi}_{\theta_W}(W) - \vec{g}(A,Z)\|^2}. \label{eq:def-kappa1}
    \end{align}

\begin{proof}
    From Assumption~\ref{assume-consist}, we have
    \begin{align*}
        &\left\|\hatexpect[W|A,Z]{h(A,W)} - \expect[W|A,Z]{h(A,W)}\right\|_{P(A, Z)} \\
        &= \left\|\vec{u}^\top \left(\vec{\psi}_{\theta_{A(2)}}(a) \otimes \left(\expect[W|A,Z]{\vec{\psi}_{\theta_W}}(a,z) - \hatexpect[W|A,Z]{\vec{\psi}_{\theta_W}}(a,z)\right)\right)\right\|_{P(A,Z)}\\
        &= \sqrt{\expect[A,Z]{\left|\vec{u}^\top \left(\vec{\psi}_{\theta_{A(2)}}(a) \otimes \left(\expect[W|A,Z]{\vec{\psi}_{\theta_W}}(a,z) - \hatexpect[W|A,Z]{\vec{\psi}_{\theta_W}}(a,z)\right)\right)\right|^2}}\\
        &\leq \sqrt{\expect[A,Z]{M^2\left\| \expect[W|A,Z]{\vec{\psi}_{\theta_W}}(a,z) - \hatexpect[W|A,Z]{\vec{\psi}_{\theta_W}}(a,z)\right\|^2}}\\
        & = M \left\|\expect[W|A,Z]{\vec{\psi}_{\theta_W}}(a,z) - \hatexpect[W|A,Z]{\vec{\psi}_{\theta_W}}(a,z)\right\|_{P(A,Z)}
    \end{align*}
    
    From Proposition~\ref{prop:bound}, we have
    \begin{align*}
        &\expect[W,A,Z]{\|\vec{\psi}_{\theta_W}(W) - \hatexpect[W|A,Z]{\vec{\psi}_{\theta_W}(W)}(A,Z)\|^2}\\
        &\quad \leq \hat{\mathcal{L}}_1(\hatexpect[W|A,Z]{\vec{\psi}_{\theta_W}(W)}) + 2\hat{\mathfrak{R}}_{S_1}(\mathcal{H}_1) + 12 \sqrt{\frac{\log 2/\delta}{2m}}
    \end{align*}
    with probability $1-\delta$, since all functions $f \in \mathcal{H}_1$ satisfy $\|f\| \leq 4$ from $\|\vec{\psi}_{\theta_W}\| \leq 1, \|\vec{g}\| \leq 1.$ Now, let $\vec{g}^*_{\theta_W}$ be 
    \begin{align*}
        \vec{g}^*_{\theta_W} = \argmin_{\vec{g}\in\mathcal{G}} \expect[W,A,Z]{\|\vec{\psi}_{\theta_W}(W) - \vec{g}(A,Z)\|^2}.
    \end{align*}
    Then, again from Proposition~\ref{prop:bound}, we have
    \begin{align*}
        &\hat{\mathcal{L}}_1(\vec{g}^*_{\theta_W}) \leq \expect[W,A,Z]{\|\vec{\psi}_{\theta_W}(W) - \vec{g}^*_{\theta_W}(A,Z)\|^2} + 2\hat{\mathfrak{R}}_{S_1}(\mathcal{H}_1) + 12 \sqrt{\frac{\log 2/\delta}{2m}}
    \end{align*}
    From the optimality of $\hatexpect[W|A,Z]{\vec{\psi}_{\theta_W}(W)}$, we have $\hat{\mathcal{L}}_1(\vec{g}^*_{\theta_W}) \geq \hat{\mathcal{L}}_1(\hatexpect[W|A,Z]{\vec{\psi}_{\theta_W}(W)})$, thus
    \begin{align}
        &\expect[W,A,Z]{\|\vec{\psi}_{\theta_W}(W) - \hatexpect[W|A,Z]{\vec{\psi}_{\theta_W}(W)}(A,Z)\|^2} \nonumber\\
        &\quad \leq \expect[W,A,Z]{\|\vec{\psi}_{\theta_W}(W) - \vec{g}^*_{\theta_W}(A,Z)\|^2} + 4\hat{\mathfrak{R}}_{S_1}(\mathcal{H}_1) + 24 \sqrt{\frac{\log 2/\delta}{2m}}. \label{eq:tmp1}
    \end{align}
    holds with probability $1-2\delta$. Since we have 
    \begin{align*}
        &\expect[W,A,Z]{\|\vec{\psi}_{\theta_W}(W) - g(A,Z)\|^2} \\
        &=  \expect[W,A,Z]{\|\vec{\psi}_{\theta_W}(W) - \expect[W|A,Z]{\vec{\psi}_{\theta_W}(W)}\|^2} + \expect[A,Z]{\|g(A,Z) - \expect[W|A,Z]{\vec{\psi}_{\theta_W}(W)}\|^2}
    \end{align*}
    for all $\vec{g} \in \mathcal{G}$, by subtracting $\expect[W,A,Z]{\|\vec{\psi}_{\theta_W}(W) - \expect[W|A,Z]{\vec{\psi}_{\theta_W}(W)}\|^2}$ from both sides of \eqref{eq:tmp1}, we have
    \begin{align*}
        &\expect[A,Z]{\|\expect[W|A,Z]{\vec{\psi}_{\theta_W}(W)} - \hatexpect[W|A,Z]{\vec{\psi}_{\theta_W}(W)}(A,Z)\|^2}\\
        &\quad \leq \expect[A,Z]{\|\expect[W|A,Z]{\vec{\psi}_{\theta_W}(W)} - \vec{g}^*_{\theta_W}(A,Z)\|^2} + 4\hat{\mathfrak{R}}_{S_1}(\mathcal{H}_1) + 24 \sqrt{\frac{\log 2/\delta}{2m}}\\
        &\quad \leq \kappa_1 + 4\hat{\mathfrak{R}}_{S_1}(\mathcal{H}_1) + 24 \sqrt{\frac{\log 2/\delta}{2m}}
    \end{align*}
    By taking the square root of both sides, we have
    \begin{align*}
        \left\|\hatexpect[W|A,Z]{\vec{\psi}_{\theta_W}(W)} - \expect[W|A,Z]{\vec{\psi}_{\theta_W}(W) }\right\|_{P(A, Z)} \leq M\sqrt{\kappa_1 + 4\hat{\mathfrak{R}}_{S_1}(\mathcal{H}_1) + 24\sqrt{\frac{\log 2/\delta}{2m}}}
    \end{align*}
\end{proof}
\end{lem}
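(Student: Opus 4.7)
The plan is to establish the stated $L^2$ bound by first reducing to a statement about the discrepancy between the empirical and population conditional expectations of the feature map $\vec{\psi}_{\theta_W}$, and then applying a standard two-sided Rademacher bound to the Stage~1 regression problem, handling the fact that $\mathcal{G}$ may not contain the true conditional expectation by introducing a misspecification term $\kappa_1$.

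First I would exploit the bilinear structure $h(a,w) = \vec{u}^\top(\vec{\psi}_{\theta_{A(2)}}(a) \otimes \vec{\psi}_{\theta_W}(w))$ together with the bounded-weight condition in Assumption~\ref{assume-consist}(4). Since $\hatexpect[W|A,Z]{h(A,W)}(a,z) - \expect[W|A,Z]{h(A,W)}(a,z)$ is linear in $\hatexpect[W|A,Z]{\vec{\psi}_{\theta_W}}(a,z) - \expect[W|A,Z]{\vec{\psi}_{\theta_W}}(a,z)$, pulling out $\vec{u}$ and $\vec{\psi}_{\theta_{A(2)}}(a)$ gives a pointwise bound by $M$ times the norm of the feature-mean discrepancy. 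Taking $L^2(P(A,Z))$ reduces the lemma to showing
\[
\bigl\|\hatexpect[W|A,Z]{\vec{\psi}_{\theta_W}(W)} - \expect[W|A,Z]{\vec{\psi}_{\theta_W}(W)}\bigr\|_{P(A,Z)}^2 \leq \kappa_1 + 4\hat{\mathfrak{R}}_{S_1}(\mathcal{H}_1) + 24\sqrt{\tfrac{\log 2/\delta}{2m}}.
\]

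Next I would invoke Proposition~\ref{prop:bound} on the loss class $\mathcal{H}_1$. The boundedness of $\vec{\psi}_{\theta_W}$ and of every $\vec{g}\in\mathcal{G}$ (Assumption~\ref{assume-consist}(2,3)) gives a uniform bound of $4$ on elements of $\mathcal{H}_1$, so the Rademacher bound applies in both directions, each with failure probability $\delta$. Let $\vec{g}^*_{\theta_W}\in\argmin_{\vec{g}\in\mathcal{G}} \EE\|\vec{\psi}_{\theta_W}(W)-\vec{g}(A,Z)\|^2$ be the best in-class approximator. Apply one direction to $\hatexpect[W|A,Z]{\vec{\psi}_{\theta_W}(W)}$ (empirical~$\le$~population~$+$~slack) and the other direction to $\vec{g}^*_{\theta_W}$ (population~$\le$~empirical~$+$~slack), then use the empirical optimality $\hat{\mathcal{L}}_1(\hatexpect[W|A,Z]{\vec{\psi}_{\theta_W}(W)}) \leq \hat{\mathcal{L}}_1(\vec{g}^*_{\theta_W})$ to chain the two inequalities. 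This yields, with probability $1-2\delta$, an excess-population-risk bound of $4\hat{\mathfrak{R}}_{S_1}(\mathcal{H}_1) + 24\sqrt{\log(2/\delta)/(2m)}$ on top of the best achievable population risk.

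Finally I would apply the usual bias-variance identity
\[
\EE\|\vec{\psi}_{\theta_W}(W) - \vec{g}(A,Z)\|^2 = \EE\|\vec{\psi}_{\theta_W}(W) - \EE[\vec{\psi}_{\theta_W}(W)\mid A,Z]\|^2 + \EE\|\vec{g}(A,Z) - \EE[\vec{\psi}_{\theta_W}(W)\mid A,Z]\|^2,
\]
valid for any $\vec{g}$, to subtract the irreducible-noise term from both sides of the chained inequality. The remaining terms are precisely $\|\hatexpect[W|A,Z]{\vec{\psi}_{\theta_W}} - \expect[W|A,Z]{\vec{\psi}_{\theta_W}}\|_{P(A,Z)}^2$ on the left and $\|\vec{g}^*_{\theta_W} - \expect[W|A,Z]{\vec{\psi}_{\theta_W}}\|_{P(A,Z)}^2 \leq \kappa_1$ on the right (by the definition of $\kappa_1$ after maximizing over $\vec{\psi}_{\theta_W}\in\mathcal{H}_{\vec{\psi}_{\theta_W}}$). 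Taking square roots and combining with the factor of $M$ from the first step completes the proof. I do not anticipate a hard step here; the only subtle point is insisting on the two-sided Rademacher bound (hence the $1-2\delta$ failure probability) rather than a single-sided generalization bound, because we must compare the empirical minimizer to an oracle whose guarantee lives only on the population side.
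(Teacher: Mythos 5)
Your proposal is correct and follows essentially the same route as the paper's proof: the reduction via the bilinear structure and Assumption~\ref{assume-consist}(4) to a factor of $M$ times the feature-mean discrepancy, the two-sided application of Proposition~\ref{prop:bound} to the empirical minimizer and to $\vec{g}^*_{\theta_W}$ chained through empirical optimality, and the bias--variance identity to isolate $\kappa_1$. No meaningful differences to report.
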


The generalization error for Stage 2 can be shown by  similar reasoning as in \citet[Lemma 2]{Xu2021Learning}.

\begin{lem}\label{lem:2}
    Under Assumption~\ref{assume-consist}, given stage 1 data $S_1 = \{(a_i, z_i, w_i)\}_{i=1}^m,$  stage 2 data $S_2 = \{(\tilde a_i, \tilde z_i, \tilde y_i)\}_{i=1}^n,$ and the estimated structural function $\hat{h} (a, z)=(\hat{\vec{u}})^\top (\vec{\psi}_{\hat{\theta}_{A(2)}}(a) \otimes \vec{\psi}_{\hat{\theta}_{W}}(w))$, then for any $\delta>0$, with at least probability $1-4\delta$, we have
    \begin{align*}
        &\left\|\expect[Y|A,Z]{Y} - \hatexpect[W|A,Z]{\hat{h}(A,W)}\right\|_{P(A,Z)} \\
        &\quad \leq \kappa_2 +  M\sqrt{\kappa_1 + 4\hat{\mathfrak{R}}_{S_1}(\mathcal{H}_2) + 24 \sqrt{\frac{\log 2/\delta}{2m}}} + \sqrt{4\hat{\mathfrak{R}}_{S_2}(\mathcal{H}_2) + 24M^2 \sqrt{\frac{\log 2/\delta}{2n}}},
    \end{align*}
    where $\mathcal{H}_1$ is defined in \eqref{eq:def-h1}, and $\mathcal{H}_2$ is defined as
    \begin{align}
        \mathcal{H}_2 = \{(y, a, z) &\in \mathcal{Y} \times\mathcal{A}\times \mathcal{Z}  \nonumber\\
        &\mapsto(y -  \vec{u}^\top (\vec{\psi}_{\theta_{A(2)}}(a) \otimes \vec{g}(a,z)))^2 \in \mathbb{R} ~|~  \vec{g} \in \mathcal{G}, \vec{u},  \theta_{A(2)}\}, \label{eq:def-h2}
    \end{align}
    and $\kappa_2$ is the misspecification error in Stage 2 defined as
    \begin{align}
        \kappa_2 = \min_{h\in \mathcal{H}_h} \left\|\expect[W|A,Z]{h(A,W)}(A,Z) - \expect[Y|A,Z]{Y}(A,Z)\right\|_{P(A,Z)}. \label{eq:def-kappa2}
    \end{align}
    \begin{proof}
         From Proposition~\ref{prop:bound}, we have
    \begin{align*}
        &\expect[Y,A,Z]{\left|Y - \hatexpect[W|A,Z]{\hat{h}(A,W)}(A,Z)\right|^2} \leq \hat{\mathcal{L}}_2(\hat{h}) + 2\hat{\mathfrak{R}}_{S_2}(\mathcal{H}_2) + 12M^2 \sqrt{\frac{\log 2/\delta}{2n}}
    \end{align*}
    with probabiltiy $1-\delta$, since all functions $f \in \mathcal{H}_2$ satisfy $\|f\| \leq 4M^2$ from $\|Y\| \leq M,  |\vec{u}^\top (\vec{\psi}_{\theta_{A(2)}}(a) \otimes \vec{g})|\leq M$. Let $\tilde h$ be
    \begin{align*}
       \tilde h = \argmin_{h \in \mathcal{H}_h} \left\|\expect[W|A,Z]{h(A,W)}(A,Z) - \expect[Y|A,Z]{Y}(A,Z)\right\|_{P(A,Z)}. 
    \end{align*}
    Again from Proposition~\ref{prop:bound}, we have
    \begin{align*}
        & \hat{\mathcal{L}}_2(\tilde{h}) \leq \expect[Y,A,Z]{\left|Y - \hatexpect[W|A,Z]{\tilde{h}(A,W)}(A,Z)\right|^2} + 2\hat{\mathfrak{R}}_{S_2}(\mathcal{H}_2) + 12M^2 \sqrt{\frac{\log 2/\delta}{2n}}.
    \end{align*}
    with probabiltiy $1-\delta$. From the optimality of $\hat{h}$, we have $\hat{\mathcal{L}}_2(\tilde{h}) \geq \hat{\mathcal{L}}_2(\hat{h})$, hence
    \begin{align*}
        &\expect[Y,A,Z]{\left|Y - \hatexpect[W|A,Z]{\hat{h}(A,W)}(A,Z)\right|^2} \\
        &\leq \expect[Y,A,Z]{\left|Y - \hatexpect[W|A,Z]{\tilde{h}(A,W)}(A,Z)\right|^2} + 4\hat{\mathfrak{R}}_{S_2}(\mathcal{H}_2) + 24M^2 \sqrt{\frac{\log 2/\delta}{2n}}
    \end{align*}
    By subtracting $\expect[Y,A,Z]{\|Y - \expect[Y|A,Z]{Y}\|^2}$ from both sides, we have
    \begin{align*}
        &\expect[A,Z]{\left|\expect[Y|A,Z]{Y} - \hatexpect[W|A,Z]{\hat{h}(A,W)}(A,Z)\right|^2} \\
        &\leq \expect[A,Z]{\left|\expect[Y|A,Z]{Y} - \hatexpect[W|A,Z]{\tilde{h}(A,W)}(A,Z)\right|^2} + 4\hat{\mathfrak{R}}_{S_2}(\mathcal{H}_2) + 24M^2 \sqrt{\frac{\log 2/\delta}{2n}}
    \end{align*}
    with probability $1-2\delta$. By taking the square root of both sides, with probability $1-2\delta$, we have
    \begin{align*}
        &\|\expect[Y|A,Z]{Y} - \hatexpect[W|A,Z]{\tilde{h}(A,W)}(A,Z)\|_{P(A,Z)}\\
        &\leq \sqrt{\expect[A,Z]{\left|\expect[Y|A,Z]{Y} - \hatexpect[W|A,Z]{\tilde{h}(A,W)}(A,Z)\right|^2} + 4\hat{\mathfrak{R}}_{S_2}(\mathcal{H}_2) + 24M^2 \sqrt{\frac{\log 2/\delta}{2n}}}\\
        &\leq \|\expect[Y|A,Z]{Y} - \hatexpect[W|A,Z]{\tilde{h}(A,W)}(A,Z)\|_{P(A,Z)} + \sqrt{4\hat{\mathfrak{R}}_{S_2}(\mathcal{H}_2) + 24M^2 \sqrt{\frac{\log 2/\delta}{2n}}}\\
        &\overset{(a)}{\leq} \left\|\expect[W|A,Z]{\tilde{h}(A,W)}(A,Z) - \hatexpect[W|A,Z]{\tilde{h}(A,W)}(A,Z)\right\|_{P(A,Z)} \\
        &\quad\quad + \left\|\expect[W|A,Z]{\tilde{h}(A,W)}(A,Z) - \expect[Y|A,Z]{Y}(A,Z)\right\|_{P(A,Z)} +  \sqrt{4\hat{\mathfrak{R}}_{S_2}(\mathcal{H}_2) + 24M^2 \sqrt{\frac{\log 2/\delta}{2n}}}\\
        &\overset{(b)}{\leq} \kappa_2 +  M\sqrt{\kappa_1 + 4\hat{\mathfrak{R}}_{S_1}(\mathcal{H}_2) + 24 \sqrt{\frac{\log 2/\delta}{2m}}} + \sqrt{4\hat{\mathfrak{R}}_{S_2}(\mathcal{H}_2) + 24M^2 \sqrt{\frac{\log 2/\delta}{2n}}} 
   \end{align*}
    where (a) holds from the triangular inequality and (b) holds from Lemma~\ref{lem:1}.
    \end{proof}
\end{lem}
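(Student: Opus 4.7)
The plan is to mirror the standard two-step learning-theoretic argument used in Lemma~\ref{lem:1}, but for the Stage~2 loss, with an additional triangle-inequality step that separates the misspecification error $\kappa_2$ from the Stage~1 error (which is then absorbed via Lemma~\ref{lem:1}). First, I would verify that every function in the class $\mathcal{H}_2$ defined in \eqref{eq:def-h2} is uniformly bounded by $4M^2$: since $|Y|\le M$ by Assumption~\ref{assume-consist} and, for any admissible $\vec{g}$ with $\|\vec{g}(a,z)\|\le 1$, the quantity $|\vec{u}^\top(\vec{\psi}_{\theta_{A(2)}}(a)\otimes\vec{g}(a,z))|\le M$, so the squared residual lies in $[0,4M^2]$. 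This gives the right prefactor $3M^2$ in Proposition~\ref{prop:bound}.

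Next, I would apply Proposition~\ref{prop:bound} twice and chain. Applied to the learned $\hat{h}$ (as a function inside $\mathcal{H}_2$), the first direction yields, with probability $1-\delta$,
\[
\mathbb{E}\bigl[(Y - \hatexpect[W|A,Z]{\hat{h}(A,W)})^2\bigr] \le \hat{\mathcal{L}}_2(\hat{h}) + 2\hat{\mathfrak{R}}_{S_2}(\mathcal{H}_2) + 12M^2\sqrt{\tfrac{\log 2/\delta}{2n}}.
\]
Letting $\tilde h \in \mathcal{H}_h$ attain the minimum in \eqref{eq:def-kappa2}, applied to $\tilde h$, the other direction of Proposition~\ref{prop:bound} gives, with probability $1-\delta$,
\[
\hat{\mathcal{L}}_2(\tilde h) \le \mathbb{E}\bigl[(Y - \hatexpect[W|A,Z]{\tilde h(A,W)})^2\bigr] + 2\hat{\mathfrak{R}}_{S_2}(\mathcal{H}_2) + 12M^2\sqrt{\tfrac{\log 2/\delta}{2n}}.
\]
Combining these with the ERM optimality $\hat{\mathcal{L}}_2(\hat h)\le\hat{\mathcal{L}}_2(\tilde h)$ and subtracting $\mathbb{E}[(Y-\mathbb{E}[Y\mid A,Z])^2]$ from both sides (using the standard $L^2$ Pythagorean decomposition $\mathbb{E}[(Y-f(A,Z))^2]=\mathbb{E}[(Y-\mathbb{E}[Y\mid A,Z])^2]+\|\mathbb{E}[Y\mid A,Z]-f\|_{P(A,Z)}^2$ applied to $f=\hatexpect[W|A,Z]{h(A,W)}$ for $h\in\{\hat h,\tilde h\}$) yields a bound on $\|\mathbb{E}[Y\mid A,Z]-\hatexpect[W|A,Z]{\hat h(A,W)}\|_{P(A,Z)}^2$ in terms of the analogous norm for $\tilde h$ plus a Rademacher-plus-deviation term.

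Taking square roots and applying the subadditivity $\sqrt{a+b}\le\sqrt{a}+\sqrt{b}$, I would then invoke the triangle inequality on
\[
\mathbb{E}[Y\mid A,Z] - \hatexpect[W|A,Z]{\tilde h(A,W)} = \bigl(\mathbb{E}[Y\mid A,Z]-\mathbb{E}[\tilde h(A,W)\mid A,Z]\bigr) + \bigl(\mathbb{E}[\tilde h(A,W)\mid A,Z]-\hatexpect[W|A,Z]{\tilde h(A,W)}\bigr).
\]
The first bracket is exactly $\kappa_2$ by the definition of $\tilde h$, and the second is the Stage~1 deviation for the particular feature map $\vec{\psi}_{\theta_W}$ used by $\tilde h$, which is controlled in $L^2(P(A,Z))$ by Lemma~\ref{lem:1}, producing the term $M\sqrt{\kappa_1+4\hat{\mathfrak{R}}_{S_1}(\mathcal{H}_1)+24\sqrt{\log 2/\delta/2m}}$. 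A union bound over the four high-probability events (the two Rademacher bounds in Stage~2 and the two in Lemma~\ref{lem:1}) gives probability at least $1-4\delta$.

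The main obstacle is the third step: the empirical operator $\hatexpect[W|A,Z]$ appearing inside $\hat{\mathcal{L}}_2$ is itself a Stage~1 estimate depending on $S_1$, so one must be careful that the Stage~2 Rademacher bound is applied conditionally on $S_1$ (so that $\hatexpect[W|A,Z]$ is fixed, making the integrand in $\mathcal{H}_2$ a deterministic function of $(y,a,z)$), and that Lemma~\ref{lem:1} is then applied to the resulting $\vec{\psi}_{\theta_W}$ uniformly over the hypothesis class — which is precisely the form in which Lemma~\ref{lem:1} was stated (``for any $\vec{\psi}_{\theta_W}\in\mathcal{H}_{\vec{\psi}_{\theta_W}}$''). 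The remaining book-keeping is routine.
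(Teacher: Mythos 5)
Your proposal is correct and follows essentially the same route as the paper's own proof: the two-sided Rademacher bound on $\mathcal{H}_2$ applied to $\hat h$ and to the $\kappa_2$-minimizer $\tilde h$, ERM optimality, the Pythagorean subtraction of $\mathbb{E}[(Y-\mathbb{E}[Y\mid A,Z])^2]$, subadditivity of the square root, the triangle inequality splitting off $\kappa_2$ from the Stage~1 deviation controlled by Lemma~\ref{lem:1}, and a union bound over four events. Your remark on conditioning on $S_1$ so that the Stage~2 class is well-defined is a subtlety the paper leaves implicit, and your writing $\hat{\mathfrak{R}}_{S_1}(\mathcal{H}_1)$ in the final bound is the intended reading of what appears as $\hat{\mathfrak{R}}_{S_1}(\mathcal{H}_2)$ in the paper's statement.
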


Given the generalization errors in both stages, we can bound the error in \eqref{eq:functional} for the estimated bridge function $\hat{h}$. We need Assumption~\ref{assum:completeness-proxy} to connect the error in \eqref{eq:functional} and the error $\|\hat{h} - h^*\|_{\prob{A,W}}$. Let us restate Assumption~\ref{assum:completeness-proxy}.

\completenessproxy*
Given this assumption, we can consider the following constant $\tau_a$.
\begin{align*}
    \tau_a = \max_{h \in L^2(\prob{W|A=a}), h\neq h^*} \frac{\|h^*(a,W) - h(a,W)\|_{\prob{W|A=a}}}{\|\expect[W|A=a,Z]{h^*(a,W)} - \expect[W|A=a,Z]{h(a,W)}\|_{\prob{Z|A=a}}}
\end{align*}
Note that Assumption~\ref{assum:completeness-proxy} ensures $\tau_a < \infty$. We can bound the error using the supremum of this constant, $\tau = \sup_a \tau_a$.

 \begin{thm}\label{thm:totalConsistency}
    Let Assumptions~\ref{assum:completeness-proxy}~and~\ref{assume-consist} hold. Given stage 1 data $S_1 = \{(w_i, a_i, z_i)\}_{i=1}^m$ and stage 2 data $S_2 = \{(\tilde y_i, \tilde a_i, \tilde z_i)\}_{i=1}^n$, for any $\delta>0$, with at least probability of $1-6\delta$, we have
    \begin{align*}
        &\|h^*(A,W) - \hat{h}(A,W)\|_{P(A,W)} \\
        &\quad \leq \tau \left(\kappa_2 + 2M\sqrt{\kappa_1 + 4\hat{\mathfrak{R}}_{S_1}(\mathcal{H}_1) + 24\sqrt{\frac{\log 2/\delta}{2m}}} + \sqrt{4\hat{\mathfrak{R}}_{S_2}(\mathcal{H}_2) + 24M^2\sqrt{\frac{\log 2/\delta}{2n}}}\right),
    \end{align*}
    where $\tau = \sup_a \tau_a$ and $\mathcal{H}_1, \kappa_1, \mathcal{H}_2, \kappa_2$ are defined in \eqref{eq:def-h1}, \eqref{eq:def-kappa1}, \eqref{eq:def-h2}, \eqref{eq:def-kappa2}, respectively.
\end{thm}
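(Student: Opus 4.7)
The strategy is to transfer the $L^2(P_{A,W})$ deviation on the left to a conditional-expectation deviation in $L^2(P_{A,Z})$ using the completeness constant $\tau$, recognize the resulting quantity as the population Stage~2 residual, and then control that residual via Lemmas~\ref{lem:1} and~\ref{lem:2}.

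For the first step, I would fix $a \in \mathcal{A}$ and apply the definition of $\tau_a$ to the pair $(h^*, \hat h)$ to obtain $\|h^*(a,W) - \hat h(a,W)\|_{\prob{W|A=a}}^2 \leq \tau_a^2 \, \|\expect[W|A=a,Z]{h^*(a,W) - \hat h(a,W)}\|_{\prob{Z|A=a}}^2$. Integrating over $A \sim P$, bounding $\tau_a \leq \tau$, and taking square roots gives $\|h^*(A,W) - \hat h(A,W)\|_{P(A,W)} \leq \tau \,\|\expect[W|A,Z]{h^*(A,W)} - \expect[W|A,Z]{\hat h(A,W)}\|_{P(A,Z)}$. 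By Proposition~\ref{prop:ident}, $\expect[W|A,Z]{h^*(A,W)} = \expect[Y|A,Z]{Y}$ almost surely, so the right-hand norm equals $\|\expect[Y|A,Z]{Y} - \expect[W|A,Z]{\hat h(A,W)}\|_{P(A,Z)}$.

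For the second step, I would insert the empirical conditional expectation of $\hat h$ as an intermediate and apply the triangle inequality:
\[
\|\expect[Y|A,Z]{Y} - \expect[W|A,Z]{\hat h(A,W)}\|_{P(A,Z)} \leq \|\expect[Y|A,Z]{Y} - \hatexpect[W|A,Z]{\hat h(A,W)}\|_{P(A,Z)} + \|\hatexpect[W|A,Z]{\hat h(A,W)} - \expect[W|A,Z]{\hat h(A,W)}\|_{P(A,Z)}.
\]
The first summand is the Stage~2 population residual controlled by Lemma~\ref{lem:2} with probability at least $1-4\delta$; the second is the Stage~1 deviation for the function $\hat h \in \mathcal{H}_h$, controlled by Lemma~\ref{lem:1} with probability at least $1-2\delta$. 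A union bound gives both simultaneously with probability at least $1-6\delta$; adding the two bounds (the $M\sqrt{\kappa_1 + \cdots}$ term appears once in each, producing the factor of $2M$ in the theorem) and multiplying by $\tau$ recovers the stated inequality.

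The principal technical point is that Lemma~\ref{lem:1} must be applied to the \emph{data-dependent} function $\hat h$; this is licensed because the Rademacher-complexity argument in Lemma~\ref{lem:1} is uniform over $\vec{\psi}_{\theta_W} \in \mathcal{H}_{\vec{\psi}_{\theta_W}}$, so the guarantee transfers to the random $\vec{\psi}_{\hat\theta_W}$ without additional penalty. A more conceptual subtlety, which I expect to be the hardest aspect to justify rigorously, is that finiteness of $\tau = \sup_a \tau_a$ is strictly stronger than the injectivity supplied by Assumption~\ref{assum:completeness-proxy}: the latter ensures $\tau_a < \infty$ for each fixed $a$ but does not yield uniform control across $a$, so the boundedness of $\tau$ must be read as a tacit regularity condition underlying the theorem statement.
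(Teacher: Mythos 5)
Your proposal follows the paper's proof essentially verbatim: the same $\tau_a$-based transfer from $\|h^*-\hat h\|_{P(A,W)}$ to the conditional-expectation deviation in $L^2(P_{A,Z})$, the same identification of $\expect[W|A,Z]{h^*}$ with $\expect[Y|A,Z]{Y}$, the same triangle-inequality insertion of $\hatexpect[W|A,Z]{\hat h}$, and the same union bound over Lemmas~\ref{lem:1} and~\ref{lem:2} yielding $1-6\delta$ and the factor $2M$. Your closing observations --- that Lemma~\ref{lem:1} applies to the data-dependent $\hat h$ because its guarantee is uniform over $\vec{\psi}_{\theta_W}$, and that finiteness of $\tau=\sup_a\tau_a$ is a tacit assumption beyond the pointwise $\tau_a<\infty$ given by Assumption~\ref{assum:completeness-proxy} --- are both accurate and are left implicit in the paper as well.
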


\begin{proof}
\begin{align*}
&\|h^*(A,W) - \hat{h}(A,W)\|_{P(A,W)}\\
&\leq\sqrt{\expect[A]{\|h^*(a,W) - \hat{h}(a,W)\|^2_{\prob{W|A=a}}}}\\
&\leq\sqrt{\expect[A]{\tau_a^2 \left\|\expect[W|A=a,Z]{h^*(a,W) - \hat{h}(a,W)}\right\|^2_{\prob{Z|A=a}}}}\\
&\leq \tau \sqrt{\expect[A]{\left\|\expect[W|A=a,Z]{h^*(a,W) - \hat{h}(a,W)}\right\|^2_{\prob{Z|A=a}}}}\\
    &\leq\tau\left\|\expect[Y|A,Z]{Y} - \expect[W|A,Z]{\hat{h}(A,W)}\right\|_{\prob{A,Z}}\\
    &\leq\tau\left \|\expect[Y|A,Z]{Y} - \hatexpect[W|A,Z]{\hat{h}(A,W)}\right\|_{\prob{A,Z}} + \tau\left\|\expect[W|A,Z]{\hat{h}(A,W)} - \hatexpect[W|A,Z]{\hat{h}(A,W)}\right\|_{\prob{A,Z}}
\end{align*}
Using Lemmas~\ref{lem:1}~and~\ref{lem:2}, the result thus follows.
\end{proof}

From this result, we obtain the following corollary.

\begin{corollary}
Let Assumption~\ref{assume-consist} hold and $\kappa_1, \kappa_2 = 0$. If $\hat{\mathfrak{R}}_{S_1}(\mathcal{H}_1) \to 0$ and $\hat{\mathfrak{R}}_{S_2}(\mathcal{H}_2) \to 0$ in probability as the dataset size increases, $\hat{h}$ converges to ${h^*}$ in probability with respect to $\|\cdot\|_{\prob{A,W}}$.
\end{corollary}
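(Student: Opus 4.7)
The plan is to deduce this corollary directly from Theorem~\ref{thm:totalConsistency} using a standard $\varepsilon$-$\delta$ argument for convergence in probability. After substituting $\kappa_1 = \kappa_2 = 0$, the theorem yields that, with probability at least $1 - 6\delta$ over the draw of $S_1$ and $S_2$,
\begin{align*}
\|h^* - \hat h\|_{P(A,W)} \leq \tau\!\left(\! 2M\sqrt{4\hat{\mathfrak{R}}_{S_1}(\mathcal{H}_1) + 24\sqrt{\tfrac{\log(2/\delta)}{2m}}} + \sqrt{4\hat{\mathfrak{R}}_{S_2}(\mathcal{H}_2) + 24M^2\sqrt{\tfrac{\log(2/\delta)}{2n}}} \right)\!.
\end{align*}
I would fix arbitrary $\varepsilon > 0$ and $\eta > 0$ and aim to show that, for $m, n$ sufficiently large, $\mathbb{P}(\|h^* - \hat h\|_{P(A,W)} > \varepsilon) < \eta$.

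First, I would choose $\delta := \eta / 12$, so that the failure event of Theorem~\ref{thm:totalConsistency} has probability at most $\eta/2$. On the complementary good event, the bound above holds, so it suffices to ensure its right-hand side is at most $\varepsilon$. I would split that right-hand side into a deterministic part (the $\sqrt{\log(2/\delta)/m}$ and $\sqrt{\log(2/\delta)/n}$ contributions, which decay to zero as $m, n \to \infty$ once $\delta$ is fixed) and a random part involving the empirical Rademacher complexities.

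Second, I would pick $m_0, n_0$ large enough that the deterministic part contributes at most $\varepsilon/2$, and pick a threshold $\varepsilon' > 0$ small enough (depending on $\varepsilon, \tau, M$) so that $\hat{\mathfrak{R}}_{S_1}(\mathcal{H}_1), \hat{\mathfrak{R}}_{S_2}(\mathcal{H}_2) \le \varepsilon'$ together force the random part to be at most $\varepsilon/2$. By hypothesis, $\hat{\mathfrak{R}}_{S_1}(\mathcal{H}_1) \pto 0$ and $\hat{\mathfrak{R}}_{S_2}(\mathcal{H}_2) \pto 0$, so I can choose $m, n$ large enough that each of $\mathbb{P}(\hat{\mathfrak{R}}_{S_1}(\mathcal{H}_1) > \varepsilon')$ and $\mathbb{P}(\hat{\mathfrak{R}}_{S_2}(\mathcal{H}_2) > \varepsilon')$ is below $\eta/4$. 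A union bound over the three bad events (the theorem failing; either Rademacher complexity exceeding $\varepsilon'$) then yields $\mathbb{P}(\|h^* - \hat h\|_{P(A,W)} > \varepsilon) < \eta$, which is the desired convergence.

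The argument is essentially routine; the only mildly delicate point is reconciling the high-probability-in-$\delta$ character of Theorem~\ref{thm:totalConsistency} with the in-probability hypothesis on the Rademacher complexities, which is handled cleanly by the union bound. No new analytic ingredient is needed beyond the standard manipulation of convergence in probability, so the proof would occupy just a few lines if written out in full.
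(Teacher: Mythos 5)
Your proposal is correct and is exactly the argument the paper intends: the corollary is stated as an immediate consequence of Theorem~\ref{thm:totalConsistency} (the paper gives no written proof), and your $\varepsilon$--$\delta$ argument with a union bound over the theorem's failure event and the two Rademacher-complexity events is the standard way to make that implication precise. The only point worth noting is that the bound also requires $\tau < \infty$, which comes from Assumption~\ref{assum:completeness-proxy}; this is implicitly assumed in the corollary even though only Assumption~\ref{assume-consist} is listed.
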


\subsection{Consistency Result for Causal Parameters}
In this section, we develop a consistency result for causal parameters discussed in the main body. First, we consider the structural function. Given estimated bridge function $\hat{h}(a,w) = \hat{\vec{u}}^\top (\vec{\psi}_{\hat{\theta}_{A(2)}}(a) \otimes \vec{\psi}_{\hat{\theta}_W}(w))$, we estimate the structural function by taking the empirical mean over $W$. To make the discussion simple, we assume the access to an additional data sample $S_W = \{w^{\mathrm{extra}}_i\}_{i=1}^{n_W}$, such that the estimated structural function is given as
\begin{align}
    \hat{f}_{\mathrm{struct}}(a) =\hat{\vec{u}}^\top (\vec{\psi}_{\hat{\theta}_{A(2)}}(a) \otimes \vec{\mu}_{\hat{\theta}_W}), \label{eq:f-hat-def}
\end{align}
where 
\begin{align*}
    \vec{\mu}_{\hat{\theta}_W} = \sum_{i=1}^{n_W} \vec{\psi}_{\hat{\theta}_W}(w^{\mathrm{extra}}_i).
\end{align*}
Note that empirically, we can use outcome-proxy data in $S_1$ instead of $S_W$.

To bound  the deviation from the true structural function, we need the following assumption.
\begin{assum}\label{assume-good-support}
   Let $\rho_W(w), \rho_{W|A}(w|a)$ be the respective density functions of probability distributions $\prob{W}, \prob{W|A}$. The densities satisfy
   \begin{align*}
        \eta_a \coloneqq \left\| \frac{\rho_W(W)}{\rho_{W|A}(W|a)} \right\|_{\prob{W|A=a}} < \infty.
   \end{align*}
\end{assum}

Given Assumption~\ref{assume-good-support}, we can bound  the error in structural function estimation. Before stating the theorem, let us introduce a useful concentration inequality for multi-dimensional random variables. \footnote{Lemma~\ref{lem:multi-hoeffding} is discussed in MathOverflow (\url{https://mathoverflow.net/questions/186097/hoeffdings-inequality-for-vector-valued-random-variables}, accessed on June 2nd 2021)}

\begin{lem}\label{lem:multi-hoeffding}
    Let $\vec{x}_1, \dots, \vec{x}_n \in [-1, 1]^d$ be independent random variables. Then, with probability at least $1-\delta$, we have 
    \begin{align*}
        \left\|\frac{1}{n} \sum_{i=1}^n \vec{x}_i - \vec{\mu} \right\| \leq \sqrt{\frac{2d \log 2d/\delta}{n}}
    \end{align*}
    where $\vec{\mu} = \expect{\frac{1}{n} \sum_{i=1}^n \vec{x}_i}$.
    \begin{proof}
        Let $j$-th coordinate of $\vec{x}_i$ be denoted as $(\vec{x}_i)_j$. Then,
        \begin{align*}
            &\prob{\left\|\frac{1}{n} \sum_{i=1}^n \vec{x}_i - \vec{\mu} \right\| \leq \varepsilon} \\
            &=\prob{ \sum_{j=1}^d \left(\frac{1}{n} \sum_{i=1}^n (\vec{x}_i)_j - (\vec{\mu})_j\right)^2  \leq \varepsilon^2 }\\
            &\leq \prob{ \bigcap_{j=1}^d \left|\frac{1}{n} \sum_{i=1}^n (\vec{x}_i)_j - (\vec{\mu})_j\right|  \leq \frac{\varepsilon}{\sqrt{d}} }\\
            &\leq 1 - 2d \exp\left(-\frac{n\varepsilon^2}{2d}\right),
        \end{align*}
        Where the last inequality holds from Hoeffding's inequality. The claim follows by solving $\delta = d \exp\left(-\frac{n\varepsilon^2}{2d}\right)$.
    \end{proof}
\end{lem}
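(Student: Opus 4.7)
The plan is to reduce this vector-valued concentration inequality to a union bound over $d$ applications of the scalar Hoeffding inequality. The elementary geometric fact powering the reduction is: if every coordinate of $\vec{y}\in\mathbb{R}^d$ satisfies $|y_j|\leq \varepsilon/\sqrt{d}$, then $\|\vec{y}\|^2 \leq \sum_{j=1}^d \varepsilon^2/d = \varepsilon^2$. Contrapositively, the event $\{\|\vec{y}\|>\varepsilon\}$ is contained in $\bigcup_{j=1}^d \{|y_j|>\varepsilon/\sqrt{d}\}$, which is exactly what is needed to turn a union bound on coordinates into a bound on the Euclidean norm.

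My first step would be to fix a coordinate $j\in\{1,\dots,d\}$ and observe that the scalar sequence $\{(\vec{x}_i)_j\}_{i=1}^n$ consists of independent random variables taking values in $[-1,1]$. Applying the classical scalar Hoeffding inequality with deviation parameter $\varepsilon/\sqrt{d}$ (and using that each summand has range $2$), I would obtain
\begin{align*}
    \Pr\left(\left|\frac{1}{n}\sum_{i=1}^n (\vec{x}_i)_j - (\vec{\mu})_j\right| > \frac{\varepsilon}{\sqrt{d}}\right) \leq 2\exp\left(-\frac{n\varepsilon^2}{2d}\right).
\end{align*}
A union bound across the $d$ coordinates, combined with the containment noted above, then gives
\begin{align*}
    \Pr\left(\left\|\frac{1}{n}\sum_{i=1}^n \vec{x}_i - \vec{\mu}\right\| > \varepsilon\right) \leq 2d \exp\left(-\frac{n\varepsilon^2}{2d}\right).
\end{align*}

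To match the form stated in the lemma, I would finally equate the right-hand side to $\delta$ and invert to solve for $\varepsilon$, which yields $\varepsilon = \sqrt{2d\log(2d/\delta)/n}$. I do not foresee a genuine obstacle here; the argument is a textbook vector-to-scalar reduction. The only places requiring mild care are (i) getting the direction of the containment right, since one must reason from the coordinate bounds to the norm bound rather than the reverse, and (ii) keeping the Hoeffding constant consistent with the range $[-1,1]$, so that after absorbing the $1/\sqrt{d}$ rescaling the factor $1/(2d)$ in the exponent comes out correctly.
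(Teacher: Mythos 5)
Your proof is correct and takes essentially the same route as the paper: coordinate-wise Hoeffding with deviation $\varepsilon/\sqrt{d}$, followed by a union bound over the $d$ coordinates and inversion of $2d\exp(-n\varepsilon^2/(2d))=\delta$. Your phrasing via the complement event $\{\|\cdot\|>\varepsilon\}\subseteq\bigcup_j\{|\cdot_j|>\varepsilon/\sqrt{d}\}$ is in fact cleaner, since it keeps all inequality directions consistent (the paper's own chain writes the containment and union-bound steps with the inequality signs reversed, and drops a factor of $2$ in its final "solve for $\delta$" line).
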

Given this lemma, we can prove the following result.
\begin{thm}\label{thm:consistency-structural}
   Let Assumptions~\ref{assum:completeness-proxy},~\ref{assume-consist} and \ref{assume-good-support} hold. 
   Let $\eta = \sup_{a\in \mathcal{A}} \eta_a$. 
   Given stage 1 data $S_1 = \{(w_i, a_i, z_i)\}_{i=1}^m$, stage 2 data $S_2 = \{(\tilde y_i, \tilde a_i, \tilde z_i)\}_{i=1}^n$, additional outcome-proxy variable data $S_W = \{w^{\mathrm{extra}}_i\}_{i=1}^{n_W}$ , then with  probability  at least $1-7\delta$, we have
   \begin{align*}
       &\|f^*_{\mathrm{struct}} - \hat{f}_{\mathrm{struct}} \|_{\prob{A}}\\
       &\leq \sqrt{\frac{2d_W \log 2d_W/\delta}{n_W}} + \eta\tau \left(\kappa_2 + 2M\sqrt{\kappa_1 + 4\hat{\mathfrak{R}}_{S_1}(\mathcal{H}_1) + 24\sqrt{\frac{\log 2/\delta}{2m}}} + \sqrt{4\hat{\mathfrak{R}}_{S_2}(\mathcal{H}_2) + 24M^2\sqrt{\frac{\log 2/\delta}{2n}}}\right),
   \end{align*}
   where $\hat{f}_{\mathrm{struct}}$ is given in \eqref{eq:f-hat-def} and $d_W$ is the dimension of $\vec{\psi}_{\theta_W}$.
   \begin{proof}
   From the relationship between structural function and bridge function, we have
   \begin{align*}
       &\|f^*_{\mathrm{struct}} - \hat{f}_{\mathrm{struct}} \|_{\prob{A}}\\
       & = \|\expect[W]{h^*(A,W)} -  \hat{f}_{\mathrm{struct}}\|_{\prob{A}}\\
       & \leq  \left\|\expect[W]{h^*(A,W)} -  \expect[W]{\hat{h}(A,W)}\right\|_{\prob{A}} + \left\| \expect[W]{\hat{h}(A,W)} - \hat{f}_{\mathrm{struct}}\right\|_{\prob{A}}.
   \end{align*}
   We can bound each term as follows. For the first term, we have
   \begin{align*}
       &\left\|\expect[W]{h^*(\cdot,W)} -  \expect[W]{\hat{h}(\cdot,W)}\right\|_{\prob{A}}\\
       & = \sqrt{\int \left(\expect[W]{h^*(a,W) - \hat{h}(a,W)}\right)^2 \rho_{A}(a)\intd a}\\
       & = \sqrt{\int \left(\expect[W|A=a]{(h^*(a,W) - \hat{h}(a,W))\frac{\rho_W(W)}{\rho_{W|A}(W|a)}}\right)^2 \rho_{A}(a)\intd a}\\
       & \leq \sqrt{\int \left\|h^*(a,W) - \hat{h}(a,W)\right\|^2_{\prob{W|A=a}}\left\|\frac{\rho_W(W)}{\rho_{W|A}(W|a)}\right\|^2_{\prob{W|A=a}} \rho_{A}(a)\intd a}\quad \because \text{~Cauchy–Schwarz inequality}\\
       & \leq \sqrt{\int \eta^2_a \expect[W|A=a]{(h^*(a,W) - \hat{h}(a,W))^2} \rho_{A}(a)\intd a}\quad \because \text{~Assumption~\ref{assume-good-support}}\\
       & \leq \eta \sqrt{\expect[W,A]{(h^*(A,W) - \hat{h}(A,W))^2}}\\
       &= \eta \|h^*- \hat{h}\|_{\prob{A,W}}.
   \end{align*}
   From Theorem~\ref{thm:totalConsistency}, with  probability at least $1-6\delta$, we have
   \begin{align*}
       &\left\|\expect[W]{h^*(\cdot,W)} -  \expect[W]{\hat{h}(\cdot,W)}\right\|_{\prob{A}} \\
       & \leq \eta\tau \left(\kappa_2 + 2M\sqrt{\kappa_1 + 4\hat{\mathfrak{R}}_{S_1}(\mathcal{H}_1) + 24\sqrt{\frac{\log 2/\delta}{2m}}} + \sqrt{4\hat{\mathfrak{R}}_{S_2}(\mathcal{H}_2) + 24M^2\sqrt{\frac{\log 2/\delta}{2n}}}\right).
   \end{align*}
   
   For the second term, from Assumption~\ref{assume-consist}, we have with  probability at least $1-\delta$,
   \begin{align*}
       \left\| \expect[W]{\hat{h}(A,W)} - \hat{f}_{\mathrm{struct}}\right\|_{\prob{A}}
       & \leq M \left\|\vec{\mu}_{\hat{\theta}_W} -\expect[W]{\vec{\psi}_{\hat{\theta}_W}}\right\|\\
       &\leq \sqrt{\frac{2d_W \log 2d_W/\delta}{n_W}}
   \end{align*}
   The last inequality holds from Lemma~\ref{lem:multi-hoeffding}. Using the uniform inequality, we have shown the claim.
   \end{proof}
\end{thm}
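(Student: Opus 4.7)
The plan is to decompose $\|f^{*}_{\mathrm{struct}} - \hat{f}_{\mathrm{struct}}\|_{\prob{A}}$ into two contributions via the triangle inequality. By Corollary~\ref{cor:ate}, the true structural function satisfies $f^{*}_{\mathrm{struct}}(a) = \expect[W]{h^{*}(a,W)}$, so I introduce the intermediate quantity $\expect[W]{\hat{h}(\cdot,W)}$ and write
\begin{align*}
\|f^{*}_{\mathrm{struct}} - \hat{f}_{\mathrm{struct}}\|_{\prob{A}} \le \bigl\|\expect[W]{h^{*}(\cdot,W)} - \expect[W]{\hat h(\cdot,W)}\bigr\|_{\prob{A}} + \bigl\|\expect[W]{\hat h(\cdot,W)} - \hat f_{\mathrm{struct}}\bigr\|_{\prob{A}}.
\end{align*}
The first term captures the bridge-function approximation error (from Stages 1 and 2), while the second term captures the Monte-Carlo error from replacing $\expect[W]{\vec{\psi}_{\hat\theta_W}(W)}$ by the empirical mean $\vec{\mu}_{\hat\theta_W}$ over $S_W$.

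For the first term, the core difficulty is that $\|h^{*}-\hat h\|$ has already been controlled under $\prob{A,W}$ in Theorem~\ref{thm:totalConsistency}, but here I need to integrate $h^{*}-\hat h$ under the marginal $\prob{W}$ inside an outer integral over $\prob{A}$. I plan to perform a change of measure: rewrite $\expect[W]{h^{*}(a,W)-\hat h(a,W)}$ as $\expect[W|A=a]{(h^{*}(a,W)-\hat h(a,W))\,\rho_W(W)/\rho_{W|A}(W|a)}$ and apply Cauchy--Schwarz pointwise in $a$, so that
\begin{align*}
\bigl(\expect[W]{h^{*}(a,W)-\hat h(a,W)}\bigr)^{2} \le \|h^{*}(a,\cdot)-\hat h(a,\cdot)\|^{2}_{\prob{W|A=a}}\,\eta_{a}^{2}.
\end{align*}
Assumption~\ref{assume-good-support} ensures $\eta_a \le \eta < \infty$, so after integrating over $\prob{A}$ I obtain a bound of the form $\eta\,\|h^{*}-\hat h\|_{\prob{A,W}}$, and Theorem~\ref{thm:totalConsistency} finishes this contribution with probability at least $1-6\delta$.

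For the second term, Assumption~\ref{assume-consist} gives $|\hat{\vec u}^{\top}(\vec{\psi}_{\hat\theta_{A(2)}}(a)\otimes \vec v)| \le M$ for $\|\vec v\|\le 1$, from which
\begin{align*}
\bigl\|\expect[W]{\hat h(A,W)} - \hat f_{\mathrm{struct}}\bigr\|_{\prob{A}} \le M\,\bigl\|\vec{\mu}_{\hat\theta_W} - \expect[W]{\vec{\psi}_{\hat\theta_W}(W)}\bigr\|.
\end{align*}
Since $\|\vec{\psi}_{\hat\theta_W}(w)\|\le 1$ each coordinate lies in $[-1,1]$, so Lemma~\ref{lem:multi-hoeffding} applied to the i.i.d.\ sample $S_W$ yields the $\sqrt{2d_W\log(2d_W/\delta)/n_W}$ term with probability $1-\delta$. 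A union bound over the two events produces the stated $1-7\delta$ probability and the full inequality. The main obstacle is the change-of-measure step: it is essential that Assumption~\ref{assume-good-support} supplies a uniform density-ratio bound, since otherwise the marginal $L^{2}(\prob{A,W})$ control obtained in Theorem~\ref{thm:totalConsistency} would not transfer to a bound on $\|f^{*}_{\mathrm{struct}}-\hat f_{\mathrm{struct}}\|_{\prob{A}}$.
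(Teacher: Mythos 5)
Your proposal is correct and follows essentially the same route as the paper's own proof: the same triangle-inequality decomposition into a bridge-function error term and a Monte-Carlo term, the same change of measure with the density ratio $\rho_W/\rho_{W|A}$ plus Cauchy--Schwarz and Assumption~\ref{assume-good-support} to reduce the first term to $\eta\|h^*-\hat h\|_{\prob{A,W}}$ controlled by Theorem~\ref{thm:totalConsistency}, and the same use of Assumption~\ref{assume-consist} with Lemma~\ref{lem:multi-hoeffding} for the second term, finished by a union bound. No substantive differences to report.
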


We can evaluate the error in estimating a value function as well. Given $S_3 = \{\check{w}_i, \check{c}_i\}_{i=1}^{n'}$, we estimate the value function as 
\begin{align*}
    \hat{v}(\pi) = \frac1{n'} \sum_{i=1}^{n}  \hat{\vec{u}}^\top (\vec{\psi}_{\hat{\theta}_{A(2)}}(\pi(\check{c}_i)) \otimes \vec{\psi}_{\hat{\theta}_{W}}(\check{w}_i)),
\end{align*}
Furthermore, we assume the following relationship between distributions of $A$ and $C$
\begin{assum} \label{assume-policy-l2}
    There exists a constant $\sigma$ such that 
    \begin{align*}
        \|l(\pi(C), W)\|_{\prob{C,W}} \leq  \sigma \|l(A,W)\|_{\prob{A,W}} 
    \end{align*}
    for all square integrable functions $l: \mathcal{A} \times \mathcal{W} \to \mathbb{R}, \|l\| < \infty$.
\end{assum}

Given these assumptions, we have the following theorem.

\begin{thm}
   Let Assumptions~\ref{assum:completeness-proxy},~\ref{assume-consist},~\ref{assume-good-support},~\ref{assume-policy-l2} hold. Given stage 1 data $S_1 = \{(w_i, a_i, z_i)\}_{i=1}^m$, stage 2 data $S_2 = \{(\tilde y_i, \tilde a_i, \tilde z_i)\}_{i=1}^n$, stage 3 data $S_3= \{\check{w}_i, \check{c}_i\}_{i=1}^{n'}$, with at least probability $1-7\delta$, we have
   \begin{align*}
       &|v(\pi) - \hat{v}(\pi)|\\
       &\leq\sigma\tau \left(\kappa_2 + 2M\sqrt{\kappa_1 + 4\hat{\mathfrak{R}}_{S_1}(\mathcal{H}_1) + 24\sqrt{\frac{\log 2/\delta}{2m}}} + \sqrt{4\hat{\mathfrak{R}}_{S_2}(\mathcal{H}_2) + 24M^2\sqrt{\frac{\log 2/\delta}{2n}}}\right)\\
       &\quad\quad  + \sqrt{\frac{M^2\log 2/\delta}{n'}}.
   \end{align*}
  
   \begin{proof}
        We have
        \begin{align*}
    |v(\pi) - \hat{v}(\pi)| &= \left| \expect[W|C]{h^*(\pi(C), W)} - \hat{v}(\pi)\right| \\
    &\leq \left| \expect[W,C]{h(\pi(C), W)} - \expect[W,C]{\hat{h}(\pi(C), W)}\right| \\
    &\quad \quad + \left| \expect[W, C]{\hat{h}(\pi(C), W)} - \hat{v}(\pi) \right|.
\end{align*}
    We bound each term as follows. For the first term, we have
    \begin{align*}
        &\left| \expect[W,C]{h^*(\pi(C), W)} - \expect[W,C]{\hat{h}(\pi(C), W)}\right|\\
        &\leq \|h^*(\pi(C), W) -\hat{h}(\pi(C), W) \|_{\prob{C,W}} \quad\because\text{Jensen's inequality}\\
        &\leq \sigma \|h^*(A, W) -\hat{h}(A, W) \|_{\prob{A,W}} \quad\because\text{Assumption~\ref{assume-policy-l2}}
    \end{align*}
    Hence, from Theorem~\ref{thm:totalConsistency}, we have
    \begin{align*}
        &\left| \expect[W,C]{h^*(\pi(C), W)} - \expect[W,C]{\hat{h}(\pi(C), W)}\right|\\
        &\leq \sigma\tau \left(\kappa_2 + 2M\sqrt{\kappa_1 + 4\hat{\mathfrak{R}}_{S_1}(\mathcal{H}_1) + 24\sqrt{\frac{\log 2/\delta}{2m}}} + \sqrt{4\hat{\mathfrak{R}}_{S_2}(\mathcal{H}_2) + 24M^2\sqrt{\frac{\log 2/\delta}{2n}}}\right)
    \end{align*}
    with probability at least $1-6\delta$. 
    
    For the second term, we have 
    \begin{align*}
        &\left|\expect[W,C]{\hat{\vec{u}}^\top \left(\vec{\psi}_{\hat{\theta}_{A(2)}}(\pi(C)) \otimes \vec{\psi}_{\hat{\theta}_{W}}(W)\right)}- \hat{v}(\pi)\right| \leq \sqrt{\frac{M^2\log 2/\delta}{n'}}
    \end{align*}
    with probability at least  $1-\delta$, from Hoeffding's inequality. The result is obtained by taking the union bound.
   \end{proof}
\end{thm}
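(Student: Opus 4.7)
The plan is to decompose the error via the triangle inequality into two pieces, one measuring the population-level difference between the true and estimated bridge functions pushed through the policy, and one controlling the Monte Carlo error from averaging over $S_3$. By Proposition~\ref{prop:value-function} (equivalently Corollary~\ref{cor:value-func}), which applies under Assumptions~\ref{assum:stuctural},~\ref{assum:completeness-confounder} together with the conditional independence $C \indepe W \mid U,$ the true value admits the representation $v(\pi) = \expect[C,W]{h^*(\pi(C), W)}.$ Writing
\begin{align*}
|v(\pi) - \hat{v}(\pi)| \leq \bigl|\expect[C,W]{h^*(\pi(C),W) - \hat{h}(\pi(C),W)}\bigr| + \bigl|\expect[C,W]{\hat{h}(\pi(C),W)} - \hat{v}(\pi)\bigr|,
\end{align*}
gives the two pieces to bound.

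For the first term, I would first apply Jensen's inequality to pass from the absolute value of the expectation to the $L^2(\prob{C,W})$ norm of $h^*(\pi(C),W) - \hat{h}(\pi(C),W).$ Then I would invoke Assumption~\ref{assume-policy-l2} to change the measure, trading the push-forward $\prob{\pi(C),W}$ for the data distribution $\prob{A,W}$ at the cost of the constant $\sigma$; this yields the upper bound $\sigma \| h^* - \hat h \|_{\prob{A,W}}.$ Finally I would apply Theorem~\ref{thm:totalConsistency} (which holds with probability at least $1-6\delta$ under Assumptions~\ref{assum:completeness-proxy}~and~\ref{assume-consist}), producing the $\sigma\tau(\cdots)$ term in the stated bound.

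For the second term, note that $\hat{v}(\pi)$ is an empirical mean over the $n'$ i.i.d.\ samples in $S_3$ of $\hat{h}(\pi(\check{c}_i),\check{w}_i),$ and by Assumption~\ref{assume-consist}~(parts~3~and~4) these summands are uniformly bounded in absolute value by $M.$ A one-sided Hoeffding inequality applied to the centered bounded variable then yields a deviation bound of order $\sqrt{M^2 \log(2/\delta)/n'}$ with probability at least $1-\delta.$ Finally, a union bound over the two events, taking $6\delta$ for Theorem~\ref{thm:totalConsistency} and $\delta$ for the Hoeffding step, delivers the claimed $1-7\delta$ confidence level.

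The main obstacle I anticipate is in the first term: correctly handling the distribution shift from $\prob{C,W}$ (the evaluation-time joint) to $\prob{A,W}$ (the training joint used by Theorem~\ref{thm:totalConsistency}). Assumption~\ref{assume-policy-l2} is precisely designed to make this transition quantitative, but one has to verify that the relevant function $(a,w) \mapsto h^*(a,w) - \hat{h}(a,w)$ is square-integrable so that the assumption applies, which follows from the boundedness in Assumption~\ref{assume-consist}. Everything else — Jensen, triangle inequality, Hoeffding, and the union bound — is routine once the pieces are aligned.
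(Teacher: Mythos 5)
Your proposal is correct and follows essentially the same route as the paper's proof: the identical triangle-inequality decomposition, Jensen plus Assumption~\ref{assume-policy-l2} and Theorem~\ref{thm:totalConsistency} for the population term, Hoeffding for the Monte Carlo term, and a union bound giving $1-7\delta$. Your additional remarks---explicitly invoking Corollary~\ref{cor:value-func} for the representation of $v(\pi)$ and checking square-integrability of $h^*-\hat{h}$ before applying Assumption~\ref{assume-policy-l2}---are sensible housekeeping that the paper leaves implicit.
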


\section{DFPV algorithm with observable confounders}
\label{sec:observable-confounder}
In this appendix, we formulate the DFPCL method when observable confounders are present, building on \citet{Tchetgen2020} and \citetMastouri. Here, we consider the causal graph given in Figure~\ref{fig:observable_confounder}.  In addition to variables $(A, Y, Z, W)$, we have an observable confounder $X \in \mathcal{X}$. The structural function $f_\mathrm{struct}$ we aim to learn is 
\begin{align*}
    f_\mathrm{struct}(a) = \expect[X,U]{\expect{Y \mid A =a, X, U}}.
\end{align*}

\begin{figure}
    \centering
        \begin{tikzpicture}
           \node[state, fill=yellow] (a) at (0,0) {$A$};
    
        \node[state, fill=yellow] (y) [right =of a, xshift = 0.3cm] {$Y$};
        \node[state, fill=yellow] (z) [above left =of a, xshift = 0.7cm, yshift=-0.3cm] {$Z$};
        \node[state, fill=yellow] (w) [above right =of y, xshift = -0.5cm, yshift=-0.3cm] {$W$};
        \node[state] (eps) [above right =of a, xshift = -0.5cm, yshift=1.0cm] {$U$};
        \node[state, fill=yellow] (x) [above right =of a, xshift = -0.5cm, yshift=-0.3cm] {$X$};
        \path (a) edge (y);
        \path[bidirected] (z) edge (a);
        \path (eps) edge (y);
        \path (eps) edge (a);
        \path (eps) edge (z);
        \path (eps) edge (x);
        \path[bidirected] (eps) edge (w);
        \path (w) edge (y);
        \path (x) edge (y);
        \path (x) edge (a);
        \path (x) edge (z);
        \path (x) edge (w);
        \end{tikzpicture}   
    \caption{Causal graph with observable confounder}
    \label{fig:observable_confounder}
\end{figure}

The structural assumption and completeness assumption including observable confounders are given as follows.
\begin{assum}[Structural Assumption \citep{Mastouri2021Proximal}] \label{assum:stuctural-with-observable}
We assume $Y \indepe Z | A, U, X$, and $W \indepe (A,Z) | U, X$.
\end{assum}
\begin{assum}[Completeness Assumption \citep{Mastouri2021Proximal}] \label{assum:completeness-with-observable}
Let $l: \mathcal{U} \to \mathbb{R}$ be any square integrable function $\|l\|_{\prob{U}}$. We assume the following:
\begin{align*}
    &\expect{l(U) \mid A=a, Z=z, X=x}=0\quad \forall (a,z,x) \in \mathcal{A} \times \mathcal{Z} \times \mathcal{X} \quad\Leftrightarrow \quad l(u) = 0~\mathrm{a.s.}\\
    &\expect{l(U) \mid A=a, Z=z, X=x}=0\quad \forall (a,z,x) \in \mathcal{A} \times \mathcal{Z}\times \mathcal{X} \quad\Leftrightarrow \quad l(u) = 0~\mathrm{a.s.}
\end{align*}
\end{assum}
Following  similar reasoning as in Section \ref{sec:preliminary}, we can estimate the bridge function $\hat{h}: \mathcal{A} \times \mathcal{X} \times \mathcal{W} \to \mathbb{R}$ by minimizing the following loss:
\begin{align*}
    \hat{h} = \argmin_{h \in \mathcal{H}_h} \tilde{\mathcal{L}}(h), \quad \tilde{\mathcal{L}}(h) = \expect[Y,A,Z, X]{(Y - \expect[W|Z,A,X]{h(A,X,W)})^2} + \Omega(h). 
\end{align*}
Given bridge function, we can estimate the structural function by
\begin{align*}
    f_{\mathrm{struct}}(a) = \expect[X,W]{h^*(a, X, W)}.
\end{align*}
Similar to \citetMastouri,
we model 
\begin{align*}
&\expect[W|a,z]{\vec{\psi}_{\theta_{W}}(w)} = \vec{V}(\vec{\phi}_{\theta_{A(1)}}(A) \otimes \vec{\phi}_{\theta_Z}(Z)) \otimes \vec{\phi}_{\theta_{X(1)}}(X))\\
&  h(a,x, w) = \vec{u}^\top (\vec{\psi}_{\theta_{A(2)}}(a) \otimes \vec{\psi}_{\theta_{X(2)}}(x) \otimes \vec{\psi}_{\theta_{W}}(w)),
\end{align*}
where $\vec{\phi}_{\theta_{X(1)}}(X), \vec{\psi}_{\theta_{X(2)}}(X)$ are the feature maps of $X$ parameterized by $\theta_{X(1)}, \theta_{X(2)}$, respectively. Then, in stage 1, we learn $(\vec{V}, \theta_{A(1)}, \theta_{Z}, \theta_{X(1)})$ by minimizing 
\begin{align}
     \hat{\mathcal{L}}_1(\vec{V}, \theta_{A(1)}, \theta_Z, \theta_{X(1)}) = \frac1m \sum_{i=1}^m \left\|\vec{\psi}_{\theta_{W}}(w_i) - \vec{V}\left(\vec{\phi}_{\theta_{A(1)}}(a_i) \otimes \vec{\phi}_{\theta_Z}(z_i) \otimes \vec{\phi}_{\theta_X}(x_i) \right) \right\|^2 + \lambda_1 \|\vec{V}\|^2
,  \label{eq:stage1-loss-with-confounder}
\end{align}
which estimates the conditional expectation $\expect[W]{\vec{\psi}_W(W) |A=a, Z=z, X=x}$.  Let $(\hat{\vec{V}}, \hat{\theta}_{A(1)}, \hat{\theta}_{Z}, \hat{\theta}_{X(1)})$ be the minimizer of $\hat{\mathcal{L}}_1$.
Then, in stage 2, we can learn $(\vec{u}, \theta_{A(2)}, \theta_{Z}, \theta_{X(2)})$ using
\begin{align}
    \hat{\mathcal{L}}_2(\vec{u}, \theta_{A(2)}, \theta_{W}, \theta_{X(2)}) = \frac1n \sum_{i=1}^n \left(\tilde y_i - \vec{u}^\top \left(\vec{\psi}_{\theta_{A(2)}}(\tilde a_i) \otimes \vec{\psi}_{\theta_{X(2)}}(\tilde x_i) \otimes \hat{\vec{V}}\vec{v}_1(\tilde a_i,\tilde x_i,\tilde z_i)  \right)\right)^2 + \lambda_2 \|\vec{u}\|^2, \label{eq:stage2-loss-with-confounder}
\end{align}
where we denote $\vec{v}_1(a,x,z) = \left(\vec{\phi}_{\hat{\theta}_{A(1)}}(\tilde a_i) \otimes\vec{\phi}_{\hat{\theta}_X}(\tilde x_i) \otimes\vec{\phi}_{\hat{\theta}_Z}(\tilde z_i)\right)$.

In DFPV, we first fix parameters $\vec{\theta} = (\theta_{A(1)}, \theta_{Z}, \theta_{X(1)}, \theta_{A(2)}, \theta_{Z}, \theta_{X(2)})$ and obtain weights. This is given as 
\begin{align}
    &\hat{\vec{V}}(\vec{\theta}) = \Psi_1^\top \Phi_1 (\Phi_1^\top \Phi_1 + m\lambda_1 I)^{-1}, &
    &\hat{\vec{u}}(\vec{\theta}) = \left(\Phi_2 ^\top \Phi_2 + n\lambda_2 I\right)^{-1}\Phi_2^\top \vec{y}_2, \label{eq:weights-sol-with-confounder}
\end{align}
where we denote $\vec{\theta} = (\theta_{A(1)}, \theta_Z, \theta_{A(2)}, \theta_{
W})$ and define matrices as follows:
\begin{align*}
    &\Psi_1 = \left[\vec{\psi}_{\theta_W}(w_1), \dots, \vec{\psi}_{\theta_W}(w_m)\right]^\top, & &\Phi_1 = [\vec{v}_1(a_1, z_1), \dots, \vec{v}_1(a_m, z_m)]^\top,\\
    &\vec{y}_2 = [\tilde y_1, \dots, \tilde y_n]^\top, & &\Phi_2 = [\vec{v}_2(\tilde a_1, \tilde z_1), \dots, \vec{v}_2(\tilde a_n, \tilde z_n)]^\top,&\\
    &\vec{v}_1(a, x, z) = \vec{\phi}_{\theta_{A(1)}}(a) \otimes \vec{\phi}_{\theta_{X(1)}}(x) \otimes\vec{\phi}_{\theta_Z}(z), &&\vec{v}_2(a, z) = \vec{\psi}_{\theta_{A(2)}}(a) \otimes \vec{\psi}_{\theta_{X(2)}}(x) \otimes \left(\hat{\vec{V}}(\vec{\theta})\vec{v}_1(a, x, z)\right).
\end{align*}
We learn the parameters by minimizing the following:
\begin{align*}
    &\hat{\mathcal{L}}^{\mathrm{DFPV}}_1(\vec{\theta}) = \frac1m \sum_{i=1}^m \left\|\vec{\psi}_{\theta_{W}}(w_i) - \hat{\vec{V}}(\vec{\theta})\vec{v}_1(a_i, x_i, z_i) \right\|^2 + \lambda_1 \|\hat{\vec{V}}(\vec{\theta})\|^2,\\
    &\hat{\mathcal{L}}^{\mathrm{DFPV}}_2(\vec{\theta}) = \frac1n \sum_{i=1}^n \left(\tilde y_i - \hat{\vec{u}}(\vec{\theta})^\top \vec{v}_2(\tilde a_i, \tilde x_i, \tilde z_i) \right)^2 \!+\!  \lambda_2 \|\hat{\vec{u}}(\vec{\theta})\|^2.
\end{align*}
The algorithm is given in Algorithn~\ref{algo2}.

\begin{algorithm}
    \caption{Deep Feature Instrumental Variable with Observable Confounder}
    \begin{algorithmic}[1]  \label{algo2}
    \renewcommand{\algorithmicrequire}{\textbf{Input:}}
    \renewcommand{\algorithmicensure}{\textbf{Output:}}
   \REQUIRE Stage 1 data $(a_i, z_i, w_i, x_i)$, Stage 2 data $(\tilde a_i \tilde z_i, \tilde y_i, \tilde x_i)$, Regularization parameters $(\lambda_1, \lambda_2)$. Initial values $\vec{\theta}^{(0)} = (\theta^{(0)}_{A(1)}, \theta^{(0)}_Z, \theta^{(0)}_{A(2)}, \theta^{(0)}_{W},\theta^{(0)}_{X(1)}, \theta^{(0)}_{X(2)})$. Learning rate $\alpha$, additional data $(x_i^{\mathrm{extra}}, w_i^{\mathrm{extra}})$
 \ENSURE Estimated structural function $\hat{f}_\mathrm{struct}(a)$
  \STATE $t \leftarrow 0$
 \REPEAT
    \STATE Compute $\hat{\vec{V}}(\vec{\theta}), \hat{\vec{u}}(\vec{\theta})$ in \eqref{eq:weights-sol-with-confounder} 
    \STATE Update parameters in features $\vec{\theta}^{(t+1)} \leftarrow (\theta^{(t+1)}_{A(1)}, \theta^{(t+1)}_Z, \theta^{(t+1)}_{A(2)}, \theta^{(t+1)}_{W},\theta^{(t+1)}_{X(1)}, \theta^{(t+1)}_{X(2)})$ by
    \begin{align*}
        &\theta^{(t+1)}_{A(1)} \leftarrow \theta^{(t)}_{A(1)} - \alpha \nabla_{\theta_{A(1)}}\hat{\mathcal{L}}^{\mathrm{DFPV}}_1(\vec{\theta})|_{\vec{\theta} =\vec{\theta}^{(t)}},& & \theta^{(t+1)}_{Z} \leftarrow \theta^{(t)}_{Z} - \alpha \nabla_{\theta_{Z}}\hat{\mathcal{L}}^{\mathrm{DFPV}}_1(\vec{\theta})|_{\vec{\theta} =\vec{\theta}^{(t)}}\\
        &\theta^{(t+1)}_{X(1)} \leftarrow \theta^{(t)}_{X(1)} - \alpha \nabla_{\theta_{X(1)}}\hat{\mathcal{L}}^{\mathrm{DFPV}}_1(\vec{\theta})|_{\vec{\theta} =\vec{\theta}^{(t)}}, & & \theta^{(t+1)}_{X(2)} \leftarrow \theta^{(t)}_{X(2)} - \alpha \nabla_{\theta_{X(2)}}\hat{\mathcal{L}}^{\mathrm{DFPV}}_2(\vec{\theta})|_{\vec{\theta} =\vec{\theta}^{(t)}}\\
        &\theta^{(t+1)}_{A(2)} \leftarrow \theta^{(t)}_{A(2)} - \alpha \nabla_{\theta_{A(2)}}\hat{\mathcal{L}}^{\mathrm{DFPV}}_2(\vec{\theta})|_{\vec{\theta} =\vec{\theta}^{(t)}}, & & \theta^{(t+1)}_{W} \leftarrow \theta^{(t)}_{W} - \alpha \nabla_{\theta_{W}}\hat{\mathcal{L}}^{\mathrm{DFPV}}_2(\vec{\theta})|_{\vec{\theta} =\vec{\theta}^{(t)}} 
    \end{align*}
    \STATE Increment counter $t \leftarrow t+1;$
  \UNTIL{\textbf{convergence}}
  \STATE Compute $\hat{\vec{u}}(\vec{\theta}^{(t)})$ from \eqref{eq:weights-sol-with-confounder} 
  \STATE Compute mean feature for $W$ using stage 1 dataset 
  \begin{align*}
    \vec{\mu}_{\theta_{X(2)} \otimes \theta_W} \leftarrow \frac1n \sum \vec{\psi}_{\theta^{(t)}_{X(2)}}(x^{\mathrm{extra}}_i) \otimes \vec{\psi}_{\theta^{(t)}_W}(w^{\mathrm{extra}}_i)    
  \end{align*}
 \RETURN $\hat{f}_\mathrm{struct}(a) = (\hat{\vec{u}}^{(t)})^\top \left(\vec{\psi}_{\hat{\theta}^{(t)}_{A(2)}}(a) \otimes \vec{\mu}_{\theta_{X(2)} \otimes \theta_W}\right)$
    \end{algorithmic} 
    \end{algorithm}

\section{Additional Experiments} \label{sec:additional-experiments}
In this appendix, we report the results of two additional experiments. One is a synthetic setting introduced in \citet{Mastouri2021Proximal}, which has a simpler data generating process. The other is based on the real-world setting introduced by  \citet{Deaner2018Proxy}.  In both setting, DFPV performs similarly to or better than existing methods.

\subsection{Experiments with Simpler Data Generating Process}
Here, we show the result for the synthetic setting proposed in \citet{Mastouri2021Proximal}. The data generating process for each variable is given as follows: 
\begin{align*}
    &U := [U_1, U_2], \quad U_2 \sim \mathrm{Unif}[-1, 2] \quad U_1, \sim \mathrm{Unif}[0, 1] - \mathbbm{1}[0 \leq U_2 \leq 1]\\
    &Z := [U_1 + \mathrm{Unif}[-1, 1],~U_2 + \mathcal{N}(0, 3)]\\
    &W := [U_1 + \mathcal{N}(0, 3),~ U_2 + \mathrm{Unif}[-1, 1]]\\
    &A := U_2 + \mathcal{N}(0, 0.05)\\
    &Y := U_2 \cos(2(A + 0.3U_1 + 0.2))
\end{align*}

From observations of $(Y, W, Z, A)$, we estimate $\hat{f}_\mathrm{struct}$ by PCL. For each estimated $\hat{f}_\mathrm{struct}$, we measure out-of-sample error as the mean square error of $\hat{f}$ versus true $f_\mathrm{struct}$ obtained from Monte-Carlo simulation. Specifically, we consider 20 evenly spaced values of $A \in [0.0, 1.0]$ as the test data. The results with data size $n=m=\{500, 1000\}$ are shown in Figure~\ref{fig:kpv_ate}.

\begin{figure}[t]
    \centering
    \includegraphics[height=150pt]{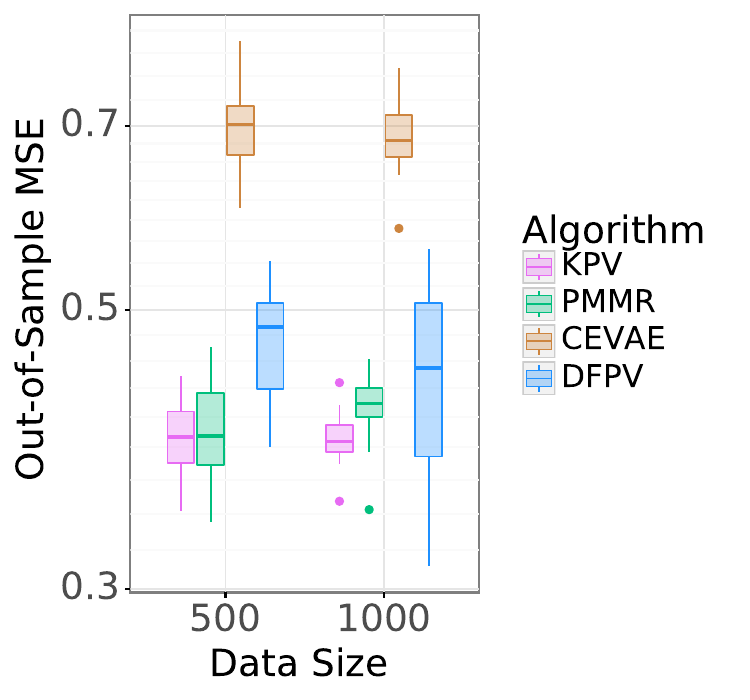}
    \caption{Result of structural function experiment in the setting in \citet{Mastouri2021Proximal}}
    \label{fig:kpv_ate}
\end{figure}

From Figure~\ref{fig:kpv_ate}, we can see that DFPV and CEVAE methods perform worse and have larger variances than KPV and PMMR methods. This is not surprising, since DFPV tends to require more data than KPV and PMMR, as needed to learn the neural net feature maps (rather than using fixed pre-defined kernel features). Hence, we can say that we should favor KPV and PMMR over DFPV when the data is low-dimensional and the relations between the variables are smooth. We would like to note, however, DFPV outperforms CEVAE, which shows that the proxy setting is still required.

\subsection{Experiments using Grade Retention dataset}
To test the performance of DFPV in a more realistic setting, we conducted the experiment on the Grade Retention dataset introduced by \citet{Deaner2018Proxy}. This aims to estimate the effect of grade retention based on the score of math and reading on the long-term cognitive outcomes, in which we use scores in elementary school as a treatment-inducing proxy (Z) and cognitive test scores from Kindergarten as the outcome-inducing proxy (W). Following \citet{Mastouri2021Proximal}, we generate a synthetic "ground truth" by fitting a generalized additive model to learn a structured causal model (SCM), and a Gaussian mixture model to learn unmeasured confounder based on the learned SCM. Note, this is needed since for real-world data there is no measured ground truth.

\begin{table}[]
    \centering
    \begin{tabular}{c|c|c|c|c}
         &  DFPV & CEVAE & KPV & PMMR\\\hline
        Math & 0.023(0.001) &  0.054(0.007) & 0.043(0.000) & 0.032(0.001)\\
        Reading & 0.027(0.002) & 0.082(0.007) &0.028(0.000) & 0.022(0.000)\\
    \end{tabular}
    \caption{Results of grade retension dataset}
    \label{tab:deaner}
\end{table}

Table~\ref{tab:deaner} shows the result for this dataset.  In this setting, the performance of DFPV matches KPV and PMMR. As in the experiment described in the revious section, the setting is low-dimensional (one-dim treatment variable, three-dim treatment-inducing proxy, four-dim outcome-inducing proxy) and the generative model is smooth (the "ground truth" being a generalized additive model and a Gaussian mixture model). For these reasons, we might again expect this data to favor kernel methods, such as KPV and PMMR; nonetheless, our method matches them. DFPV again outperforms CEVAE in this setting.

\subsection{Experiments with Alternative dSprite Experiments} \label{sec:original-dsprite}
In the preceding version of this document, we employed a different experimental dSprite setting. 
We used the  structural function 
\begin{align*}
    f_{\mathrm{struct}}(A) =\frac{\|BA\|_2^2 - 5000}{1000}.
\end{align*}
where each element of the matrix $B \in \mathbb{R}^{10\times4096}$ was generated from $Unif(0.0, 1.0)$, and the outcome was generated as 
\begin{align*}
    Y = 12 (\mathtt{posY}-0.5)^2 f_{\mathrm{struct}}(A) + \varepsilon, \varepsilon\sim\mathcal{N}(0, 0.5).
\end{align*}
The generation generation process for variables $(Z,W,A)$ is unchanged in the present document. 

We now describe a limitation of this setting.\footnote{We  thank
Olawale  Salaudeen for alerting us to this issue.}
Sprite images $A$ have a small number of pixels with value $1$, and many pixels with value $0$. Each entry of $BA$ thus represents a sum of uniformly distributed independent random variables from $B$ corresponding to the nonzero entries of $A$. For this reason, the position of $A$ is very difficult to recover from $BA$, since this would require memorizing the specific sum of uniform random values for each sprite position. In practice, the structural function effectively appears as a constant function with additional noise due to $\|BA\|_2^2$.
Nonetheless, even recovery of this (effectively) constant function benefits from the proxy setting. Results of these earlier experiments are show in  Figure~\ref{fig:dsprite-original}.
\begin{figure}
    \centering
    \includegraphics[width = 0.5\columnwidth]{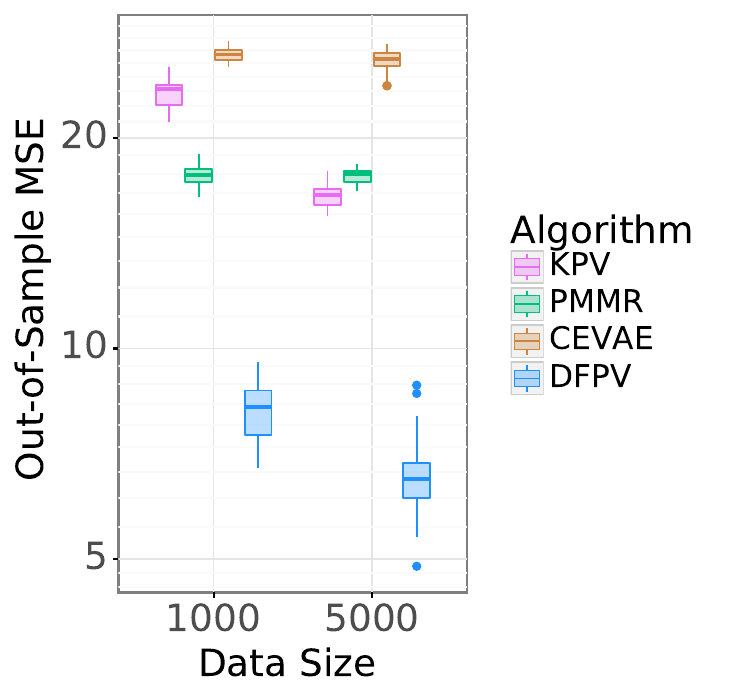}
    \caption{Alternative structural function for dsprite}
    \label{fig:dsprite-original}
\end{figure}

\section{Experiment Details}
\label{sec:data-generation-process}
In this section, we present the data generation process of experiments and the detailed settings of hyper-parameters.

\subsection{Demand Design Experiment} \label{sec:demand-design-data-generation}
Here, we introduce the details of demand design experiments. 
The observations are generated from the following causal model,
\begin{align*}
    Y = P\left(\exp\left(\frac{V-P}{10}\right) \wedge 5\right) - 5g(D) + \varepsilon, \quad \varepsilon\sim\mathcal{N}(0,1),
\end{align*}
where $Y$ represents sales, $P$ is the treatment variable (price), and these are confounded by potential demand $D$. Here we denote $a \wedge b = \min(a,b)$, and the function $g$ as 
\begin{align*}
    g(d) = 2\left(\frac{(d-5)^4}{600} + \exp(-4(d-5)^2) + \frac{d}{10} -2 \right).
\end{align*}

To correct this confounding bias, we introduce cost-shifter $C_1,C_2$ as a treatment-inducing proxy, and views $V$ of the reservation page  as the outcome-inducing proxy.
Data is sampled as 
\begin{align*}
    &D \sim \mathrm{Unif}[0,10]\\ 
    &C_1 \sim 2\sin(2D\pi/10) + \varepsilon_1\\
    &C_2 \sim 2\cos(2D\pi/10) + \varepsilon_2\\
    &V \sim 7g(D)+45 + \varepsilon_3\\
    &P = 35+(C_1+3)g(D) + C_2 + \varepsilon_4
\end{align*}
where $\varepsilon_1, \varepsilon_2, \varepsilon_3, \varepsilon_4 \sim \mathcal{N}(0,1)$. From observations of $(Y, P, C_1, C_2, V)$, we estimate $\hat{f}_\mathrm{struct}$ by PCL. For each estimated $\hat{f}_\mathrm{struct}$, we measure out-of-sample error as the mean square error of $\hat{f}$ versus true $f_\mathrm{struct}$ obtained from Monte-Carlo simulation. Specifically, we consider 10 evenly spaced values of $p \in [10, 30]$ as the test data.

\subsection{dSprite Experiment} \label{sec:dsprite-design-data-generation}

Here, we describe the data generation process for the dSprites dataset experiment. This is an image dataset parametrized via five latent variables ({\tt shape,  scale, rotation, posX} and {\tt posY}). The images are $64 \times 64 = 4096$-dimensional. In this experiment, we fixed the {\tt shape} parameter to {\tt heart}, i.e. we only used the heart-shaped images. The other latent parameters take values of $ \mathtt{scale}\in [0.5, 1],~\mathtt{rotation} \in [0, 2\pi],~\mathtt{posX} \in [0,1],~\mathtt {posY} \in [0, 1]$.

From this dataset, we generate the treatment variable $A$ and outcome $Y$ as follows:
\begin{enumerate}
    \item Uniformly samples latent parameters $(\mathtt{scale}, \mathtt{rotation}, \mathtt{posX}, \mathtt{posY})$.
    \item Generate treatment variable $A$ as
    \begin{align*}
        A = \mathtt{Fig}(\mathtt{scale}, \mathtt{rotation}, \mathtt{posX}, \mathtt{posY}) + \vec{\eta}_A.
    \end{align*}
    \item Generate outcome variable $Y$  as
    \begin{align*}
    Y =  12(\mathtt{posY}-0.5)^2\frac{(\mathrm{vec}(\vec{B})^\top A)^2 - 3000}{500} + \varepsilon, \quad \varepsilon \sim \mathcal{N}(0, 0.5).
    \end{align*}
\end{enumerate}

Here, function $\mathtt{Fig}$ returns the corresponding image for the latent parameters, and $\vec{\eta}, \varepsilon$ are noise variables generated from $\vec{\eta}_A \sim \mathcal{N}(0.0, 0.1I)$ and $\varepsilon \sim \mathcal{N}(0.0, 0.5)$. The matrix $\vec{B} \in \mathbb{R}^{64\times 64}$ was given as $B_{ij} = |32-j| / 32$. From the data generation process, we can see that $A$ and $Y$ are confounded by {\tt posY}. Treatment variable $A$ is given as a figure corrupted with Gaussian random noise. The variable $\mathtt{posY}$ is not revealed to the model, and there is no observable confounder. The structural function for this setting is
\begin{align*}
    f_{\mathrm{struct}}(a) = \frac{(\mathrm{vec}(\vec{B})^\top a)^2 - 3000}{500}.
\end{align*}

To correct this confounding bias, we set up the following PCL setting. We define the treatment-inducing variable $Z = (\mathtt{scale},\mathtt{rotation}, \mathtt{posX}) \in \mathbb{R}^3$, and the outcome-inducing variable by another figure that shares the same $\mathtt{posY}$, with the remaining latent parameters fixed as follows:
\begin{align*}
    W = \mathtt{Fig}(0.8, 0, 0.5, \mathtt{posY}) + \vec{\eta}_W,
\end{align*}
where $\vec{\eta}_W \sim \mathcal{N}(0.0, 0.1I)$.

We use 588 test points for measuring out-of-sample error, which are generated from the grid points of latent variables. The grids consist of 7 evenly spaced values for $\mathtt{posX}, \mathtt{posY}$, 3 evenly spaced values for $\mathtt{scale}$, and 4 evenly spaced values for $\mathtt{orientation}$.

\subsection{Policy Evaluation Experiments} \label{sec:ope-data-generation}
We use the same data $(Y, P, C_1, C_2, V)$ in demand design for policy evaluation experiments. We consider two policies. One is a policy depends on costs $C_1, C_2$ which is
\begin{align*}
    \pi_{C_1,C_2}(C_1, C_2) = 23 + C_1  C_2.
\end{align*}
To conduct offline-policy evaluation, we use data $(C_1,C_2,V)$ to compute the empirical average of $h(\pi_{C_1,C_2}(C_1, C_2),V)$. In our second experiment, the policy depends on current price $P$, which is given as
\begin{align*}
    \pi_P(P) = \max(0.7P, 10).
\end{align*}
Again, we use data $(P,V)$ to compute the empirical average of $h(\pi_{P}(P),V)$.

\subsection{Hyper-parameters and network architectures} \label{sec:hyper-param-and-architectures}
Here, we describe the network architecture and hyper-parameters of all experiments. 

For KPV and PMMR method, we used the Gaussian kernel where the bandwidth is determined by the median trick. We follow the procedure for hyper-parameter tuning proposed in \citet{Mastouri2021Proximal} in selecting the regularizers $\lambda_1,\lambda_2$.

For DFPV, we optimize the model using Adam \citep{Kingma2015Adam} with learning rate = 0.001, $\beta_1=0.9$, $\beta_2=0.999$ and $\varepsilon = 10^{-8}$. Regularizers $\lambda_1, \lambda_2$ are both set to 0.1 as a result of the tuning procedure described in Appendix~\ref{sec:hyper-param}. Network structure is given in Tables~\ref{dfpv-demand-network},~\ref{dfpv-dSprite-network}. \footnote{In ICLR submission, we applied ReLU activation to the final feature output, which causes numerical instability. Please refer to Appendix~\ref{sec:old-experimental-res} for the old results.}

In CEVAE, we attempt to reconstruct the latent variable $L$ from $(A,Z,W,Y)$ using a VAE. Here, we use a 20-dim latent variable $L$, whose the conditional distribution is specified as follows:
\begin{align*}
    q(L|A,Z,W,Y) = \mathcal{N}(\vec{V}_1\vec{\psi}_q(A,Z,W,Y), \mathrm{diag}(\vec{V}_2\vec{\psi}_q(A,Z,W,Y) \vee 0.1))
\end{align*}
where $\vec{\psi}_q$ is a neural net and $\vec{V}_1, \vec{V}_2$ are matrices to be learned, and we denote $\vec{a} \vee 0.1 = (\max(a_i, 0.1))_i$. Furthermore, we specify the likelihood distribution as follows:
\begin{align*}
    &p(W,Z|L) = \mathcal{N}(\vec{V}_3\vec{\psi}_{p(W,Z|L)}(L), \diag(\vec{V}_4\vec{\psi}_{p(W,Z|L) }(L) \vee 0.1))\\
    &p(A|L) = \mathcal{N}(\vec{V}_5\vec{\psi}_{p(A|L)}(L), \diag(\vec{V}_6\vec{\psi}_{p(A|L))}(L) \vee 0.1))\\
    &p(Y|A, L) = \mathcal{N}(\vec{\mu}_{p(Y|A,L)}(A,L), 0.5)
\end{align*}
Here, $\vec{\psi}_{p(W,Z|L)}, \vec{\psi}_{p(A|L)}, \vec{\mu}_{p(Y|A,L)}$ are neural networks. We provide the structure of neural nets in Table~\ref{cevae-demand-network}~and~\ref{cevae-dsprite-network}. Following the orignal work \cite{Louizos2017Causal}, we train all neural nets by Adamax \citep{Kingma2015Adam} with a learning rate of 0.01, which was annealed with an exponential decay schedule. We also performed early stopping according to the lower bound on a validation set. To predict structural function, we obtain $q(L)$ by marginalizing $q(L|A,Z,W,Y)$ by observed data $A,Z,W,Y$. We then output $\hat{f}_{\mathrm{struct}}(a) =  \expect[L\sim q(L)]{\expect[Y\sim p(Y|A=a,L)]{Y}}$.

\begingroup
\renewcommand{\arraystretch}{1.2}
\begin{table}[t]
    \caption{Network structures of DFPV for demand design experiments. For the input layer, we provide the input variable. For the fully-connected layers (FC), we provide the input and output dimensions.}
    
    \label{dfpv-demand-network}
    
    \begin{minipage}{0.48\hsize}
        \centering
        \begin{tabular}{cc}
            \multicolumn{2}{c}{\textbf{Stage 1 Treatment Feature} $\vec{\phi}_{\theta_{A(1)}}$ }
            \\ \hline
            Layer & Configuration \\ \hline
            1 & Input($P$) \\ \hline
            2 & FC(1, 32), ReLU \\ \hline
            3 & FC(32, 16), ReLU \\ \hline
            4 & FC(16, 8) \\ \hline
            \end{tabular}    
    \end{minipage}
    \hfill
    \begin{minipage}{0.48\hsize}
        \centering
        \begin{tabular}{cc}
            \multicolumn{2}{c}{\textbf{Treatment-inducing Proxy Feature} $\vec{\phi}_{\theta_{Z}}$ }
            \\ \hline
            Layer & Configuration \\ \hline
            1 & Input($C_1$, $C_2$) \\ \hline
            2 & FC(2, 32), ReLU \\ \hline
            3 & FC(32, 16), ReLU \\ \hline
            4 & FC(16, 8) \\ \hline
            \end{tabular}    
    \end{minipage}\\
    \vskip3mm
    \centering
    \begin{minipage}{0.48\hsize}
        \centering
        \begin{tabular}{cc}
            \multicolumn{2}{c}{\textbf{Stage 2 Treatment Feature} $\vec{\psi}_{\theta_{A(2)}}$ }
            \\ \hline
            Layer & Configuration \\ \hline
            1 & Input($P$) \\ \hline
            2 & FC(1, 32), ReLU \\ \hline
            3 & FC(32, 16), ReLU \\ \hline
            4 & FC(16, 8)\\ \hline
            \end{tabular}    
    \end{minipage}
    \hfill
    \begin{minipage}{0.48\hsize}
        \centering
        \begin{tabular}{cc}
            \multicolumn{2}{c}{\textbf{Outcome-inducing Proxy Feature} $\vec{\psi}_{\theta_{W}}$ }
            \\ \hline
            Layer & Configuration \\ \hline
            1 & Input($V$) \\ \hline
            2 & FC(1, 32), ReLU \\ \hline
            3 & FC(32, 16), ReLU \\ \hline
            4 & FC(16, 8) \\ \hline
            \end{tabular}    
    \end{minipage}
\end{table}
\endgroup  

\begingroup
\renewcommand{\arraystretch}{1.2}
\begin{table}[t]
    \caption{Network structures of DFPV for dSprite experiments. For the input layer, we provide the input variable. For the fully-connected layers (FC), we provide the input and output dimensions. SN denotes Spectral Normalization \citep{Miyato2018}. BN denotes Batch Normalization. }
    
    \label{dfpv-dSprite-network}
    
    \begin{minipage}{0.48\hsize}
        \centering
        \begin{tabular}{cc}
            \multicolumn{2}{c}{\textbf{Stage 1 Treatment Feature} $\vec{\phi}_{\theta_{A(1)}}$ }
            \\ \hline
            Layer & Configuration \\ \hline
            1 & Input($A$) \\ \hline
            2 & FC(4096, 1024), SN, ReLU \\ \hline
            3 & FC(1024, 512), SN, ReLU, BN \\ \hline
            4 & FC(512, 128), SN,  ReLU \\ \hline
            5 & FC(128, 32), SN, ReLU \\ \hline
            \end{tabular}    
    \end{minipage}
    \hfill
    \begin{minipage}{0.48\hsize}
        \centering
        \begin{tabular}{cc}
            \multicolumn{2}{c}{\textbf{Treatment-inducing Proxy Feature} $\vec{\phi}_{\theta_{Z}}$ }
            \\ \hline
            Layer & Configuration \\ \hline
            1 & Input($Z$) \\ \hline
            2 & FC(3, 128), ReLU \\ \hline
            3 & FC(128, 64), ReLU \\ \hline
            4 & FC(64, 32), ReLU \\ \hline
            \end{tabular}    
    \end{minipage}\\
    \vskip3mm
    \centering
    \begin{minipage}{0.48\hsize}
        \centering
        \begin{tabular}{cc}
            \multicolumn{2}{c}{\textbf{Stage 2 Treatment Feature} $\vec{\psi}_{\theta_{A(2)}}$ }
            \\ \hline
            Layer & Configuration \\ \hline
            1 & Input($A$) \\ \hline
            2 & FC(4096, 1024), SN, ReLU \\ \hline
            3 & FC(1024, 512), SN, ReLU, BN \\ \hline
            4 & FC(512, 128), SN,  ReLU \\ \hline
            5 & FC(128, 32), SN, ReLU \\ \hline
            \end{tabular}    
    \end{minipage}
    \hfill
    \begin{minipage}{0.48\hsize}
        \centering
        \begin{tabular}{cc}
            \multicolumn{2}{c}{\textbf{Outcome-inducing Proxy Feature} $\vec{\psi}_{\theta_{W}}$ }
            \\ \hline
            Layer & Configuration \\ \hline
            1 & Input($W$) \\ \hline
            2 & FC(4096, 1024), SN, ReLU \\ \hline
            3 & FC(1024, 512), SN, ReLU, BN \\ \hline
            4 & FC(512, 128), SN,  ReLU \\ \hline
            5 & FC(128, 32), SN, ReLU \\ \hline
            \end{tabular}    
    \end{minipage}
\end{table}
\endgroup

\begingroup
\renewcommand{\arraystretch}{1.2}
\begin{table}[t]
    \caption{Network structures of CEVAE for demand design experiment. For the input layer, we provide the input variable. For the fully-connected layers (FC), we provide the input and output dimensions. }
    
    \label{cevae-demand-network}
    \begin{minipage}{0.48\hsize}
        \centering
        \begin{tabular}{cc}
            \multicolumn{2}{c}{\textbf{Structure of} $\vec{\psi}_q$}
            \\ \hline
            Layer & Configuration \\ \hline
            1 & Input($P, Y, C_1, C_2, V$) \\ \hline
            2 & FC(5, 128), ReLU \\ \hline
            3 & FC(128, 64), ReLU \\ \hline
            4 & FC(64, 32),  ReLU \\ \hline
            \end{tabular}    
    \end{minipage}
    \hfill
    \begin{minipage}{0.48\hsize}
        \centering
        \begin{tabular}{cc}
            \multicolumn{2}{c}{\textbf{Structure of} $\vec{\psi}_{p(W,Z|L)}$}
            \\ \hline
            Layer & Configuration \\ \hline
            1 & Input($L$) \\ \hline
            2 & FC(20, 64), ReLU \\ \hline
            3 & FC(64, 32), ReLU \\ \hline
            4 & FC(32, 16),  ReLU \\ \hline
            \end{tabular}    
    \end{minipage}\\
    
    \vskip3mm
    \begin{minipage}{0.48\hsize}
        \centering
        \begin{tabular}{cc}
            \multicolumn{2}{c}{\textbf{Structure of} $\vec{\psi}_{p(A|L)}$}
            \\ \hline
            Layer & Configuration \\ \hline
            1 & Input($L$) \\ \hline
            2 & FC(20, 64), ReLU \\ \hline
            3 & FC(64, 32), ReLU \\ \hline
            4 & FC(32, 16),  ReLU \\ \hline
            \end{tabular}    
    \end{minipage}
    \hfill
    \begin{minipage}{0.48\hsize}
        \centering
        \begin{tabular}{cc}
            \multicolumn{2}{c}{\textbf{Structure of} $\vec{\mu}_{p(Y|A,L)}$}
            \\ \hline
            Layer & Configuration \\ \hline
            1 & Input($L, P$) \\ \hline
            2 & FC(21, 64), ReLU \\ \hline
            3 & FC(64, 32), ReLU \\ \hline
            4 & FC(32, 16),  ReLU \\ \hline
            5 & FC(16, 1) \\ \hline
            \end{tabular}    
    \end{minipage}
    \end{table}
\endgroup

\begingroup
\renewcommand{\arraystretch}{1.2}
\begin{table}[t]
    \caption{Network structures of CEVAE for dSprite experiment. For the input layer, we provide the input variable. For the fully-connected layers (FC), we provide the input and output dimensions.  SN denotes Spectral Normalization \citep{Miyato2018}. BN denotes Batch Normalization. }
    
    \label{cevae-dsprite-network}
    \begin{minipage}{0.48\hsize}
        \centering
        \begin{tabular}{cc}
            \multicolumn{2}{c}{\textbf{Structure of} $\vec{\psi}_q$}
            \\ \hline
            Layer & Configuration \\ \hline
            1 & Input($W,A,Z,Y$) \\ \hline
            2 & FC(8196, 1024), SN, ReLU  \\ \hline
            3 & FC(1024, 512), SN, ReLU, BN \\ \hline
            4 & FC(512, 128), SN,  ReLU \\ \hline
            5 & FC(128, 32), SN, ReLU \\ \hline
            \end{tabular}    
    \end{minipage}
    \hfill
    \begin{minipage}{0.48\hsize}
        \centering
        \begin{tabular}{cc}
            \multicolumn{2}{c}{\textbf{Structure of} $\vec{\psi}_{p(W,Z|L)}$}
            \\ \hline
            Layer & Configuration \\ \hline
            1 & Input($L$) \\ \hline
            2 & FC(20, 64), ReLU \\ \hline
            3 & FC(64, 128), ReLU \\ \hline
            4 & FC(128, 256),  ReLU \\ \hline
            \end{tabular}    
    \end{minipage}\\
    
    \vskip3mm
    \begin{minipage}{0.48\hsize}
        \centering
        \begin{tabular}{cc}
            \multicolumn{2}{c}{\textbf{Structure of} $\vec{\psi}_{p(A|L)}$}
            \\ \hline
            Layer & Configuration \\ \hline
            1 & Input($L$) \\ \hline
            2 & FC(20, 64), ReLU \\ \hline
            3 & FC(64, 128), ReLU \\ \hline
            4 & FC(128, 256),  ReLU \\ \hline
            \end{tabular}    
    \end{minipage}
    \hfill
    \begin{minipage}{0.48\hsize}
        \centering
        \begin{tabular}{cc}
            \multicolumn{2}{c}{\textbf{Structure of} $\vec{\mu}_{p(Y|A,L)}$}
            \\ \hline
            Layer & Configuration \\ \hline
            1 & Input($L, A$) \\ \hline
            2 & FC(4116, 1024), SN, ReLU, BN \\ \hline
            3 & FC(1024, 512), SN, ReLU, BN \\ \hline
            4 & FC(512, 128), SN,  ReLU \\ \hline
            5 & FC(128, 32), SN, ReLU \\ \hline
            6 & FC(32, 1) \\ \hline
            \end{tabular}    
    \end{minipage}
    \end{table}
\endgroup

\section{Old Experimental Results}
\label{sec:old-experimental-res}
In our Neurips submission\footnote{\url{https://proceedings.neurips.cc/paper/2021/file/dcf3219715a7c9cd9286f19db46f2384-Paper.pdf}}, we apply another ReLU activation to each features. However, this may suffer from numerical instability they can all be zero during in the training, as shown in Figure~\ref{fig:demand-exp}.

\begin{figure}
    \centering
    \includegraphics[width=0.8\textwidth]{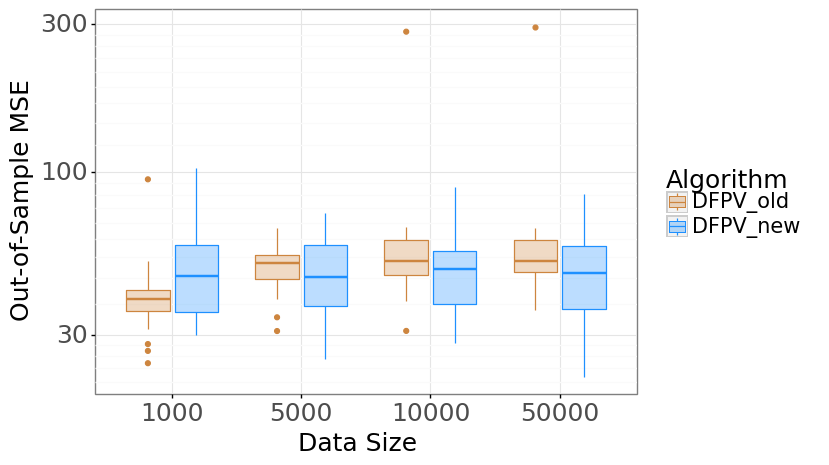}
    \caption{Performance comparison between DFPV\_old (w. ReLU activation) and DFPV\_new (w.o. ReLU activation) in Demand Experiment Results}
    \label{fig:demand-exp}
\end{figure}

\end{document}